\newcommand{\ve}[1]{\mathbf{\bm{#1}}}  % vector
\newcommand{\m}[1]{\mathbf{\bm{#1}}}  % matrix
\newcommand{\R}{\mathbb R}
\newcommand{\T}{\mathsf{T}}
\newcommand{\x}{{\ve x}}
\newcommand{\s}{{\ve s}}
\newcommand{\y}{{\ve y}}
\newcommand{\z}{{\ve z}}
\newcommand{\h}{{\ve h}}
\newcommand{\f}{{\ve f}}
\newcommand{\g}{{\ve g}}
\newcommand{\gbt}{{\tilde{\ve g}}}
\newcommand{\phit}{{\tilde{\phi}}}
\newcommand{\lambdat}{{\tilde{\lambda}}}
\newcommand{\alphab}{{\ve \alpha}}
\newcommand{\betab}{{\ve \beta}}
\newcommand{\etab}{{\ve \eta}}
\newcommand{\V}{{\mathcal{V}}}
\newcommand{\M}{{\mathcal{M}}}
\newcommand{\X}{{\mathcal{X}}}
\newcommand{\Vs}{{\V_\mathcal{S}}}
\newcommand{\Vsm}{{\V_\mathcal{S}^m}}
\newcommand{\Vsmp}{{\V_\mathcal{S}^{m'}}}
\newcommand{\Vsmk}{{\V_\mathcal{S}^{m_k}}}
\newcommand{\Vsmast}{{\V_\mathcal{S}^{m_\ast}}}
\newcommand{\Vb}{{\V_{\text{b}}}}
\newcommand{\Vp}{{\V_{\text{p}}}}
\newcommand{\Vc}{{\V_{\text{c}}}}
\newcommand{\Vxm}{{\V_\mathcal{X}^m}}
\newcommand{\dsm}{{d_\mathcal{S}^m}}
\newcommand{\dsmp}{{d_\mathcal{S}^{m'}}}
\newcommand{\dsmk}{{d_\mathcal{S}^{m_k}}}
\newcommand{\dxm}{{d_\mathcal{X}^m}}
\newtheorem{Theorem}{Theorem}
\newtheorem{Proposition}{Proposition}
\newtheorem{Lemma}{Lemma}
\newtheorem{Definition}{Definition}
\theoremstyle{plain}
\theoremstyle{definition}
\theoremstyle{remark}
\icmltitlerunning{Causal Representation Learning Made Identifiable by Grouping of Observational Variables}
\begin{document}

\twocolumn[
\icmltitle{Causal Representation Learning Made Identifiable \\by Grouping of Observational Variables}

% It is OKAY to include author information, even for blind
% submissions: the style file will automatically remove it for you
% unless you've provided the [accepted] option to the icml2024
% package.

% List of affiliations: The first argument should be a (short)
% identifier you will use later to specify author affiliations
% Academic affiliations should list Department, University, City, Region, Country
% Industry affiliations should list Company, City, Region, Country

% You can specify symbols, otherwise they are numbered in order.
% Ideally, you should not use this facility. Affiliations will be numbered
% in order of appearance and this is the preferred way.
\icmlsetsymbol{equal}{*}

\begin{icmlauthorlist}
\icmlauthor{Hiroshi~Morioka}{riken}
\icmlauthor{Aapo~Hyv\"{a}rinen}{helsinki}
%\icmlauthor{Firstname3 Lastname3}{comp}
%\icmlauthor{Firstname4 Lastname4}{sch}
%\icmlauthor{Firstname5 Lastname5}{yyy}
%\icmlauthor{Firstname6 Lastname6}{sch,yyy,comp}
%\icmlauthor{Firstname7 Lastname7}{comp}
%%\icmlauthor{}{sch}
%\icmlauthor{Firstname8 Lastname8}{sch}
%\icmlauthor{Firstname8 Lastname8}{yyy,comp}
%\icmlauthor{}{sch}
%\icmlauthor{}{sch}
\end{icmlauthorlist}

\icmlaffiliation{riken}{RIKEN Center for Advanced Intelligence Project, Kyoto, Japan}
\icmlaffiliation{helsinki}{Department of Computer Science, University of Helsinki, Helsinki, Finland}
%\icmlaffiliation{yyy}{Department of XXX, University of YYY, Location, Country}
%\icmlaffiliation{comp}{Company Name, Location, Country}
%\icmlaffiliation{sch}{School of ZZZ, Institute of WWW, Location, Country}

\icmlcorrespondingauthor{Hiroshi Morioka}{hiroshi.morioka@riken.jp}
%\icmlcorrespondingauthor{Firstname1 Lastname1}{first1.last1@xxx.edu}
%\icmlcorrespondingauthor{Firstname2 Lastname2}{first2.last2@www.uk}

% You may provide any keywords that you
% find helpful for describing your paper; these are used to populate
% the "keywords" metadata in the PDF but will not be shown in the document
\icmlkeywords{Machine Learning, ICML}

\vskip 0.3in
]

% this must go after the closing bracket ] following \twocolumn[ ...

% This command actually creates the footnote in the first column
% listing the affiliations and the copyright notice.
% The command takes one argument, which is text to display at the start of the footnote.
% The \icmlEqualContribution command is standard text for equal contribution.
% Remove it (just {}) if you do not need this facility.

\printAffiliationsAndNotice{}  % leave blank if no need to mention equal contribution
%\printAffiliationsAndNotice{\icmlEqualContribution} % otherwise use the standard text.

\begin{abstract}
A topic of great current interest is Causal Representation Learning (CRL), whose goal is to learn a causal model for hidden features in a data-driven manner. Unfortunately, CRL is severely ill-posed since it is a combination of the two notoriously ill-posed problems of representation learning and causal discovery. Yet, finding practical identifiability conditions that guarantee a unique solution is crucial for its practical applicability. Most approaches so far have been based on assumptions on the latent causal mechanisms, such as temporal causality, or existence of supervision or interventions; these can be too restrictive in actual applications. Here, we show identifiability based on novel, weak constraints, which requires no temporal structure, intervention, nor weak supervision. The approach is based on assuming the observational mixing exhibits a suitable grouping of the observational variables. We also propose a novel self-supervised estimation framework consistent with the model, prove its statistical consistency, and experimentally show its superior CRL performances compared to the state-of-the-art baselines. We further demonstrate its robustness against latent confounders and causal cycles.
\end{abstract}

%--------------------------------------------------------------------------%
%--------------------------------------------------------------------------%
% Introduction
%--------------------------------------------------------------------------%
%--------------------------------------------------------------------------%
\section{Introduction}

Causal discovery aims to learn causal interactions among observed variables in a data-driven manner \citep{Pearl2000}. The goal is to estimate a causal graph, also called an adjacency matrix, from passively observed data, with minimal assumptions.
It plays an important role in a wide variety of fields, enabling fundamental insight into causal mechanisms latent in the data; importantly, this is possible without conducting expensive and time-consuming interventional experiments. However, the problem is ill-posed in general, 
and thus the main focus of causal discovery research is to find conditions where the causal graph can be uniquely determined \citep{Andersson1997, Spirtes2001}.
A large number of studies have been conducted so far; they have basically found that imposing some asymmetricity into the model, such as nonlinearity or non-Gaussianity, enables its unique identification \citep{Hoyer2008, Peters2014, Shimizu2006, Shimizu2011, Zhang2009}.

A crucial and implicit assumption of most causal discovery research  is that we know exactly {\it what} constitutes the {\it causal variables}; in most cases, we implicitly assume that each observational variable corresponds to a single causal variable, i.e.\ a node in the causal graph.
However, this is not necessarily true, for example when what is actually observed is raw, high-dimensional sensory data. Consider natural images: we do not really know in advance what kinds of objects are present, while the causal interactions should probably be modeled on the level of the objects.
Therefore, in order to understand what kind of causal mechanism is generating such low-level sensory data, 
we also need to extract the ``high-level'' causal variables constructing the causal graph by performing {\it representation learning} \citep{Bengio2013} simultaneously.

Nonlinear representation learning has its own problems of identifiability. Recent work has solved the identifiability problem in the context of Nonlinear Independent Component Analysis (NICA) by assuming temporal structure or an additional (conditioning, possibly unobservable) auxiliary variable \citep{Hyvarinen2016, Halva2021, Sprekeler2014}. However, if the components are mutually independent, it seems impossible to model causal connections between them, and thus such theory is not directly applicable to this case. Thus, we need to go beyond independent components \citep{Zhang10UAI,Khemakhem2020beem} and build an explicit model of the dependencies resulting from their causal interactions.

Such simultaneous learning of the causal variables (representation learning) and their causal graph (causal discovery) has been a focus of intense attention recently, resulting is what is called as Causal Representation Learning (CRL) \citep{Schlkopf2021}.
Since both of the two separate problems combined here are known to be ill-posed, CRL seems to be 
even more severely ill-posed, and much less is known on identifiability of CRL.
Yet, finding identifiability conditions is crucial for its interpretability, applicability, and reproducibility.
So far very limited frameworks were proposed, and many of them are based on heavy assumptions on the causal mechanisms,
such as supervision or intervention on latent variables or causal graphs \citep{Brehmer2022, Shen2022, Yang2021}, or temporal causality (dynamics) \citep{Lachapelle2022, Lippe2022}.
%\citep{Lachapelle2022, Lippe2022, Yao2022a, Yao2022b}.

Here, we propose a new model for CRL based on a novel approach assuming that the observed variables follow a certain \textit{grouping} structure known a priori, as illustrated in Fig.~\ref{fig:model}c. Such grouping is common and naturally appears in many practical situations. For example, the variables could be grouped based on which measurement sensor they come from in multimodal data; or which time point in causal dynamics, or geographical location in sensor networks they are measured at.  We further assume that the causal interactions are \textit{pairwise} as in some Markov network models.
Under these  assumptions, we prove identifiability with much weaker, and very different, constraints than previous work. The model in particular is able to consider instantaneous causal relations rather than temporal (Granger) causality, while autoregressive (AR) dynamics are further contained as a special case. Nor does our model  assume any supervision or interventions. 
Our experiments on synthetic data as well as a realistic high-dimensional image and gene regulatory network datasets show that our framework can indeed extract latent causal variables and their causal structure, with better performance than the state-of-the-art baselines.

%--------------------------------------------------------------------------%
%--------------------------------------------------------------------------%
% Related works
%--------------------------------------------------------------------------%
%--------------------------------------------------------------------------%
\section{Related Works}
\label{sec:related}

The general form of the CRL problem can be defined as an
estimation of a set of causally-related latent variables $\s = [s_1, \ldots, s_{D_\mathcal{S}}]^\T \in \R^{D_\mathcal{S}}$, or causal variables for short, together with their causal structure.
We typically assume that the observed data $\x \in \R^{D_\X}$ with $D_\X \geq D_\mathcal{S}$ are obtained via an unknown observational mixing $\f: \R^{D_\mathcal{S}} \rightarrow \R^{D_\X}$ as
\begin{align}
	\x = \f(\s), \label{eq:fg}
\end{align}
where the latent causal variables $s_1, \ldots, s_{D_\mathcal{S}}$ are \textit{not} mutually independent but follow a causal model to be specified. 
Typically, the causal model would be a Structural Equation Model (SEM) \citep{Shen2022, Yang2021} which has a well-defined causal semantics,
or something simpler such as a Bayesian network (BN) as a kind of proxy. 
%. However, as a kind of proxy one might use something simpler such as a Bayesian network (BN).
%It is usually assumed that $\f$ is injective, so there may be more observed variables than the latent variables; $D_\X \geq D_\mathcal{S}$.

In this work we only consider the case of independent and identically distributed (i.i.d.)~sampling, which means the different observations of $\x$ are independent of each other and there is no time structure. This obviously implies the causal relations must also be instantaneous. Note that estimating instantaneous causality is more challenging compared to temporal causality \citep{Lachapelle2022, Li2020, Lippe2022, Lippe2023, Yao2022b, Yao2022a}, 
since we do not have prior knowledge about the causal direction or causal ordering between variables (from past to future; Fig.~\ref{fig:model}b) in instantaneous causality.

%Such a causal model is more generally applicable, and in strong contrast to a lot of previous work which strongly relies on temporal structure \citep{Lachapelle2022, Li2020, Lippe2022, Lippe2023, Yao2022a, Yao2022b} (Fig.~\ref{fig:model}b).

CRL can be seen as a generalization of NICA and causal discovery, both of which are known to be ill-posed without any specific assumptions.
NICA can actually be seen as a special case of CRL, where the latent variables follow the degenerate causal graph in the sense of  not having any causal relations.
%Recent studies have shown that NICA can be given identifiability by assuming temporal structure \citep{Hyvarinen2016, Hyvarinen2017, Hyvarinen2019, Klindt2021, Sprekeler2014}, instead of i.i.d.\ sampling.
Recent studies have shown that NICA can be given identifiability by assuming temporal structure \citep{Hyvarinen2019, Halva2021, Klindt2021, Sprekeler2014} instead of i.i.d.\ sampling.
%or some class of observational function  \citep{Gresele2021, Horan2021, Kivva2022, Yang2022}.
On the other hand, causal discovery is also a special case of CRL, where the causal variables are observed directly. Many studies have shown that some kind of  asymmetricity of the statistical causal model enables the identifiability \citep{Hoyer2008, Park2015, Peters2014, Shimizu2006, Shimizu2011, Zhang2009}.

The CRL model thus violates the important assumptions of the both problems (mutual independence, non-i.i.d., and direct observability). 
The research goal of CRL is thus to find the practical conditions for the identifiability of the model. 
Some studies have shown the identifiability in the instantaneous causality case, 
but they require supervision or intervention on the causal variables \citep{Ahuja2022a, Ahuja2022b, Brehmer2022, Shen2022, Yang2021} (Fig.~\ref{fig:model}a),
or access to some latent information such as mixture oracles \citep{Kivva2021}, which might be too restrictive in actual applications.
Recently some CRL studies proposed to use a grouping of variables instead of supervisions \citep{Daunhawer2023, Lyu2022, Morioka2023, Sturma2023, Yao2023}, similarly to this study.
Most of them, except for \citet{Morioka2023}, especially focused on the intersections between groups.
%\citet{Sturma2023} and \citet{Yao2023} especially focused on the intersections between groups.
%proposed a concept of grouping of variables for CRL, similarly to this study \citep{Morioka2023, Sturma2023}, without using supervisions.
%\citet{Morioka2023} considered sensor-network-type architectures with some causal assumptions for giving identifiability.
%Although their causal model is similar to ours, our model greatly generalizes that of \citep{Morioka2023}.
\citet{Sturma2023} showed identifiability of the intersection of the latent variables across all groups,
%causal structure between variables shared across all groups,
based on linearity of the causal and observational models.
%though limited to linear causal model and linear observational mixing.
\citet{Daunhawer2023, Lyu2022, Yao2023} considered more general causal and observational models,
though the identifiability is limited to up to intersection-wise transformations.
%
%Recently \citep{Sturma2023} proposed a concept of grouping of variables for CRL, similarly to this study,
%while it is limited to linear causal model and linear observational mixing, and can identify the causal structure only between variables shared across all groups. 
%
\citet{Morioka2023} and our study instead focus on the group-specific causal variables not shared across groups.
\citet{Morioka2023} assumed a component-wise dependency as in NICA, and can be seen as a very special case of this study (see Section~\ref{sec:discussion}).
%, where groups have identical observational mixing, and the causal relations are component-wise, similarly to NICA.
A more detailed discussion about the related works are given in Supplementary Material~\ref{sec:related}.
%See Supplementary Material~\ref{sec:related} for more detailed discussions.

%--------------------------------------------------------------------------%
%--------------------------------------------------------------------------%
% Model definition
%--------------------------------------------------------------------------%
%--------------------------------------------------------------------------%
\section{Model Definition}
\label{sec:model}

Our basic idea is to impose some constraints on the observational model $\f$, based on {\it grouping of the observed variables}, together with some Markov-like (pairwise) constraints on the causal interactions between the groups (Fig.~\ref{fig:model}c). Next we first define our observational model and then the causal model.

\begin{figure}[t]
 \centering
 \includegraphics[width=\linewidth]{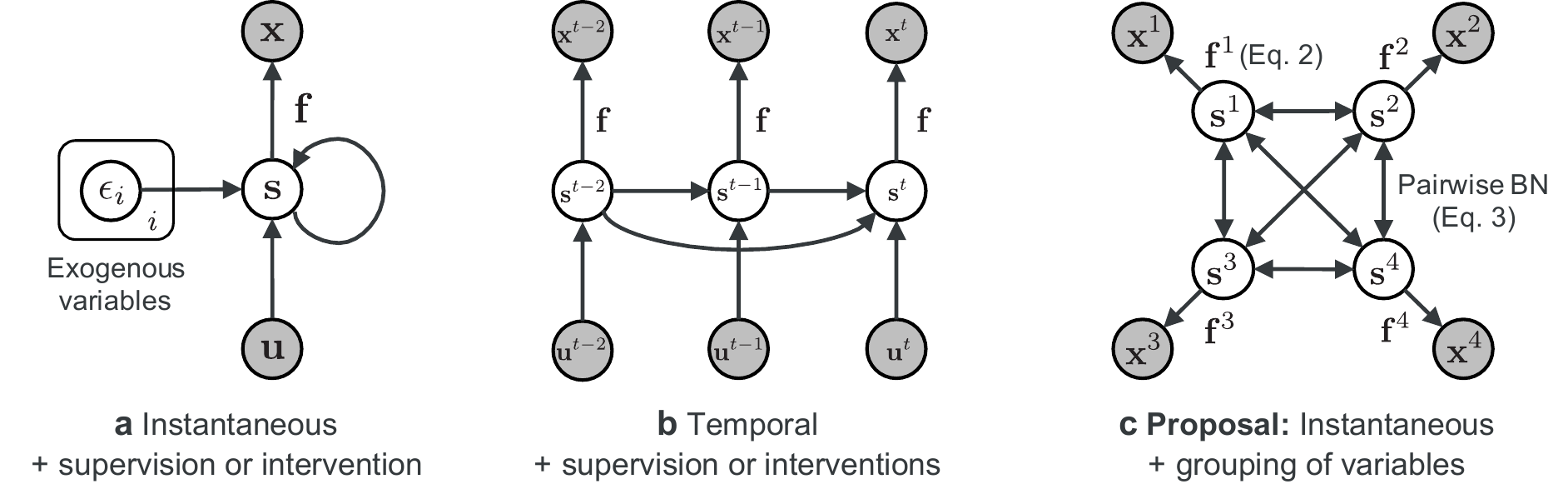}
 \caption{Comparison of the graphical models of major CRL frameworks whose goal is to estimate latent causal variables $\s$ from the low-level observations $\x$, usually with supervision or intervention $\mathbf u$.
 Our proposal in (c) is based on the grouping of variables (Eq.~\ref{eq:f}; $M=4$ groups here) and the causal model based on a pairwise BN (Eq.~\ref{eq:ps}), and does not require any supervision or intervention, greatly generalizing the existing models.
   }
 \label{fig:model}
\end{figure}

\paragraph{Observation Model}
As the original approach in our model, we assume that the observational mixing can be separated into $M > 1$ non-overlapping groups,
as found in many practical cases. After appropriate permutations of the elements of $\s$ and $\x$ without loss of generality, we assume that the observation model Eq.~\ref{eq:fg} can be expressed as
\begin{align}
	\x &= \begin{bmatrix} \x^1, & \ldots, &  \x^M \end{bmatrix} = \f(\begin{bmatrix} \s^1, & \ldots, &  \s^M \end{bmatrix}) \nonumber \\
	&= \begin{bmatrix} \f^1(\s^1), & \ldots, & \f^M(\s^M) \end{bmatrix}, \label{eq:f}
\end{align}
where $\x^m = [x_1^m, \ldots, x_{\dxm}^m]^\T \in \R^{\dxm}$ and $\s^m = [s_1^m, \ldots, s_{\dsm}^m]^\T \in \R^{\dsm}$ are the $m$-th group of the observational and latent variables respectively. 
Each element of the latent and the observational variables belongs to only one of the groups with index in $\M = \{1, \ldots, M\}$,
which means that the $m$-th observational group $\x^m$
is generated only as a function of $\s^m$, without any observational contaminations from the other groups: i.e., $\x^m = \f^m(\s^m)$, $D_\mathcal{S} = \sum_{m=1}^M \dsm$, and $D_\mathcal{X} = \sum_{m=1}^M \dxm$.
%in other words, $\x^m = \f^m(\s^m)$.  
The number of variables in a group can be different across groups.
We usually denote the group index as a superscript, which should not be confused with an exponent;  the element index as denoted by a subscript. 
Note that when $M = 1$ this model corresponds to the general CRL (Eq.~\ref{eq:fg}), and when $M = D_\mathcal{S}$ this model simply leads to the ordinary causal discovery problem without observational mixing. 

Such grouping structure is not anything unrealistic, and can be seen in many practical situation, as described in the following illustrative examples.

\paragraph{Illustrative Example~1: Causally Related Sensor Measurements} 
The most intuitive example would be where $m$ is a sensor index. 
Data  is then obtained from a set of $M$ sensors, each measuring different but causally-related multidimensional physical quantities $\x^{m(n)} \in \R^{d_\X^m}$ for each sample $n$. For example, in single-cell multiomics data \citep{Burkhardt2022}, each cell ($n$) could be measured to give chromatin accessibility (DNA) as $\x^{1(n)}$, 
gene expressions (RNA) as $\x^{2(n)}$, and protein levels as $\x^{3(n)}$. These are all multi-dimensional quantities representing causally-interacting latent high-level features $\{\s^{m}\}_m$.
There exist many other possible applications consistent with this observational model; e.g., multimodal biomedical data \citep{Acosta2022}, simultaneous measurements of brain and behavior \citep{Hebart2023}, climate monitoring sensor networks \citep{Longman2018}, and so on, where $m$ corresponds to sensor modalities or locations (see Section~\ref{sec:discussion} for more details).

\paragraph{Illustrative Example~2: Causal Dynamics} 
Although we focus on independent data samples rather than dynamics,
our model can also implement dynamics with dependency across {\it time} by simply defining the group-index $m$ as time-index $t$ (see Figs.~\ref{fig:model}b and c).
We then obtain low-level observations $\x^t$ (such as images) from high-level latent causal process composed of multidimensional variable $\s^t$ through time-dependent mixing model $\x^t=\f^t(\s^t)$ for each time point $t$.
In this case, our model gives a generic form of a time series model,
which is actually more general compared to some existing studies of CRL based on dynamics \citep{Lachapelle2022, Lippe2022, Yao2022b, Yao2022a}
 in the sense that the mixing function $\f^t$ changes as a function of time $t$, which can happen in many practical situations (such as changes of the camera angle capturing the images).
In this case,  we assume we observe the same time-series many times, i.e. we have $\x^{t(n)}$ where $n$ is the index of the time series realization (e.g., capturing images with multiple sequences $n$ with the same transition of camera angles across $t$ every time).

\paragraph{Causally Structured Latent Variable}

Next we model the causal structure of the latent variables based on a BN, which is in particular {\it pairwise}, 
in the spirit of 
%first-order Markov process or 
pairwise Markov random fields.
Denote by $\phi(\cdot, \cdot)$ a potential function representing causal relations between two variables, which is the same for all variable pairs.
Further, denote by $\bar{\phi}^m$ group-wise potential functions representing causal relations \textit{inside} a group, i.e.\ between the elements of $\s^m$, which are not restricted in any way.
We assume that the joint distribution is factorized as 
\begin{align}
	p(\s) \propto &\left[\prod_{m \in \M} \exp \left(\bar{\phi}^m(\s^m) \right) \right] \label{eq:ps} \\
	&\times \prod_{m \neq m'} \prod_{(a, b) \in \Vsm \times \Vsmp} \exp \left( \lambda_{ab}^{mm'} \phi(s_a^m, s_b^{m'}) \right),  \nonumber
\end{align}
where we denote the set of indices of the latent variables belonging to the $m$-th group by $\Vsm$ ($|\Vsm| = \dsm$).
The idea is to have a model of dependencies between variables which is so general that the estimation of the representation is not biased towards independent components.
The variables in one group $\s^m$ can causally affect all variables on the other groups $m' \neq m$, and also in the same group $m$, in a complex manner, thus breaking any independence of variables.
%\textcolor{blue}{The functional form of the causal effects between variables is represented by $\phi$, which does not depend on variable-pairs.}
The coefficient $\lambda_{ab}^{mm'} \in \R$ modulates the strength of the causal relation $s_a^m \rightarrow s_b^{m'}$,
which is constantly zero if $s_a^m$ is not a direct causal parent of $s_b^{m'}$.
The sets of the coefficients $\m L = \{\m L^{mm'}\}_{(m, m')}, \m L^{mm'} = (\lambda_{ab}^{mm'})_{a \in \Vsm, b \in \Vsmp}$ can be interpreted as (inter-group) weighted adjacency matrices.
%Importantly, potential functions composing of more than two variables do not exist across groups.

%This model is given as a general case,
%it can represent exponential family BNs

This model is rather general, and includes a form of exponential family BNs (see Supplementary Material~\ref{sec:ps_exp}).
This allows for assigning causal semantics to the model by considering its equivalence to some SEMs,
%including some SEMs 
such as causal additive models (CAMs; \citet{Buhlmann2014}),
%as a special case,
which are more general than linear SEMs on which some CRL frameworks are based \citep{Shen2022, Sturma2023, Yang2021}.
%This indicates that our causal model is more general than those of some other CRL frameworks based on linear SEMs \citep{Shen2022, Sturma2023, Yang2021}.
Our model is not even restricted to Gaussian BNs as in CAMs.
%Although this model does not include high-order potential functions composing of more than two variables across groups,
%this is rather general, and can represent many practical SEMs such as causal additive models (CAMs; \citet{Buhlmann2014}) as a special case (see Supplementary Material~\ref{sec:ps_exp}),
%which is more general than linear SEMs \citep{Shen2022, Sturma2023, Yang2021}.
%
Although \citet{Morioka2023} used a similar causal model,
their model can be seen as a very special case of ours (see Section~\ref{sec:discussion});
especially, Eq.~\ref{eq:ps} does not require any mutual independence across variables.
%As a special case, we can further consider a factorization of $\phi$,
%which represents exponential-family (additive) causal models
%(Supplementary Material~\ref{sec:ps_exp}), which are more interpretable.

Note that Eq.~\ref{eq:ps} just represents the factorization of the joint distribution and does not incorporate any causal directional assumptions between variables.
We thus need some additional assumptions for the identifiability of this factorization model as a {\it causal} model as shown in Theorems below, similarly to causal discovery based on BN.

\paragraph{Illustrative Example~1: Causally Related Sensor Measurements} 

In the single-cell multiomics example, the model says there are causal relations between (and within) the groups, which can be consistent with what is known as the {\it central dogma} in molecular biology; DNA ($\x^1$) $\rightarrow$ RNA ($\x^2$) $\rightarrow$ Protein ($\x^3$). Our model considers that they are interacting on some high-level latent space $\{\s^{m}\}_m$, such as something related to transcription factors.
%, probabilistically based on a type of BN.

\paragraph{Illustrative Example~2: Causal Dynamics} 

In the temporal dynamics example above, it is natural to have causal relations across the time-index $t$ (which is the same as the group-index $m$).
Our model extends the previous models in the sense that
any pairs of latent variables can be causally related across time (no sparseness is required unlike \citet{Lachapelle2022}).
It is also worth mentioning that temporal causality from the past to the present is a special case in our model 
since our model (Eq.~\ref{eq:ps}) does not restrict the causal directions between the groups.
%e.g., due to measurement delays in practice, causal relations might unexpectedly be from the future to the present.

\section{Identifiability of Representation Learning}
\label{sec:id}

Based on the grouping assumption of the observational model (Eq.~\ref{eq:f}), together with the latent variable model given above (Eq.~\ref{eq:ps}),
%Based on the grouping assumption of the observational model (Eq.~\ref{eq:f}), together with the assumptions on pairwise inter-group dependencies of the variables (Eq.~\ref{eq:ps}), 
we can prove new identifiability results of the CRL model. In this section, we first consider identifiability of the latent variables.
We assume that each mixing function $\f^m$ is invertible and a $C^2$ diffeomorphism
(thus $\dsm = \dxm$; we later discuss the case $\dsm < \dxm$).
 Apart from that, we do not assume any parametric form for each $\f^m$.
 We consider the situation where the support of the distribution of each variable is connected (i.e.~an interval),
and without loss of generality, the same across all variables, denoted as $\bar{\mathcal{S}}$.
We denote $\phi^{112}(x, y) = \frac{\partial^3}{\partial x^2 \partial y} \phi(x, y)$, $\phi^{122}(x, y) = \frac{\partial^3}{\partial x \partial y^2} \phi(x, y)$,
and $\phi^{12}(x, y) = \frac{\partial^2}{\partial x \partial y} \phi(x, y)$.
Those functions are said to be {\it uniformly dependent} (Definition~\ref{def:dependency} in Supplementary Material~\ref{sec:ps_exp}) if the set of zeros of the function does not contain any open subset in the support of the input distribution.
(\textbf{Neighbor}) For a variable $s_a^m$, we call $s_b^{m'}$ in some {\it other} group a {\it neighbor} if either or both of the adjacency coefficients $\lambda_{ab}^{mm'}$ and $\lambda_{ba}^{m'm}$ are non-zero.
The identifiability condition is then given in the following Theorem, proven in Supplementary Material~\ref{sec:id_proof};
\begin{Theorem} \label{thm:id}
Assume the generative model given by Eqs.~\ref{eq:f} and \ref{eq:ps}, and also the following:
\begin{enumerate}[label=A\arabic*] \vspace*{-2mm}
\item (Nondegeneracy of the graph) \label{AL} 
For any group $m$ in $\M$ (or in a subset of $\M$, that we call ``the groups of interest"),
%For every group $m$ of interest, 
each variable has a (at least one) neighbor in some other group,
and the collection of inter-group adjacency matrices $\bar{\m L}^m$ given below has full row-rank after removing all-zero rows:% \label{AL}
\begin{align}
	\bar{\m L}^m =\begin{bmatrix} \m L^{m1}, & \ldots, & \m L^{mM} \\ (\m L^{1m})^\T, & \ldots, & (\m L^{Mm})^\T \end{bmatrix}, \label{eq:Lg}
\end{align} 
\item (Causal function) $\phi^{12}$, $\phi^{112}$, and $\phi^{122}$ have uniform dependency,
and for any open subset $B$ of $\bar{\mathcal{S}}$, there exist some $z_1 \neq z_2 \in \bar{\mathcal{S}}$
such that any of the following conditions does not hold for $\phi^{12}$:
$\phi^{12}(s, z_1) = c_1 \phi^{12}(s, z_2)$, $\phi^{12}(z_1, s) = c_2 \phi^{12}(z_2, s)$, and $\phi^{12}(s, z_1) = c_3 \phi^{12}(z_1, s)$ for all $s \in B$
with some constants $c_1, c_2, c_3 \in \R$. \label{Aphi}
\end{enumerate}\vspace*{-2mm}
Then, for all groups $m$ in $\M$ (or in the groups of interests), $\s^m$ can be recovered up to permutation and variable-wise invertible transformations from the distribution of $\x$.
\end{Theorem}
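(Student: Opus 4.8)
The plan is to work with the observed density $p(\x)$ through the log-density, push it back to the latent side via each group-wise mixing, and extract constraints on the putative inverse maps by differentiating across groups. Concretely, suppose we have two representations: the true one, $\s = (\f^1{}^{-1}(\x^1),\dots,\f^M{}^{-1}(\x^M))$, and an alternative $\hat\s = (\hat\f^1{}^{-1}(\x^1),\dots,\hat\f^M{}^{-1}(\x^M))$ produced by a different valid decomposition of the same $p(\x)$ satisfying Eqs.~\ref{eq:f} and \ref{eq:ps}. Write the group-wise reparametrization $\ve h^m = \hat\f^m{}^{-1}\circ\f^m$, a $C^2$ diffeomorphism on $\bar{\mathcal S}^{\dsm}$; the goal is to show each $\ve h^m$ is a composition of a permutation and variable-wise invertible transformations. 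The first step is the standard change-of-variables bookkeeping: equating $\log p(\x)$ computed from both factorizations, the log-Jacobian terms are group-separable (each depends only on $\x^m$), so after taking a mixed second partial derivative $\partial^2/\partial x_i^m\,\partial x_j^{m'}$ with $m\neq m'$, all the within-group potentials $\bar\phi^m$ and all Jacobian terms drop out, leaving only the cross-group sum $\sum \lambda_{ab}^{mm'}\phi(s_a^m,s_b^{m'})$ and its hatted counterpart.

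The second step is to convert these equalities into pointwise algebraic relations among the derivatives of $\phi$ composed with the coordinates of $\ve h^m$. Differentiating the cross-group constraint in the $s^m$-variables and using that the coordinates of $\hat\s^m$ are functions of $\s^m$ only, one gets, for each fixed pair of groups $m,m'$ and each pair of latent indices, an identity of the form $\sum_{k}(\text{Jacobian entries of }\ve h^m)(\text{Jacobian entries of }\ve h^{m'})\,\phi^{12}(\hat s_\cdot^m,\hat s_\cdot^{m'}) = \sum \lambda_{ab}^{mm'}\phi^{12}(s_a^m,s_b^{m'})$, and similarly for $\phi^{112}$ and $\phi^{122}$ after one more differentiation. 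Here is where assumption \ref{Aphi} does the work: the ``non-degeneracy up to the three scaling relations'' clause is exactly what rules out the pathological cases in which a nontrivial linear combination of $\phi^{12}(\cdot,z_1)$ and $\phi^{12}(\cdot,z_2)$ could be absorbed into a rescaling or a swap of roles, so that the only way the two sides can agree on an open set is if the Jacobian of $\ve h^m$ has, in each row, exactly one nonzero entry — i.e.\ $\ve h^m$ is locally a permutation composed with variable-wise maps. The uniform-dependency hypotheses on $\phi^{12},\phi^{112},\phi^{122}$ guarantee that these identities, once they hold on some open set, cannot be vacuous, so the conclusion propagates.

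The third step is the graph-theoretic amplification via assumption \ref{AL}. The argument above, applied to a single pair $(m,m')$, only constrains the coordinates of $\ve h^m$ that have a neighbor in $m'$; to pin down \emph{every} coordinate of every group of interest, one collects the constraints coming from all $m'\neq m$ simultaneously. Stacking them produces a linear system whose coefficient matrix is exactly $\bar{\m L}^m$ of Eq.~\ref{eq:Lg} (inbound and outbound edges together), and the full-row-rank-after-removing-zero-rows condition, together with ``each variable has at least one neighbor,'' ensures this system has enough equations to force the single-nonzero-entry-per-row structure on the \emph{whole} Jacobian of $\ve h^m$. A connectedness/continuity argument then upgrades the ``locally a permutation'' statement to a global one on the connected support $\bar{\mathcal S}$: the permutation pattern is locally constant, hence constant, so $\ve h^m$ is globally a fixed permutation composed with variable-wise invertible (indeed $C^2$) transformations.

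I expect the main obstacle to be the second step: carefully managing the bookkeeping of which derivatives of $\phi$ appear, and then showing rigorously that assumption \ref{Aphi} is precisely strong enough to defeat \emph{all} the ways two distinct valid representations could produce the same cross-group second derivatives. The three excluded scaling relations in \ref{Aphi} are clearly tailored to three specific failure modes (rescaling within the first argument, within the second, and interchanging the two arguments), and the crux is verifying that no other linear-combination pathology survives — this is where one has to be most careful, picking the points $z_1\neq z_2$ supplied by the assumption and arguing that the resulting finite-dimensional linear constraints have no nontrivial solution except the desired permutation-plus-scaling family. The uniform-dependency conditions are the safety net that keeps every identity ``alive'' on open sets so that these pointwise arguments are legitimate.
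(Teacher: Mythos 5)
Your overall skeleton matches the paper's proof: equate the two log-densities, take cross-group derivatives so the Jacobian terms and within-group potentials drop out, use Assumption~\ref{Aphi} to defeat the three scaling degeneracies, aggregate over groups via the rank condition on $\bar{\m L}^m$, and finish with a continuity argument on the connected support. However, there is a genuine gap at the heart of your Step~2. Matching the cross-derivatives $\partial^2/\partial s_a^m\partial s_b^{m'}$ of the two parameterizations on an open set does \emph{not} by itself force the Jacobian of $\ve v^m=\hat\f^{m\,-1}\circ\f^m$ to have one nonzero entry per row, even granting \ref{Aphi}: two full Jacobians sandwiching a matrix of $\phi^{12}$-values can perfectly well reproduce another such matrix. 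The mechanism the paper actually uses, and which your proposal never identifies, is to differentiate the $(a,b)$ cross-derivative identity once more with respect to a \emph{different} coordinate $s_{a'}^m$ ($a'\neq a$) of the \emph{same} group. Because each pairwise potential $\phi(s_a^m,s_b^{m'})$ depends on only one coordinate per group, the true-model side of that mixed third derivative is identically zero, and it is this zero left-hand side that yields the equations $\frac{\partial v_j^m}{\partial s_a^m}\cdot\frac{\partial v_j^m}{\partial s_{a'}^m}=0$ — the one-nonzero-per-row conclusion. Your "one more differentiation" producing $\phi^{112}$ and $\phi^{122}$ is a different derivative (twice in the same $s_a^m$), which in the paper serves only to double the row count so the rank argument can cover both causal directions in $\bar{\m L}^m$.

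A second, related gap is in how the rank condition is deployed. To extract the vanishing of the Jacobian products from the zero equation, you must be able to cancel the matrix of cross-derivatives of the \emph{estimated} potential $\tilde\phi$ from the right, i.e.\ show that the stacked $\tilde{\m\Phi}$ matrices (over suitably chosen evaluation points in the other groups) have full row rank $2\dsm$. This cannot be assumed; the paper obtains it indirectly by proving (Lemma~\ref{lemma:z}) that the corresponding stacked matrix for the \emph{true} potential factors as a block-diagonal matrix of $\phi^{12},\phi^{112},\phi^{122}$ values times a matrix built from $\bar{\m L}^m$, whose full row-rankness (after removing all-zero rows, with the variables split into neighbor types $\Vb,\Vp,\Vc$ and the three clauses of \ref{Aphi} giving nonzero $2\times2$ determinants for each type) transfers full row-rankness across the identity to the estimated side. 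Your Step~3 gestures at "stacking a linear system with coefficient matrix $\bar{\m L}^m$" but omits both the evaluation-point construction (substituting two constants $z_1\neq z_2$ into all other groups) and this transfer step, without which the argument does not close.
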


%\textcolor{blue}{The assumptions require the causal graph to be nondegenerate,
%and the causal function $\phi$ to be non-factorizably dependent on }

The Assumption~\ref{AL} is rather intuitive, and requires the variables (rows) in each group $m$ to have distinctive sets of (or causal strengths to) neighbors (columns), for both causal directions (upper and lower halves), as expressed by the full row-rank condition. This indicates nondegeneracy of the graph, which is known to be crucial in CRL \citep{Kivva2021, Morioka2023}.
%Such nondegenerate assumption on the graph is common and crucial in CRL \citep{Kivva2021, Morioka2023}.}
%similarly to other CRL frameworks \citep{Kivva2021, Morioka2023}.}
%nondegeneracy of the graph, similarly to other CRL frameworks \citep{Kivva2021, Morioka2023}.
%It requires each variable to have at least one neighbor in some other group, and the dependency needs to be distinct across variables (rows) and causal directions (upper and lower halves), as expressed by the full row-rank condition.
Note that \ref{AL} does not require the causal graph to be {\it directed} as in Theorem~\ref{thm:cd3} given below, though requires it to be {\it asymmetric}.
Note also that the variables need to have neighbors only on some of the other groups but not on all of them; e.g.\ in practice, groups somehow ``near-by'' in space or time.
This condition can be evaluated separately for each group $m$;
we cannot identify the latent variables of the groups not satisfying \ref{AL}, while they do not affect the identifiability of the other groups.

The Assumption~\ref{Aphi} requires the (cross-derivative of) $\phi$ to be non-factorizable (the first two equations) and asymmetric (the last equation).
The asymmetricity is a well-known requirement for causal discovery \citep{Hoyer2008, Peters2014}, which excludes linear Gaussian SEMs, and thus reasonable for CRL as well.
%The non-factorizability resembles the requirement for some representation learning (e.g., non-quasi-Gaussianity in \citet{Hyvarinen2017}),
The non-factorizability cannot be satisfied when the cross-derivative of $\phi$ is factorized into variable-wise scalar functions,
which in turn requires sufficiently complex dependency of variables.
%The non-factorizability basically requires the causal relations between variables to be not factorizable into variable-wise scalar functions.
%variables to be {\it sufficiently-dependent} each other. 
In the exponential family BN characterization of Eq.~\ref{eq:ps}, the models with model order more than 1 can satisfy this (Supplementary Material~\ref{sec:ps_exp}).
%The non-factorizability is something similar to the requirement for some representation learning; e.g., non-quasi-Gaussianity in \citet{Hyvarinen2017}.
%This excludes insufficiently-complex causal relations between variables, given by the form $\phi(x, y) = \alpha(x)\beta(y)$ with some scalar functions $\alpha$ and $\beta$,
%and important to avoid undistinguishable causal directional relations, caused by arbitrary nonlinear transformations of the inputs.
We can give an alternative condition not requiring this non-factorizability, by some additional constraints on the causal graph (Proposition~\ref{thm:id_alt} in Supplementary Material~\ref{sec:id_alt}),
which then allows exponential family BNs with order 1, including CAMs.
%Both of those assumptions exclude linear Gaussian SEMs, where $\phi(x, y) = xy$ (Supplementary Material~\ref{sec:ps_exp}), which is consistent with the well-known result of causal discovery}
This resembles the requirement for some representation learning (e.g., non-quasi-Gaussianity in \citet{Hyvarinen2017}), and is thus reasonable for CRL as well.

%The Assumption~\ref{Aphi} requires sufficient dependency of the cross-derivatives of $\phi$ on their inputs,
%and $\phi^{12}$ to be non-factorizable (the first two conditions) and asymmetric (the last condition).
%\textcolor{blue}{The asymmetricity is a well-known requirement of the causal discovery \citep{Hoyer2008, Peters2014}, and thus not restrictive.
%The non-factorizability is more specific to representation learning}

%This assumption does not allow linear Gaussian SEMs, where $\phi(x, y) = xy$ (Supplementary Material~\ref{sec:ps_exp}) and thus $\phi^{112}$ and $\phi^{122}$ are constantly zeros and $\phi^{12}$ is symmetric, which is consistent with the well-known result of causal discovery \citep{Hoyer2008, Peters2014}.
%Although this also excludes some causal models which do not satisfy the non-factorizability (e.g.,~exponential family causal model with model order one, where $\phi(x, y) = \alpha(x)\beta(y)$ with some scalar functions $\alpha$ and $\beta$; Supplementary Material~\ref{sec:ps_exp}),
%we can give an alternative condition not requiring it, by some additional constraints on the causal graph (Proposition~\ref{thm:id_alt} in Supplementary Material~\ref{sec:id_alt}).

%--------------------------------------------------------------------------%
%--------------------------------------------------------------------------%
% Learning algorithm
%--------------------------------------------------------------------------%
%--------------------------------------------------------------------------%
\section{Representation Learning Algorithm}
\label{sec:crl}

We now propose a self-supervised estimation framework called Grouped Causal Representation Learning (G-CaRL). 
Again, we start by learning the representation, i.e.\ learning to invert the mixing functions $\{\f^m\}$.
To this end, we propose a new contrastive learning method where the pretext task is to discriminate (classify) the following two datasets obtained from the same observations:
\begin{align}
       &\x^{(n)} = \begin{bmatrix} \x^{1 (n)}, & \ldots, & \x^{M (n)} \end{bmatrix} \nonumber \\
       & \text{vs.} \quad  \x^{(n_\ast)} = \begin{bmatrix} \x^{1 (n_{\ast}^1)}, & \ldots, & \x^{M (n_{\ast}^M)} \end{bmatrix}   \label{eq:xtilde}
\end{align}
where $n$ indicates the sample index, 
while $n_{\ast}^m$ is a shuffled index, generated in practice by randomly selecting a sample index separately for each group $m$ (note that different groups have different sample indices in $\x^{(n_\ast)}$).
We then learn a nonlinear logistic regression (LR) system which discriminates the two classes,
using a cross-entropy loss with a specific form of the regression function with
\begin{align}
        &r(\x) = c + \sum_{m \in \M} \bar{\psi}^m(\h^m(\x^{m})) \label{eq:crl_r} \\ 
        &+ \sum_{m \neq m'} \sum_{(a, b) \in \Vsm \times \Vsmp} w_{ab}^{mm'} \psi(h_a^m(\x^{m}), h_b^{m'}(\x^{m'})), \nonumber
\end{align}
where $\h^m:  \R^{\dxm} \rightarrow \R^{\dsm}$ is a group-wise (nonlinear) feature extractor,
$h_a^m$ is the $a$-th element of $\h^m$,
$\bar{\psi}^m: \R^{\dsm} \rightarrow \R$ and $\psi: \R^2 \rightarrow \R$ are scalar-valued nonlinear functions,
and $w_{ab}^{mm'}$ and $c \in \R$ are weight and bias parameters.
Importantly, this regression function is designed to be consistent with the causal model defined in Section~\ref{sec:model} (Eq.~\ref{eq:ps}):
the feature extractors $\h^m$ and the weight parameters $w_{ab}^{mm'}$ correspond to the causal variables $\s^m$ and the
adjacency coefficients $\lambda_{ab}^{mm'}$ in the causal model (Eq.~\ref{eq:ps}), respectively.
This indicates that learning those parameters in a data-driven manner should lead to CRL automatically, as justified in the following Theorems.
%\textcolor{blue}{Importantly, this regression function is designed to be consistent with the causal model defined in Section~\ref{sec:model} (Eq.~\ref{eq:ps}):
%the feature extractors $\h^m$, weight parameters $w_{ab}^{mm'}$, and function $\psi$ in the regression function (Eq.~\ref{eq:crl_r}) correspond to the causal variables $\s^m$, 
%adjacency coefficients $\lambda_{ab}^{mm'}$, and causal effect $\phi$ in the causal model (Eq.~\ref{eq:ps}), respectively.
%This indicates that learning the parameters in Eq.~\ref{eq:crl_r} in a data-driven manner should lead to CRL automatically, as justified in the following Theorems.}
The observational grouping indices are assumed to be given in advance, 
while we only need the information of the size of groups $\dsm$ for the latent variables.
The nonlinear functions are typically learned as neural networks with universal approximation capacity \citep{Hornik1989}.
%assumed to have universal approximation capacity \citep{Hornik1989},
%The nonlinear functions are assumed to have universal approximation capacity \citep{Hornik1989},
%and would typically be learned as neural networks.
Any optimization method can be used to minimize the loss (see Supplementary Algorithm~\ref{alg:gcarl} for example), though the theorem below assumes it gives the optimal solution without getting stuck in a local optimum.
After the convergence, we obtain the following consistency Theorem, proven in Supplementary Material~\ref{sec:crl_proof};
\begin{Theorem} \label{thm:crl}
Assume the same as those in Theorem~\ref{thm:id}, and: \vspace*{-2mm}
\begin{enumerate}[label=B\arabic*]
\item (Learning) We train a nonlinear LR system (Eq.~\ref{eq:crl_r}) with universal approximation capability to discriminate two datasets $\x^{(n)}$ and $\x^{(n_\ast)}$ (Eq.~\ref{eq:xtilde}).\label{CAlearn}
\item ($\h$) The functions $\h^{m}$ are $C^2$ diffeomorphisms.\label{Ah}
\end{enumerate}\vspace*{-2mm}
Then, for all groups $m$ in $\M$ (or in the groups of interests in \ref{AL}),
%Then, for all groups $m$ satisfying \ref{AL}, 
in the limit of infinite samples $n$, 
the function $\h^m(\x^m)$ gives the latent variables on the $m$-th group $\s^m$, up to permutation and variable-wise invertible transformations.
\end{Theorem}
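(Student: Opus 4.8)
The plan is to reduce the statement to the representation‑identifiability result of Theorem~\ref{thm:id}: I will show that the trained logistic‑regression (LR) system reconstructs exactly the log‑ratio of the two class‑conditional densities, that this log‑ratio is (up to an additive constant) the grouped pairwise function of the true latents appearing in Eq.~\ref{eq:ps}, and hence that the learned tuple $(\{\h^m\},\psi,w,\{\bar\psi^m\})$ exhibits another explanation of the distribution of $\x$ within the same model class, so that Theorem~\ref{thm:id} applies directly.

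First I would identify the population optimum of the pretext task. The two datasets in Eq.~\ref{eq:xtilde} have equal size, so the Bayes‑optimal regressor for the cross‑entropy loss is $r^\ast(\x)=\log\tfrac{p_1(\x)}{p_0(\x)}$, where $p_1$ is the density of a genuine sample $\x$ and $p_0$ that of the shuffled sample. Writing $\g^m=(\f^m)^{-1}$ and $\g=(\g^1,\dots,\g^M)$ block‑wise, the change‑of‑variables formula gives $p_1(\x)=p_{\s}(\g(\x))\prod_m\lvert\det\partial\g^m/\partial\x^m\rvert$, while the shuffled groups are independent and marginally distributed as $\x^m$, so $p_0(\x)=\prod_m p_{\x^m}(\x^m)=\prod_m p_{\s^m}(\g^m(\x^m))\lvert\det\partial\g^m/\partial\x^m\rvert$; the Jacobian factors cancel, leaving
\[
r^\ast(\x)=\log\frac{p_{\s}(\s)}{\prod_m p_{\s^m}(\s^m)},\qquad \s=\g(\x).
\]
Substituting Eq.~\ref{eq:ps} and absorbing constants, $r^\ast(\x)=\mathrm{const}+\sum_{m\in\M}\bar q^m(\s^m)+\sum_{m\neq m'}\sum_{(a,b)\in\Vsm\times\Vsmp}\lambda_{ab}^{mm'}\phi(s_a^m,s_b^{m'})$ with $\bar q^m:=\bar\phi^m-\log p_{\s^m}$. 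In particular $r^\ast$ already lies in the hypothesis class of Eq.~\ref{eq:crl_r} (take $\h^m=\g^m$, $\psi=\phi$, $w_{ab}^{mm'}=\lambda_{ab}^{mm'}$, $\bar\psi^m=\bar q^m$), so by the universal‑approximation hypothesis \ref{CAlearn} the attained minimum is $r=r^\ast$ on the support, and the finite‑sample minimiser converges to it, which gives the stated infinite‑sample consistency by the usual noise‑contrastive argument.

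Next I would turn $r=r^\ast$ into a statement about the latents. By \ref{Ah} each $\h^m$ is a $C^2$ diffeomorphism, so $\ve{v}^m:=\h^m\circ\f^m$ is a $C^2$ diffeomorphism on $\bar{\mathcal{S}}^{\dsm}$ and $\h^m(\x^m)=\ve{v}^m(\s^m)$. Absorbing the marginals $p_{\x^m}$ and the change‑of‑variables Jacobians into the unrestricted within‑group functions $\bar\psi^m$, the identity $r=r^\ast$ exhibits the learned tuple $(\{(\h^m)^{-1}\},\psi,w,\{\bar\psi^m\})$ as a second instance of the model Eqs.~\ref{eq:f}--\ref{eq:ps} whose induced distribution of $\x$ is exactly $p_1=p(\x)$. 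Hence, by Theorem~\ref{thm:id} applied to the data‑generating model (which satisfies \ref{AL} and \ref{Aphi}), $\h^m(\x^m)=\ve{v}^m(\s^m)$ recovers $\s^m$ up to permutation and variable‑wise invertible transformations, for every group in $\M$ (or the groups of interest). Concretely, the core of that argument is obtained by differentiating $r=r^\ast$ with respect to $s_i^m$ and $s_j^{m'}$ for $m\neq m'$, which annihilates all single‑group terms and all pairwise terms not involving both groups and leaves, for all $\s^m,\s^{m'}$ and all $i\in\Vsm$, $j\in\Vsmp$,
\begin{multline*}
\sum_{a\in\Vsm}\sum_{b\in\Vsmp}\bigl[w_{ab}^{mm'}\psi^{12}(v_a^m,v_b^{m'})+w_{ba}^{m'm}\psi^{12}(v_b^{m'},v_a^m)\bigr]\,\partial_i v_a^m\,\partial_j v_b^{m'}\\
=\lambda_{ij}^{mm'}\phi^{12}(s_i^m,s_j^{m'})+\lambda_{ji}^{m'm}\phi^{12}(s_j^{m'},s_i^m),
\end{multline*}
writing $\psi^{12},\phi^{12}$ for the mixed second derivatives and $\partial_i v_a^m=\partial v_a^m/\partial s_i^m$; this is exactly the identity analysed in the proof of Theorem~\ref{thm:id}.

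The main obstacle is the step hidden in that last invocation: converting the single cross‑derivative identity into diagonality (up to permutation) of the Jacobian of $\ve{v}^m$. This is where \ref{Aphi} and \ref{AL} do the work — the non‑factorizability of $\phi^{12}$ prevents $\ve{v}^m$ from mixing two latent coordinates of a group, the asymmetry of $\phi$ rules out cancellations between the $w_{ab}^{mm'}$ and $w_{ba}^{m'm}$ contributions, and the full‑row‑rank condition on $\bar{\m L}^m$ (Eq.~\ref{eq:Lg}) glues the per‑pair conclusions into one consistent permutation across all the groups of interest; one must also verify that these assumptions are needed only on the data‑generating model and not on the learned $(\psi,w)$, for the reduction to Theorem~\ref{thm:id} to be legitimate. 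The remaining ingredients — Bayes‑optimality of the LR solution, expressibility of $r^\ast$ in the class of Eq.~\ref{eq:crl_r}, extracting the $C^2$‑diffeomorphism‑with‑permuted‑diagonal‑Jacobian $\Rightarrow$ permutation‑plus‑componentwise conclusion from connectedness of $\bar{\mathcal{S}}^{\dsm}$, and the passage from finite to infinite samples — are routine.
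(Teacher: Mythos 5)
Your proposal is correct and follows essentially the same route as the paper: identify the population optimum of the logistic regression as the log-ratio of the genuine versus group-shuffled densities, observe that the Jacobians cancel and the shuffled density factorizes into group marginals, substitute Eq.~\ref{eq:ps}, take the cross-derivative in $s_i^m, s_j^{m'}$ to kill all within-group terms, and reduce to the identity analysed in the proof of Theorem~\ref{thm:id}. Your side remark is also resolved correctly: Assumptions~\ref{AL} and \ref{Aphi} are invoked only on the data-generating side of that identity, so the reduction is legitimate.
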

%Interestingly, although this learning framework may not seem to be related to CRL at first sight, 
Interestingly, in spite of the lack of clear relevance of this pretext task to CRL at first sight,
this theorem actually shows that learning the correct representation is achieved by learning the functions $\h^m(\cdot)$ through the optimization of the regression function Eq.~\ref{eq:crl_r}.
This Theorem is basically based on the well-known properties of the logistic regression \citep{Gutmann2012}.
%well-known fact that the regression function of an LR gives the posterior probability of one of the two classes (Eq.~\ref{eq:xtilde}) after the training \citep{Gutmann2012}.
%
Intuitively, the group-wise shuffling applied to $\x^{(n_\ast)}$ (Eq.~\ref{eq:xtilde}) breaks the causal relations between groups,
which means for the LR to discriminate the two datasets in Eq.~\ref{eq:xtilde} properly, 
%it needs to capture the causal relations between groups in the latent space, by disentangling the observational mixing. 
it needs to capture the inter-group causal relations in the latent space, by disentangling the observational mixing. 
Thanks to the compatibility of the factorization assumptions in the generative (Eq.~\ref{eq:ps}) and in the regression model (Eq.~\ref{eq:crl_r}) as mentioned above, the optimal model is achievable only when $\h^m$ essentially gives the inverse model of $\f^m$, which automatically leads to CRL.

%--------------------------------------------------------------------------%
%--------------------------------------------------------------------------%
% Causal discovery
%--------------------------------------------------------------------------%
%--------------------------------------------------------------------------%
\section{Identifiability of Causal Discovery}
\label{sec:cd}

Next, we consider how to learn the causal graph $\m L$.
%Thanks to the identifiability of the latent variables (Theorem~\ref{thm:id}) and the consistent estimation framework G-CaRL (Theorem~\ref{thm:crl}), 
%we can proceed by estimating it as a post-processing step,
%by just applying some existing causal discovery frameworks to the estimated latent variables; this is justified
%as far as their assumptions are consistent with our causal model (Eq.~\ref{eq:ps}).
%
%
%Nevertheless, it would be useful to fully integrate the causal discovery with the representation learning. 
In our model, this can be achieved by estimating the (weighted) adjacency coefficients $\lambda_{ab}^{mm'}$ in Eq.~\ref{eq:ps} from the observations in a data-driven manner, similarly to many causal discovery frameworks based on BN \citep{Choi2020, Park2019}. 
We can actually achieve this simultaneously with the representation learning by using the self-supervised algorithm in Section~\ref{sec:crl},
where the adjacency coefficients $\{\lambda_{ab}^{mm'}\}$ are learned as the weight parameters $\{w_{ab}^{mm'}\}$ jointly with the inverse models $\h^m$ in the regression function (Eq.~\ref{eq:crl_r}).

The identifiability of  $\m L$ requires additional assumptions on $\m L$ and $\phi$,
given in the following Theorem, proven in Supplementary Material~\ref{sec:cd_proof}.
We first give the definitions of some terms used in the Theorem.
(\textbf{Directed}) We call a causal relation between two variables $s_a^m$ and $s_b^{m'}$ {\it directed} if only one of $\lambda_{ab}^{mm'}$ and $\lambda_{ba}^{m'm}$ has a non-zero value.
(\textbf{Co-parent and co-child}) For a variable $s_a^m$, we call $s_{a'}^m$ in the {\it same} group a {\it co-parent} (respectively {\it co-child}) of $s_a^m$ if it shares at least one child (respectively parent) $s_b^{m_\ast}$ on some {\it other} group ($m_\ast \in \M \setminus m$) with $s_a^m$.
%It requires the causal relations between variables to be directed as in causal discovery, in contrast to Theorem~\ref{thm:id};
%we call a causal relation between two variables $s_a^m$ and $s_b^{m'}$ {\it directed} if only one of $\lambda_{ab}^{mm'}$ and $\lambda_{ba}^{m'm}$ has a non-zero value.
%It also requires each variable to have a co-parent and co-child in the same group, whose definition is given as:
%For a variable $s_a^m$, we call $s_{a'}^m$ in the same group a {\it co-parent} (respectively {\it co-child}) of $s_a^m$ if it shares at least one child (respectively parent) $s_b^{m_\ast}$ on some {\it other} group ($m_\ast \in \M \setminus m$) with $s_a^m$. 
The $s_b^{m_\ast}$ can be arbitrarily selected for each co-parent and co-child.
%The indices $m_\ast$ and $b$ can be different across the co-parents and co-children.
%
\begin{Theorem} \label{thm:cd3}
Assume the same as those in Theorem~\ref{thm:id}, and:
\begin{enumerate}[label=C\arabic*] \vspace*{-2mm}
% (relation)
\item (Causal graph) The inter-group causal relations of variables are all directed, and for every group-pair $(m, m')$ in the groups of interest, 
all variables in a group $m$ (and $m'$) have both a co-parent and a co-child in the same group.
In addition, any variables in the group $m$ (and $m'$) can be reached from any other variables in the same group 
by moving from a variable to one of its co-parents, possibly by multiple hops
(and similarly for co-children). \label{An3}
\item (Asymmetricity) 
There is no open subset $B$ of $\bar{\mathcal{S}}$ such that for all $x \neq y \in B$, it holds
\begin{align}
	\phi^{12}(x, y)  = c \phi^{12}(y, x)
\end{align}
with some constant $c \in \R$. \label{Aasym3}
\end{enumerate}
Then, for all group-pairs $(m, m')$ satisfying \ref{AL} and \ref{An3}, $(\m L^{mm'}, \m L^{m'm})$ are identifiable up to permutation of variables, linear scaling, and matrix transpose.
\end{Theorem}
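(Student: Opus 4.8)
The plan is to invoke Theorem~\ref{thm:id} as a black box, reducing the problem to one in which the latent variables are already known up to variable-wise reparametrizations, and then to read off the directed inter-group adjacency structure from the cross-group block of the Hessian of the log-density. By Theorem~\ref{thm:id}, for every group satisfying \ref{AL} we recover $\s^m$ up to permutation and variable-wise invertible $C^2$ transformations; absorbing the permutation, write the recovered variables as $\tilde{s}_a^m=\gamma_a^m(s_a^m)$ for unknown invertible $C^2$ functions $\gamma_a^m$, so that the joint density $\tilde{p}$ of the recovered latents is determined by the distribution of $\x$. Because the change of variables only adds the within-group terms $-\sum_{a,m}\log|(\gamma_a^m)'(s_a^m)|$ to $\log p$, and because the group potentials $\bar\phi^m$ and every pairwise term other than the two involving exactly $s_a^m$ and $s_b^{m'}$ drop out of a mixed partial across distinct groups, one gets, for $m\neq m'$, $a\in\Vsm$, $b\in\Vsmp$,
\[
	\frac{\partial^2 \log \tilde{p}}{\partial \tilde{s}_a^m\,\partial \tilde{s}_b^{m'}}
	= \frac{\lambda_{ab}^{mm'}\,\phi^{12}(s_a^m,s_b^{m'})+\lambda_{ba}^{m'm}\,\phi^{12}(s_b^{m'},s_a^m)}{(\gamma_a^m)'(s_a^m)\,(\gamma_b^{m'})'(s_b^{m'})},
\]
a quantity computable from the distribution of $\x$.

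From here the skeleton is immediate: under \ref{An3} all inter-group edges are directed, so at most one numerator term is nonzero and no cancellation can occur; since $\phi^{12}$ is uniformly dependent (\ref{Aphi}) and the Jacobian prefactor is nonzero on a dense open set, the displayed derivative vanishes on an open subset of $\bar{\mathcal{S}}^2$ iff both $\lambda_{ab}^{mm'}$ and $\lambda_{ba}^{m'm}$ vanish. Hence the support of $\m L^{mm'}\cup(\m L^{m'm})^\T$, i.e.\ the undirected inter-group skeleton, is identified for every pair $(m,m')$ satisfying \ref{AL} and \ref{An3}.

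The crux is to orient the edges and fix the scaling. For a present edge the derivative above equals a strictly nonzero separable prefactor times $\lambda\,\phi^{12}(s_a^m,s_b^{m'})$ if $s_a^m\!\to\! s_b^{m'}$ and times $\lambda\,\phi^{12}(s_b^{m'},s_a^m)$ if the edge points the other way; taking the mixed second derivative of the logarithm of the absolute value of this expression kills the separable prefactor and the scalar, leaving $\frac{\partial^2}{\partial s_a^m\partial s_b^{m'}}\log|\phi^{12}(s_a^m,s_b^{m'})|$ (or its argument-swapped counterpart), which we can access only modulo the variable-wise reparametrizations $\gamma$. The asymmetricity \ref{Aasym3}, combined with the non-factorizability and asymmetry conditions on $\phi^{12},\phi^{112},\phi^{122}$ in \ref{Aphi}, has to be shown strong enough that a reparametrized copy of $\phi^{12}(\cdot,\cdot)$ cannot be made to coincide with a reparametrized copy of its argument-swapped version, \emph{except} for one global binary choice of which argument slot of $\phi$ is attached to which group of the pair: this is exactly the ``matrix transpose'' freedom, since replacing $\phi(x,y)$ by $\phi(y,x)$ turns $(\m L^{mm'},\m L^{m'm})$ into $((\m L^{m'm})^\T,(\m L^{mm'})^\T)$ with no change in the distribution of $\x$. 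Once orientations are fixed, the co-parent/co-child existence and reachability conditions in \ref{An3} propagate relative scale information across each group: the factor $(\gamma_a^m)'$ is shared by all edges incident to $a$, so comparing two oriented edges meeting at a common endpoint (guaranteed to exist by the co-parent/co-child condition) ties the $\gamma$'s and $\lambda$'s of those edges together, and reachability chains these ties over all variables in groups $m$ and $m'$; this forces the $\gamma_a^m$ to be linear and recovers $(\m L^{mm'},\m L^{m'm})$ up to permutation, linear scaling, and transpose. The main obstacle is precisely this step — showing that \ref{Aphi}--\ref{Aasym3} rule out all spurious reparametrized solutions beyond the single global flip, and that \ref{An3}'s connectivity is exactly what is needed to fuse the per-edge conclusions into one globally consistent, correctly-oriented and correctly-scaled pair of adjacency matrices (absent reachability the graph could decompose into pieces that reorient or rescale independently); the skeleton step and the bookkeeping of the transpose are comparatively routine.
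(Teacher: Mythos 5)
Your high-level strategy coincides with the paper's: invoke Theorem~\ref{thm:id} to reduce to variable-wise reparametrized latents, read the inter-group structure off the cross-derivatives of the log-density (your displayed formula is the paper's Eq.~\ref{eq:cd_eq12_exp3}), use directedness plus uniform dependency for the skeleton, and use asymmetry plus the co-parent/co-child connectivity to fix orientation and scaling. However, the proposal has a genuine gap and a wrong intermediate claim.

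First, the gap: you explicitly defer the crux (``has to be shown strong enough that a reparametrized copy of $\phi^{12}$ cannot be made to coincide with a reparametrized copy of its argument-swapped version, except for one global flip''), but this is precisely where all the work lies and where Assumptions~\ref{An3} and \ref{Aasym3} are actually consumed. The paper resolves it in two pieces that your sketch does not supply: (i) Lemma~\ref{lemma:4}, which rules out inconsistent orientations at a shared endpoint by substituting $y=(k_2)^{-1}\circ k_1(x)$ and $y=(k_2)^{-1}\circ k_3(x)$ into a ratio of cross-derivatives to derive a symmetry of $\phi^{12}$ forbidden by \ref{Aasym3}; and (ii) the argument around Eq.~\ref{eq:cd3_eq_k}, which for a variable $s_a^m$ takes the \emph{ratio} of its co-parent relation and its co-child relation so that the unknown derivative factors cancel, then substitutes $y_2=(k_b^{m_\ast})^{-1}\circ k_{a'}^m(y_1)$, $y_3=(k_{b'}^{m_\dagger})^{-1}\circ k_a^m(y_1)$ to again force a forbidden symmetry unless $k_a^m=k_{a'}^m$. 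This is why \ref{An3} demands \emph{both} a co-parent and a co-child for every variable (the weaker either/or version needs the extra non-factorizability \ref{Aphi2} of Proposition~\ref{thm:cd2}); your plan of ``comparing two oriented edges meeting at a common endpoint'' does not explain why one incidence is insufficient.

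Second, the wrong claim: the argument does not, and need not, force the reparametrizations $\gamma_a^m$ to be linear. The correct conclusion is only that $\gamma_a^m=\gamma^m$ is the \emph{same} (possibly nonlinear) invertible function for all variables within a group; the latent variables remain identified only up to nonlinear variable-wise transformations even after Theorem~\ref{thm:cd3}. The ``linear scaling'' in the statement refers to the adjacency blocks, and it is obtained not by linearizing the $\gamma$'s but by evaluating the matrix identity at common arguments $y_1,y_2$ for all rows and columns (Eq.~\ref{eq:cd_eq12vec_1_vec_exp3}), so that $\phit(k^m(y_1),k^{m'}(y_2))$ and $\phi(y_1,y_2)$ become scalars common to the whole block and proportionality of $(\m L^{mm'},\m L^{m'm})$ to $(\tilde{\m L}^{mm'},\tilde{\m L}^{m'm})$ (or its transpose) follows. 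Relying on linearity of the $\gamma$'s to kill the prefactors $(\gamma_a^m)'$ is both unprovable from the assumptions and unnecessary, since those prefactors cancel in the ratio arguments.
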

This theorem shows that we can identify the causal graphs $\m L$ from the data distribution up to linear scaling and matrix transpose.
This also indicates that the graph can be estimated indeed consistently by the learning algorithm in Section~\ref{sec:crl} as claimed above (we omit the proof of the consistency since it can be easily shown by following those of Theorems~\ref{thm:crl} and \ref{thm:cd3}).
The indeterminacy of matrix transpose comes from the lack of specification of the functional form of $\phi$;
a transposition of the adjacency matrix could be compensated by switching the order of the two arguments of $\phi$,
which can be resolved by some prior information on $\phi$.
%The permutations of variables are inevitable in CRL as typical in representation learning (Theorem~\ref{thm:id}).
%, and the scaling could be compensated by that of the function $\phi$.
%though it can be resolved by some prior information about the $\phi$.

The function $\phi$ is required to be asymmetric (\ref{Aasym3}), 
which is a well-known requirement for causal discovery as mentioned in Section~\ref{sec:id},
though \ref{Aasym3} is a bit stronger than that in \ref{Aphi}.
%which is a natural assumption for causal discovery determining the causal directions in a data-driven manner.

The causal graph assumption (\ref{An3}) is a special condition related to the grouping of variables 
(we can also consider alternative condition which requires only either co-parent or co-child for each variable; Supplementary Material~\ref{sec:cd_alt}).
%(we can also consider alternative condition which does not require either co-parents or co-children; Supplementary Material~\ref{sec:cd_alt}).
This would be fulfilled as long as the connections between groups are not too sparse (See Supplementary Material~\ref{sec:c1} for some illustrative discussion).
One typical example would be fully-connected causal effects from one group to another;
e.g., fully-connected causality DNA $\rightarrow$ RNA and RNA $\rightarrow$ Protein on their latent space in Illustrative Example~1,
and fully-connected temporal causality to some (or all) subsequent time-point(s) in Illustrative Example~2,
since in those cases all other variables in the same group are either (or both) co-parents or co-children.
Of course they do not need to be {\it fully}-connected in practice,
and denseness is not a necessary condition in theory.
%We can also consider alternative condition which does not require either co-parents or co-children (Supplementary Material~\ref{sec:cd_alt}).
%which in turn allows DAGs.
The graph is required to be directed though not necessarily {\it acyclic}.
\section{Experiments}
\label{sec:exp}

To validate the effectiveness of our framework, we compare it to several baselines in two simulation settings and two more realistic scenarios
%one more realistic causal data 
(see Supplementary Material~\ref{sec:app_exp} for the details; the implementation of G-CaRL is available at \url{https://github.com/hmorioka/GCaRL}).
The baselines only include {\it unsupervised} frameworks with {\it instantaneous} (causal) interactions, since our experimental setting does not include supervision, intervention, nor temporal causality.
Specifically, we compared with three CRL frameworks MVCRL \citep{Yao2023}, CausalVAE (\citet{Yang2021} in unsupervised setting), and CCL \citep{Morioka2023}, 
and three representation learning (RL) frameworks MFCVAE 
(\citet{Falck2021, Kivva2022}),
%\citep{Falck2021}, 
VaDE (\citet{Jiang2017, Kivva2022, Willetts2022}), and $\beta$-VAE \citep{Higgins2017}.
See Supplementary Material~\ref{sec:app_baseline} for the details.
%; unsupervised version of identifiable-VAE \citep{Khemakhem2020}
\citet{Daunhawer2023, Lyu2022} are not applicable since they are limited to two-group settings. We also applied \citet{Kivva2021} but it failed due to the difficulty of the estimation of the mixture model in our data.
For a fair comparison, we used group-wise structures for the encoders of those baselines similarly to G-CaRL.
For baselines which do not estimate the (part of) causal graph, we additionally applied a causal discovery framework to the estimated latent variables as post-processing,
from a wide variety of choices so as to maximize the performances (see Supplementary Fig.~\ref{fig:h_cd}).
%We first conduct experiments on a latent DAG model (Section~\ref{sec:sim1}).
%We then show applicability of G-CaRL to more a complex causal model having both directed cycles and latent confounders (Section~\ref{sec:sim2}).
%Lastly, we conduct an experiment with synthetic single-cell gene expression data in Section~\ref{sec:grn}.
%Also see Supplementary Code for the implementation.
We only evaluated the {\it inter-group} causal graphs, since only those are identifiable in our model (Theorem~\ref{thm:cd3}).

\subsection{Simulation~1: DAG}
\label{sec:sim1}

We first examined the performance with latent DAG models (Supplementary Fig.~\ref{fig:w_sample}a shows some examples).
The number of groups ($M$) was fixed to 3, the number of variables was 10 for each group ($\dsm = 10, D_\mathcal{S} = 30$).
The latent variables are observed through nonlinear mixings randomly generated as a multilayer-perceptron (MLP) for each group.

The latent variables and the causal graphs were reconstructed reasonably well by G-CaRL, with much higher performances than the baselines (Fig.~\ref{fig:sim}a).
The baselines did not work well basically because of the lack of representation capability (CCL),
lack of supervision in our setting (CausalVAE), or lack of explicit considerations of the latent causality (others).
The worst performance of CCL indicates the inadequacy of assuming mutual independence among some variables in this dataset.
%Although CausalVAE is a CRL framework as well, its performances are significantly worse than ours,
%mainly due to the lack of supervision, which is supposed to be crucial for CausalVAE.
%The other baselines did not work well basically because of the poor disentanglement by $\beta$-VAE
%(Fig.~\ref{fig:sim}a Correlation; note that they work reasonably well if they are applied directly to the latent variables as shown in Supplementary Fig.~\ref{fig:cd_s}a).

Supplementary Fig.~\ref{fig:sim1_LMD}a shows how the complexity of the mixing model ($L$), the number of variables $D_\mathcal{S}$, groups $M$, and sample size $n$ affect the performances;
a higher $L$, $D_\mathcal{S}$, and $M$ make learning more difficult, while a larger $n$ makes it possible to achieve higher performances,
as expected.

\begin{figure*}[t]
 \centering
 \includegraphics[width=\linewidth]{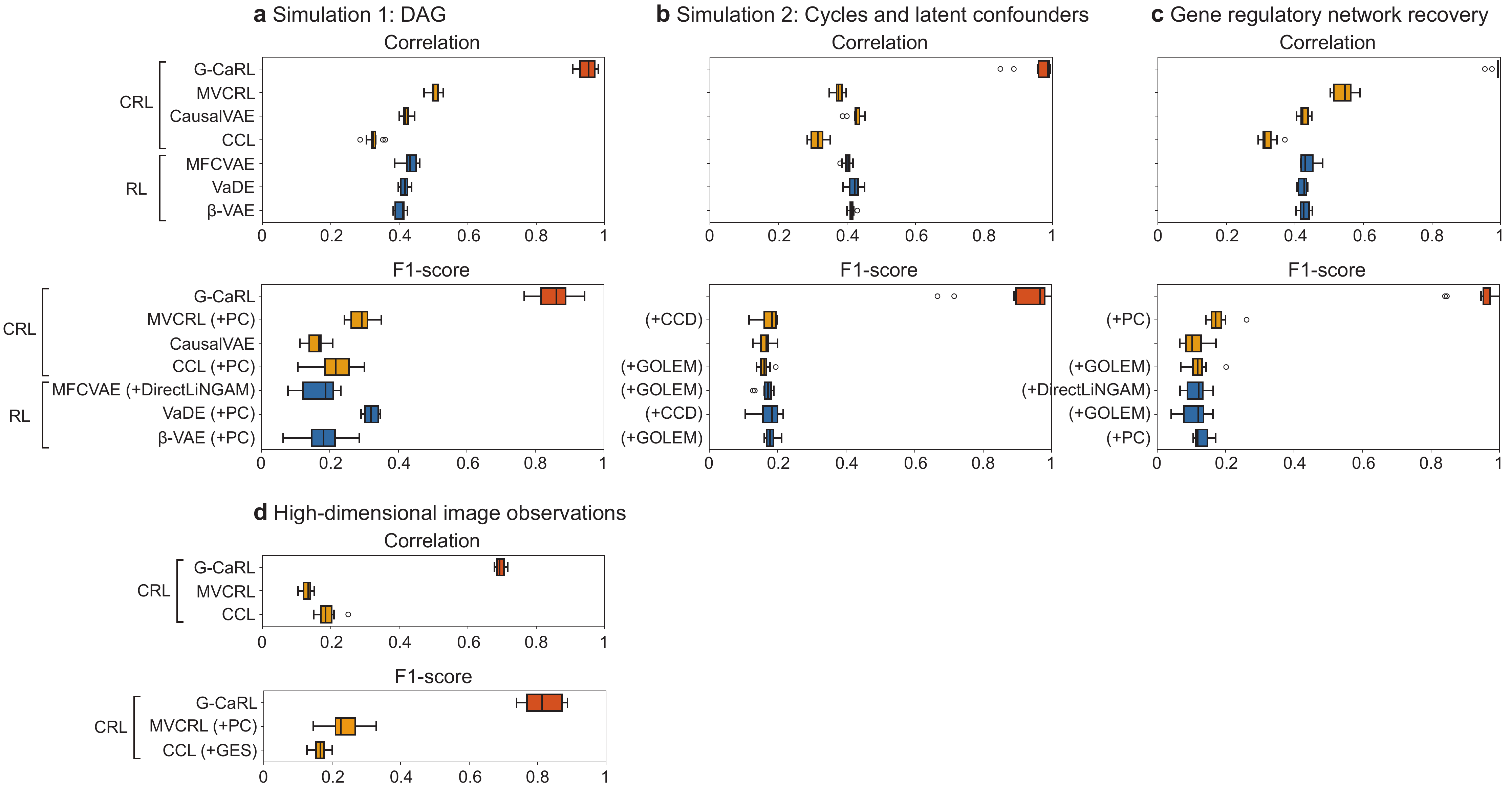}
 \caption{Comparison of CRL performance by the proposed G-CaRL and the baselines. 
   The performances are measured by correlation for the latent variables, and 
   by F1-score for the causal graphs, excluding the intra-group sub-graphs.
% (\textsf{\textbf{a}}) Simulation~1 (basic DAG). 
% %
% (\textsf{\textbf{b}}) Simulation~2 (cycles and latent confounders).
% %
% (\textsf{\textbf{c}}) Gene regulatory network recovery task.
 The parentheses after the names of some (C)RL frameworks indicate the causal discovery frameworks additionally applied as post-processing.
  }
 \label{fig:sim}
\end{figure*}

%--------------------------------------------------------------------------%
%--------------------------------------------------------------------------%
% simulation 2
%--------------------------------------------------------------------------%
%--------------------------------------------------------------------------%
\subsection{Simulation~2: Cyclic Graphs with Latent Confounders}
\label{sec:sim2}

To show the robustness of G-CaRL on more complex causal models,
we next examined the performances with directed cycles and latent confounders with the same number as the observable variables (Fig.~\ref{fig:sim}b; see Supplementary Fig.~\ref{fig:w_sample}b for the difficulty of this setting).
G-CaRL showed reasonably good performance even in this difficult condition, though it requires larger number of samples than Simulation~1 (Supplementary Fig.~\ref{fig:sim1_LMD}b).
Note that this causal model is difficult even for the conventional causal discovery frameworks directly applied to the true latent variables (Supplementary Fig.~\ref{fig:h_cd}).
% if the conventional causal discovery frameworks were applied directly to the latent variables (Supplementary Fig.~\ref{fig:cd_s}b).
%On the other hand, the causal discovery baselines did not work well even when they were applied directly to the latent variables (Supplementary Fig.~\ref{fig:cd_s}b).
This result shows the effectiveness of the causal model of G-CaRL against the existence of causal cycles and latent confounders.
Supplementary Fig.~\ref{fig:sim1_LMD}b shows the effects of the model complexity to the estimation performances, and they show a trend similar to Simulation~1.

%--------------------------------------------------------------------------%
%--------------------------------------------------------------------------%
% GRN
%--------------------------------------------------------------------------%
%--------------------------------------------------------------------------%
\subsection{Recovery of Gene Regulatory Network}
\label{sec:grn}

We also evaluated G-CaRL on a more realistic causal data for showing the robustness against misspecification of the causal model.
We used synthetic single-cell gene expression data generated by SERGIO \citep{Dibaeinia2020},
where each gene expression is governed by a stochastic differential equation (SDE) derived from a chemical Langevin equation.
%with activating or repressing causal interactions with the other genes.
Due to such generative process, the true causal relations cannot really be represented by our causal model (Eq.~\ref{eq:ps}).
The causal graph was designed to be a DAG (as required of SERGIO) similarly to Simulation~1, but with latent confounders similarly to Simulation~2 (Supplementary Fig.~\ref{fig:w_sample}c shows  examples). 
We used the same setting for the observational mixings as those in the simulations above.
G-CaRL showed the best performances among the baselines (Fig.~\ref{fig:sim}c), 
which suggests the robustness of G-CaRL against misspecification of the causal model, and the good applicability of G-CaRL to real datasets. % for causal discovery.

%--------------------------------------------------------------------------%
%--------------------------------------------------------------------------%
% simulation 2
%--------------------------------------------------------------------------%
%--------------------------------------------------------------------------%
\subsection{High-Dimensional Image Observations}
\label{sec:image}

We also evaluated G-CaRL on a more realistic observational model,
%against misspecification of the observational model, by
%To show the robustness of G-CaRL against misspecification of the observational model, We also evaluated it with 
by using a high-dimensional image dataset (3DIdent; \citet{Zimmermann2021}).
%under existence of causal cycles and latent confounders as in Simulation~2 (Supplementary Fig.~\ref{fig:sim2_image} illustrates the complexity of this setting).
Images of a tea cup were generated based on ten latent factors for each group, which were causally interacting within/across groups with directed cycles under influence of latent confounders as in Simulation~2 (see Supplementary Fig.~\ref{fig:sim2_image} for the difficulty of this setting).
We compared the performance with two baselines MVCRL and CCL, which do not require learning a decoder,
since learning both encoder and decoder should be unstable in this high-dimensional setting.

The estimation performances were reasonably good even in such high-dimensional observations with a complex causal model (Fig.~\ref{fig:sim}d).
This suggests the good applicability of G-CaRL to high-dimensional real data.

%Supplementary Fig.~\ref{fig:sim2_image} illustrates how complex this problem setting was (e.g., compare with Fig.2 of \citet{Kugelgen2021}).}

%
%We additionally evaluated our framework with replacing the observational model based on MLPs by high-dimensional image-generation process, by using 3DIdent (Zimmermann et al., 2021) image dataset.
%The dataset is compose of a set of $224 \times 224$ images of tea pots
%conditioned on ten continuous image-factors; 
%The image-factors were originally generated as mutually indepdentn,
%but here we used as  
%
%The original dataset was generated by mutually independent latent factors; hue, ...
%
%3DIdent includes images renders a teapot conditioned on ten continuous latent factors.
%
%Since the latent factors behind the images are ten dinmensional, 
%we generated causal variables, 10 dimension for each group, based on the procedure mentioned above,
%then assigned each of them to one of the latent factors of the images.
%We then assigned an image of a tea pot, which has the closest latent factors to the generated causal variables.
%
%Since the data are high-dimensional images, we used convolutional neural network for the feature extractor.
%We trained a single feature extractor shared across all groups since the observational model is the same in this experiment.

%--------------------------------------------------------------------------%
%--------------------------------------------------------------------------%
\section{Discussion}
\label{sec:discussion}

Our proposal extends the existing CRL models in many aspects;
1) the framework is unsupervised (only requires {\it grouping of variables} independent of the samples) rather than supervised \citep{Brehmer2022, Shen2022, Yang2021}, 
2) we consider instantaneous causality rather than temporal \citep{Lachapelle2022, Lippe2022, Yao2022b, Yao2022a}, 
while the temporal causality is also contained as a special case,
%3) the observational mixing can be group-dependent (or, time-dependent in the dynamics model, rather than time-invariant \citep{Lachapelle2022, Lippe2022, Yao2022a, Yao2022b}), 
3) the observational mixing can be group-(or time-)dependent rather than invariant \citep{Lachapelle2022, Lippe2022, Morioka2023, Yao2022b, Yao2022a}, 
4) the latent variables can be nonlinearly causally related 
rather than linearly \citep{Shen2022, Yang2021}, nor is sparseness \citep{Lachapelle2022} necessary,
and 5) the causal graph can be cyclic, which is even more general than commonly used models for simple causal discovery.

Meanwhile, our framework also has some connections to existing CRL;
it can be understood that our theorems are virtually using the {\it variables on the other groups} as auxiliary variables in the context of (weakly-)supervised CRL.
Importantly, instead of requiring {\it additional} auxiliary variables (supervision) for each sample as in those existing CRL, this study nicely utilized the grouping structure to obtain them automatically.
Our estimation framework based on the group-wise shuffling somewhat resembles 
some multimodal CRL based on contrastive learning (e.g., \citet{Daunhawer2023}). 
This study nicely extends them to more than two-group settings, and gave theoretically guarantee of the identifiability of the 
(group-specific) latent variables up to variable-wise transformations, which is quite new.

This study greatly generalizes the recently proposed CRL framework CCL \citep{Morioka2023} in many aspects;
%Our model can be seen as a generalization of the recently proposed CRL framework CCL \citep{Morioka2023} in many aspects;
CCL assumes that 1)~the mixing functions $\f^m$ are the same for all groups (called {\it nodes} in CCL) $m$ rather than group-specific as in ours (Eq.~\ref{eq:f}), 2)~the causal relations are component-wise, similarly to NICA
%only considers component-wise relations between groups similarly to NICA 
(in other words, the adjacency coefficients $\lambda_{ab}^{mm'}$ in Eq.~\ref{eq:ps} can have non-zero values only when $a = b$, rather than all pairs of $(a, b)$ as in ours), and 3)~the causal graph is a forest, while ours can be more general and even cyclic.
These indicate much higher generality and applicability of our model.
%We also proposed a new estimation framework G-CaRL for such general CRL model
Such generality requires our new estimation framework G-CaRL, which is very different from CCL,
though both of them can be categorized as self-supervised (contrastive) learning;
CCL used group-paired data for taking contrast, while our G-CaRL used group-wise-shuffled data (Eq.~\ref{eq:xtilde}).

%and includes many real-world-use causal models such as causal additive models (B{\"u}hlmann et al. 2014).

Our model can be seen as a generalization of NICA, 
in the sense that 1) it does not require mutual independence of variables,
%in the sense that 1) it allows causal relations between latent variables,
and further, 2) the mixing function is time/group-dependent, which is completely new in NICA.
Independently Modulated Component Analysis \citep{Khemakhem2020beem} was recently proposed as an extension of NICA to allow dependency across variables, but it requires an auxiliary variable unlike our framework.
%Meanwhile, our framework also has some connections to NICA;
%it can be understood that our theorems are virtually using the {\it variables on the other groups} as auxiliary variables in the context of NICA.
%Importantly, instead of requiring {\it additional} auxiliary variables (supervision) for each sample as in those existing CRL, this study nicely utilized the grouping structure to obtain them automatically.
%The crucial idea of our identifiability theorems is that we virtually use the {\it variables on the other groups} as auxiliary variables in the context of NICA. Importantly, instead of requiring {\it additional} auxiliary variables (supervision) for each sample as in those existing studies, this study nicely utilized the grouping structure to obtain them automatically.

There exist many possible applications where our grouping assumptions are applicable: for example,
1)~multimodal measurements, for example consisting of brain activities (group~1), external stimuli (group~2), and behaviors (group~3) of animals or humans \citep{Hebart2023}. Each group has multidimensional observations (multiple brain regions, stimuli, and behaviors) as time series (samples). Our framework would give new insight into how the brain is organizing behaviors based on external factors at an abstract level.
2)~Sensor network for example in climate monitoring \citep{Longman2018}, where each group is a single sensor location, and each sensor is measuring such as temperature, humidity, rainfall, pollutants, etc.~({\it variables}) in the location. Our method would extract some hidden causal relations between sensor locations on high-level feature space.
3)~Medical multimodal data \citep{Acosta2022}, where {\it groups} are different types of medical record obtained from subjects (samples); e.g., genetic factors (group~1), self-report lifestyle (group~2), and clinical diagnosis by doctors (group~3). Our framework would presumably find high-level connections between those groups; e.g., what kinds of (combinations of) genetic factors and/or (combinations of) lifestyles are causing some type of diseases, and so on.
4)~Our model can also consider very general types of temporal causality (see Illustrative Example~2).

Although we considered the case $\dxm = \dsm$ for theoretical convenience, we can extend our framework to $\dxm \geq \dsm$,
as empirically validated in Section~\ref{sec:image}.
One approach is to assume that $\f^m$ is injective and a $C^2$ diffeomorphism between $\mathcal{S}^m$ and $\X^m \in \R^{d_\X^m}$ which is a $C^2$ differentiable manifold,
as in \citet{Halva2021}. Our proof can be adapted for that case, following that study.
Another approach is to assume that $\dxm - \dsm$ variables are not causally related to any other groups (but possibly within each group, implicitly included in $\bar{\phi}^m$ in Eq.~\ref{eq:ps}), in which case our estimation framework would automatically ignore them, which does
%those variables. Although we cannot identify them, they do 
not affect the identifiability of the $\dsm$ variables.
%This also allows considering noisy observations.
We can also consider noisy mixing models via a standard deconvolution argument as in \citet{Khemakhem2020}.

%Our estimation framework is based on the shuffling of the observational samples within each group,
%which is similar to some self-supervised learning frameworks heuristically proposed for multimodal-data \citep{Zong2023}. 
%However, most of them lack theoretical grounding, which may be provided by this study. 
%It is also similar to some NICA \citep{Hyvarinen2017, Hyvarinen2019},
%while our framework extends them to CRL.

%\paragraph{Limitations}
Our Theorems assume grouping of the observational mixing without any observational contamination across groups (i.e., there are only group-specific variables),
in contrast to some other group-based CRL frameworks focusing rather on intersections of groups \citep{Daunhawer2023, Lyu2022, Yao2023}.
%We instead focused only on the group-specific variables, causally interacting across- and within-groups,  which were assumed to be independent across groups in \citep{Sturma2023, Yao2023}.
% with assuming the group-specific variables are independent between groups \citep{Sturma2023, Yao2023}.
%Although it is more restrictive than some other group-based CRL focusing on the intersections of groups \citep{Sturma2023, Yao2023},
Our assumption should be satisfied in many practical applications, as described in Illustrative Examples above. 
%Although it might sound restrictive, it should be satisfied in many practical applications, as described in Illustrative Examples above. 
%In addition, our identifiability result is stronger than \citep{Yao2023}.
In addition, it makes our causal assumptions of the (group-specific) latent variables much weaker than those studies; our group-specific variables can be causally related across groups in our model (Eq.~\ref{eq:ps}) rather than mutually independent (e.g., Corollary 3.10 and 3.11 of \citet{Yao2023}).
Furthermore, our assumption makes the latent variables identifiable up to variable-wise transformations (Theorem~\ref{thm:id}) rather than block-wise transformations.
%We instead focused only on the group-specific variables, causally interacting across- and within-groups,  which were assumed to be independent across groups in \citep{Sturma2023, Yao2023}.

The identifiability of the causal graphs only applies for connections {\it across} groups, and those {\it within} each group are left unknown.
Nevertheless, since the latent variables are guaranteed to be identifiable, we can apply existing causal discovery framework for estimating them as a post-processing. 
%In addition, the causal graph has the indeterminacy of matrix-transpose, though it can be resolved by some prior information about the $\phi$.
Although the $\phi$ is assumed to be the same across all group-pairs for simplicity, it could be different across them, but that might require some additional assumptions.
%Indeterminacy of matrix-transpose of the causal graph can be resolved by some prior information about the $\phi$

Future research might consider relaxing some of the assumptions in our work.
%There exist some possible future directions of this study. 
Although the grouping of variables without overlaps between groups can be satisfied in plenty of situations in practice (see above),
allowing such overlaps should make our framework even more applicable to a wider variety of situations.
%the consideration of possible overlaps of variables across groups. Our model and estimation framework cannot consider such overlaps so far. Even though there exist plenty of practical situations where the variables do not overlap across groups (see Illustrative Examples 1 and 2, and our reply to Question-1 of Reviewer eTjS), allowing such overlaps should make our framework even more applicable to a wider variety of situations.
Multivariate (vector) causal variables might be more favorable for representing complex and high-level phenomena behind data, compared to univariate causal variables as considered in this study.
%which represent high-level phenomena behind data and thus would be difficult to be represented by univariate variables.
%Although we considered the case where each causal variable is univariate, multivariate (vector) variables might be more suitable to represent {\it high-level} causal variables in some applications.
In such case, vector-wise identifiability as in \citep{Daunhawer2023, Yao2023} rather than component-wise one (Theorem~\ref{thm:id}) might be enough, which is weaker and thus might relax some of the assumptions in this study.
Applying our framework to more realistic datasets, such as iTHOR \citep{Kolve2022}, would be an important future direction for assessing its applicability and robustness.

%--------------------------------------------------------------------------%
%--------------------------------------------------------------------------%
\section{Conclusion}
\label{sec:conclusion}

This study proposed a new identifiable model for CRL, together with its self-supervised estimation framework G-CaRL.
The new approach is the assumption of the grouping of the observational variables, which appears naturally in many practical applications such as multi-sensor measurements or time series.
Such an assumption allowed us to significantly weaken any other assumptions required on the latent causal mechanisms in existing frameworks. 
In contrast to existing CRL models, our model does not require temporal structure (although it can use it as a special case), nor does it assume any supervision or interventions.
Although our model restricts the inter-group causal relations of variables to some extent,
it allows nonlinearity and even cycles, which is more general than most of the causal discovery models.
Numerical experiments showed better performances compared to the state-of-the-art baselines, thus making G-CaRL a  promising candidate for real-world CRL in a wide variety of fields.

% Acknowledgements should only appear in the accepted version.
\section*{Acknowledgements}

This research was supported in part by JST PRESTO JPMJPR2028, JSPS KAKENHI 24H02177, 22H05666, and 22K17956. A.H. was funded by a Fellow Position from CIFAR, and the Academy of Finland (project \#330482). 
We also would like to thank the anonymous reviewers for very useful comments that helped us improve the manuscript.

\section*{Impact Statement}
This paper presents work whose goal is to advance the field of 
Machine Learning. There are many potential societal consequences 
of our work, none which we feel must be specifically highlighted here.

% In the unusual situation where you want a paper to appear in the
% references without citing it in the main text, use \nocite
%\nocite{langley00}

\bibliography{references}
\bibliographystyle{icml2024}

%%%%%%%%%%%%%%%%%%%%%%%%%%%%%%%%%%%%%%%%%%%%%%%%%%%%%%%%%%%%%%%%%%%%%%%%%%%%%%%
%%%%%%%%%%%%%%%%%%%%%%%%%%%%%%%%%%%%%%%%%%%%%%%%%%%%%%%%%%%%%%%%%%%%%%%%%%%%%%%
% APPENDIX
%%%%%%%%%%%%%%%%%%%%%%%%%%%%%%%%%%%%%%%%%%%%%%%%%%%%%%%%%%%%%%%%%%%%%%%%%%%%%%%
%%%%%%%%%%%%%%%%%%%%%%%%%%%%%%%%%%%%%%%%%%%%%%%%%%%%%%%%%%%%%%%%%%%%%%%%%%%%%%%
\newpage
\appendix
\onecolumn
\section{Supplementary Materials for \textit{\textbf Causal Representation Learning Made Identifiable by Grouping of Observational Variables}}
\addcontentsline{toc}{section}{Appendices}

%--------------------------------------------------------------------------%
%--------------------------------------------------------------------------%
\section{Exponential Family Bayesian Network Representation of the Causal Model}
\label{sec:ps_exp}

We consider a special case where the potential function $\phi$ is factorized as
\begin{align}
	\phi(x, y) = \etab(x)^\T \m T(y), \label{eq:ps_exp}
\end{align}
where the factors $\etab(x) = [\eta_1(x), \ldots, \eta_N(x)]^\T$ and 
$\m T(y) = [T_1(y), \ldots, T_N(y)]^\T$ are some $N$-dimensional vector functions of a scalar input, which are differentiable.
We assume that the factors $\etab$ and $\m T$ are minimal without loss of generality, whose definition is given below (Definition~\ref{def:minimal}).
In this parameterization, if the whole causal graph is acyclic and
the intra-group causal relations are given in the pairwise form as in those of inter-groups,
the conditional distribution of a variable $s_a^m$ is given, from Eqs.~\ref{eq:ps}, by the following (conditional) exponential family of order $N$,
\begin{align}
	p(s_a^m \lvert \text{pa}(s_a^m)) &= \frac{1}{Z_a^m(\text{pa}(s_a^m))} h_a^m(s_a^m) \exp\left( \sum_{s_b^{m'} \in \text{pa}(s_a^m)} \lambda_{ba}^{m'm} \etab(s_b^{m'})^\T \m T(s_a^m) \right), \nonumber \\
	&= \frac{1}{Z_a^m(\text{pa}(s_a^m))} h_a^m(s_a^m) \exp\left(  \tilde{\etab}_{a}^{m}(\text{pa}(s_a^m))^\T \m T(s_a^m) \right) \label{eq:p_sb}
\end{align}
where $\text{pa}(s_a^m)$ is the set of parents of the variable $s_a^m$,
$\m T(s_a^m)$ represents the sufficient statistic of the conditional distribution of $s_a^m$.
The overall natural parameter $\tilde{\etab}_a^m(\text{pa}(s_a^m)) = \sum_{s_b^{m'} \in \text{pa}(s_a^m)} \lambda_{ba}^{m'm} \etab(s_b^{m'})$ is
simply given as a summation of the causal effects from the all parents, depending on the causal strengths $\lambda_{ba}^{m'm}$ and the function $\etab$.
The base measure $h_a^m$ and the partition function $Z_a^m$ depend on the type of the factors and the graph structure.
The crucial point is the additivity of the (nonlinear) causal effects from the parents, which determines the natural parameters of the target variable distribution.
Apart from that, this parameterization is not very restrictive, since exponential families have universal approximation capabilities \citep{Sriperumbudur2017}.

This parameterization also simplifies some of the assumptions of Theorems.
The non-factorizability of $\phi$ (Assumption~\ref{Aphi} of Theorem~\ref{thm:id}) can be satisfied if the model order $N$ is more than one, as shown in Lemma~\ref{lemma:Aphi} given below.
Although this exponential family parameterization with order $N = 1$ cannot satisfy the non-factorizability, 
they can be supported by the other variant of the identifiability condition (Proposition~\ref{thm:id_alt}), which does not require such non-factorizability.
The asymmetricity assumption of $\phi$ (\ref{Aphi} and \ref{Aasym3})
indicates that the sets of the elements (functions) need to be sufficiently different between $\etab$ and $\m T$.

\paragraph{Some SEMs can be represented by Eq.~\ref{eq:p_sb}} 
%While Eq.~\ref{eq:p_sb} represents a causal model based on BN,
We can also show that some state-equation models (SEMs) can be represented by this parameterization as a special case.
One such example is causal additive models (CAMs; \citet{Buhlmann2014}), given by
%As a special case, we demonstrate here that some causal additive models (CAMs; \citet{Buhlmann2014}) can be represented by this exponential family model (Eqs.~\ref{eq:ps}, \ref{eq:ps_exp}, and \ref{eq:p_sb}). 
%We consider a causal relation between two groups with a causal direction $m \rightarrow m'$, given by
\begin{align}
	\s = \m L \betab(\s) + \ve\epsilon, \label{eq:ngsem}
\end{align}
where $\m L \in \R^{D_\mathcal{S} \times D_\mathcal{S}}$ is an adjacency matrix,
$\betab(\s) = [\beta(s_1), \ldots, \beta(s_{D_\mathcal{S}})]^\T$ is an element-wise (nonlinear) function of $\s$,
and $\ve\epsilon \sim N(\ve 0, \sigma \m I)$ is $D_\mathcal{S}$-dimensional additive Gaussian noise with diagonal covariance matrix.
% $\m\Sigma = \text{diag}([\sigma_1, \ldots, \sigma_{D_\mathcal{S}}])$.
%We especially assume that there is no intra-group causal interactions of variables.
In this SEM, the conditional distribution of a variable $s_a^m$ is given by
\begin{align}
	p(s_a^m \lvert \text{pa}(s_a^m)) &= \frac{1}{Z} \exp \left\{ - \frac{1}{2 \sigma^2} \left(s_a^m - \sum_{s_b^{m'} \in \text{pa}(s_a^m)} \lambda_{ba}^{m'm} \beta(s_b^{m'}) \right)^2 \right\} \label{eq:ngsem_ps} \\
	&= \left[ \frac{1}{Z}  \exp \left( - \frac{\left(\sum_{s_b^{m'} \in \text{pa}(s_a^m)} \lambda_{ba}^{m'm} \beta(s_b^{m'}) \right)^2}{2 \sigma^2} \right) \right]  \left[ \exp \left( - \frac{(s_a^m)^2}{2 \sigma^2} \right) \right] \nonumber \\
	&\times \left[ \exp \left( \sum_{s_b^{m'} \in \text{pa}(s_a^m)} \frac{\lambda_{ba}^{m'm}}{\sigma^2} s_a^m \beta(s_b^{m'}) \right) \right],
\end{align}
where $Z$ denotes a normalizing constant.
We can clearly see that the first, the second, and the third factors correspond to those of Eq.~\ref{eq:p_sb}, respectively,
and the causal function $\phi$ of this model is given by $\phi(x, y) = y \beta(x)$.
%\begin{align}
%	\phi(x, y) = y \beta(x). \label{eq:ngsem_phi}
%\end{align}
Therefore, the CAMs given by Eq.~\ref{eq:ngsem} can be represented by our causal model (Eqs.~\ref{eq:ps}), especially by the exponential family parameterization (Eqs.~\ref{eq:ps_exp} and \ref{eq:p_sb}) of order $N=1$, linear sufficient statistics $\m T(y) = y$, and (nonlinear) natural parameter $\etab(x) = \beta(x)$. 
As mentioned above, such model with order $N=1$ cannot satisfy the non-factorizability (\ref{Aphi}), and thus needs to consider Proposition~\ref{thm:id_alt} instead of Theorem~\ref{thm:id}.

This model includes linear Gaussian SEMs as a special case, where $\beta$ is a linear function.
However, they do not satisfy the conditions of both Theorem~\ref{thm:id} (uniform-dependency, non-factorizability, and asymmetricity) 
and Proposition~\ref{thm:id_alt} (uniform-dependency and asymmetricity), where the definition of uniform-dependency is given below (Definition~\ref{def:dependency}).
Thus linear Gaussian SEMs cannot have identifiability in our model in any case, 
which is consistent with the well-known result of causal discovery \citep{Hoyer2008, Peters2014}.

%When $\beta$ is a linear function, this model corresponds to linear Gaussian SEMs,
%which does not satisfy the conditions of both Theorem~\ref{thm:id} (uniform-dependency, non-factorizability, and asymmetricity) and Proposition~\ref{thm:id_alt} (uniform-dependency and asymmetricity).
%Thus linear Gaussian SEMs cannot have identifiability in our model in any case, 
%which is consistent with the well-known result of causal discovery \citep{Hoyer2008, Peters2014}.

Note that the exponential family BNs (Eq.~\ref{eq:p_sb}) can represent many other causal models in addition to CAMs;
crucially, the distributional type of Eq.~\ref{eq:p_sb} is not restricted to Gaussian (more specifically, the sufficient statistic $\m T$ can be nonlinear and multidimensional), unlike many conventional causal models based on additive Gaussian error terms (e.g., Eq.~\ref{eq:ngsem}).

\begin{Definition} \label{def:minimal}
(Minimality) We say that a function $\alphab: \R \rightarrow \R^N$ is minimal if for any open subset $\mathcal{X}$ of $\R$ the following is true:
\begin{align}
	\left( \exists \ve\theta \in \R^N \mid \forall x \in \mathcal{X}, \ve\theta^\T \alphab(x) = \text{const.} \right) \implies \ve\theta = \ve 0.
\end{align}
\end{Definition}
The minimality is similar to the linear independence of the elements, but stronger;
minimality also forbids the existence of elements which only have differences of scaling and biases.
Note that a non-minimal model can always be reduced to minimal one via a suitable transformation and reparameterization.

\begin{Definition} \label{def:dependency}
(Uniform-dependency) We call a function $q(x, y): \bar{\mathcal{S}} \times \bar{\mathcal{S}} \rightarrow \R$ is uniform-dependent if the set of zeros of $q(x, y)$ is a meagre subset of $\bar{\mathcal{S}} \times \bar{\mathcal{S}}$, i.e., it contains no open subset. 
\end{Definition}

\begin{Lemma} \label{lemma:Aphi}
In the exponential family characterization of the causal model (Eq.~\ref{eq:ps_exp}), 
the non-factorizability conditions in Assumption~\ref{Aphi} can be satisfied if the model order $N \geq 2$.
% and $\etab(x) \neq \m T(x)$ for all $x \in B$ in any open subset $B$ of $\bar{\mathcal{S}}$. for any selection of $z_1 \neq z_2$.
\end{Lemma}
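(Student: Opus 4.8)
The plan is to compute the relevant cross-derivatives of $\phi(x,y) = \etab(x)^\T \m T(y) = \sum_{i=1}^N \eta_i(x) T_i(y)$ explicitly and then show that, for $N \geq 2$, generic choices of the (minimal) factor functions $\etab$ and $\m T$ prevent the factorization identities in Assumption~\ref{Aphi}. First I would record the three derivative expressions: $\phi^{12}(x,y) = \sum_{i=1}^N \eta_i'(x)\, T_i'(y)$, $\phi^{112}(x,y) = \sum_{i=1}^N \eta_i''(x)\, T_i'(y)$, and $\phi^{122}(x,y) = \sum_{i=1}^N \eta_i'(x)\, T_i''(y)$. Each of these is a sum of $N$ separable terms, i.e.\ a bilinear form $\ve a(x)^\T \ve b(y)$ with $N$-dimensional factor vectors whose components are (derivatives of) the $\eta_i$ and $T_i$.

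The key observation is the standard one: a function of the form $\sum_{i=1}^N a_i(x) b_i(y)$ is itself \emph{factorizable} (i.e.\ equals $g(x)h(y)$ for some scalar $g,h$) if and only if the matrix of ``Fourier-type'' coefficients collapses to rank one, which — given that the $\{a_i\}$ and $\{b_i\}$ are each linearly independent on every open set — cannot happen unless $N = 1$. Concretely, the condition $\phi^{12}(s, z_1) = c_1 \phi^{12}(s, z_2)$ for all $s$ in an open set $B$ says $\sum_i \eta_i'(s)\big(T_i'(z_1) - c_1 T_i'(z_2)\big) = 0$ on $B$; by minimality of $\etab$ (hence linear independence of $\{\eta_i'\}$ on $B$, which I would justify from Definition~\ref{def:minimal} after noting that an additive constant in $\eta_i$ drops out under differentiation so one works with the reduced minimal system, or simply takes this as a genericity hypothesis on the factors), this forces $T_i'(z_1) = c_1 T_i'(z_2)$ for every $i$ and every admissible $z_1 \neq z_2$, i.e.\ the vector $\m T'(z)$ stays on a fixed line as $z$ varies — impossible for $N \geq 2$ with a minimal $\m T$. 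The symmetric condition $\phi^{12}(z_1, s) = c_2 \phi^{12}(z_2, s)$ is handled identically with the roles of $\etab,\m T$ swapped, and the asymmetry condition $\phi^{12}(s, z_1) = c_3 \phi^{12}(z_1, s)$ likewise reduces, after expanding both sides in the $\{\eta_i', T_i'\}$ basis, to a rank-one collapse that fails for $N \geq 2$. The same bilinear-rank argument applied to $\phi^{112}$ and $\phi^{122}$ shows they are non-factorizable (equivalently, are not of rank one as bilinear forms), which is the non-factorizability half of Assumption~\ref{Aphi}. Finally, uniform dependency (Definition~\ref{def:dependency}) of $\phi^{12}, \phi^{112}, \phi^{122}$ follows because a nonzero real-analytic-type bilinear form $\ve a(x)^\T \ve b(y)$ with non-vanishing factors has a zero set that is a proper subvariety, hence contains no open subset; I would state this under the mild regularity already assumed on the factors.

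The main obstacle I anticipate is bookkeeping the minimality hypothesis correctly: I need linear independence of the \emph{derivatives} $\{\eta_i'\}$ and $\{T_i'\}$ (and for $\phi^{112},\phi^{122}$ also second derivatives) on arbitrary open subsets, whereas Definition~\ref{def:minimal} is stated for the functions themselves. Bridging this requires either passing to a reduced minimal system after differentiation (absorbing constants), or strengthening the statement to say ``for generic / suitably chosen minimal factors,'' and I would be careful to state exactly which version is being proved. Modulo that subtlety, every step is an elementary linear-algebra argument about separable expansions, and no hard analysis is needed.
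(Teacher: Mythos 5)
Your argument for the two non-factorizability conditions is essentially the paper's proof: write $\phi^{12}(s,z)=\etab'(s)^\T \m T'(z)$, use linear independence of the derivative vectors to force $\m T'(z_1)=c_1\m T'(z_2)$ for all admissible $z_1\neq z_2$, and contradict this using $N\geq 2$; the second condition follows by swapping the roles of $\etab$ and $\m T$. The subtlety you flag --- that minimality (Definition~\ref{def:minimal}) is stated for the functions rather than their derivatives --- is exactly where the paper leans on Lemma~3 of \citet{Khemakhem2020}, which supplies $N$ points at which $(\etab'(s_1),\ldots,\etab'(s_N))$ are linearly independent and two points at which $(\m T'(z_1),\m T'(z_2))$ are; no genericity hypothesis is needed, so your ``reduced minimal system'' route is the right instinct.

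Two of your additional assertions, however, overreach beyond what the lemma claims and one of them is false. The asymmetry condition $\phi^{12}(s,z_1)=c_3\phi^{12}(z_1,s)$ does \emph{not} reduce to a rank-one collapse ruled out by $N\geq 2$: taking $\m T=\etab$ yields a symmetric $\phi$ for which this holds identically with $c_3=1$ for any $N$, which is precisely why the paper separately remarks that asymmetry requires the element sets of $\etab$ and $\m T$ to be sufficiently different rather than deducing it from the model order. Similarly, uniform dependency of $\phi^{12}$, $\phi^{112}$, $\phi^{122}$ is not part of the lemma, and your ``proper subvariety'' argument presumes an analyticity that the model does not assume. Since the lemma asserts only the non-factorizability conditions, your proof of what is actually stated is sound and matches the paper; just do not present the asymmetry and uniform-dependency claims as consequences of $N\geq 2$.
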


\begin{proof}
(Non-factorizability)
We firstly show the non-factorizability condition of the first equation $\phi^{12}(s, z_1) = c_1 \phi^{12}(s, z_2)$ in Assumption~\ref{Aphi}; the second equation can be proven in the same manner.
We give a proof by contradiction.
We suppose the negation; there exist some open subset $B \subset \bar{\mathcal{S}}$ such that the equation $\phi^{12}(s, z_1) = c_1 \phi^{12}(s, z_2)$ hold for all $s \in B$ for any $z_1 \neq z_2$.
We consider one of such open subset $B$ here. By substituting Eq.~\ref{eq:ps_exp} into the equation, we have
%a simple calculation, this indicates that
\begin{align}
	\etab'(s)^\T \left(\m T'(z_1) - c_1 \m T'(z_2) \right) = 0. \label{eq:lemma_Aphi_1}
\end{align}
%From Lemma~3 of \citet{Khemakhem2020}, we can choose $z_1$ and $z_2$ such that $(\m T'(z_1), \m T'(z_2))$ are linearly independent;
%in other words, the second factor on the left-hand side is not a zero vector.
From Lemma~3 of \citet{Khemakhem2020}, there exist $N$ distinctive values $s_1$ to $s_N$ such that $(\etab'(s_1), \ldots, \etab'(s_N))$ are linearly independent.
By substituting those values into Eq.~\ref{eq:lemma_Aphi_1} with concatenating vertically, we obtain
\begin{align}
	\begin{bmatrix} \etab'(s_1)^\T \\ \vdots \\ \etab'(s_N)^\T \end{bmatrix} \left(\m T'(z_1) - c_1 \m T'(z_2) \right) = \ve 0. \label{eq:lemma_Aphi_2}
\end{align}
Since the first factor ($N \times N$ matrix) has full-rank, we have
\begin{align}
	\m T'(z_1) - c_1 \m T'(z_2) = \ve 0. \label{eq:lemma_Aphi_3}
\end{align}
However, this contradicts the fact that there should exist at leatst two distinctive values $z_1$ and $z_2$ such that $(\m T'(z_1), \m T'(z_2))$ are linearly independent, again from Lemma~3 of \citet{Khemakhem2020}.
From this contradiction, we conclude that we can make the equation not hold by properly choosing some $z_1$ and $z_2$,
which indicates the non-factorizability.
% the function $\phi$ satisfies the condition with those assumptions. 
%Eq.~\ref{eq:lemma4} cannot hold with those assumptions, and thus the Lemma is proven.
%we can choose some $z_1 \neq z_2$ such that $\phi^{12}(s, z_1) = c_1 \phi^{12}(s, z_2)$ does not hold for all $s \in B$ in any open subset $B$.
%$\phi^{12}(s, z_1) = c_1 \phi^{12}(s, z_2)$ hold for all $s \in B$ for some $z_1 \neq z_2$.

\end{proof}

\section{Proof of Theorem~\ref{thm:id}}
\label{sec:id_proof}

\begin{proof}
We denote by $\mathcal{S} = \mathcal{S}^1 \times \ldots \times \mathcal{S}^M$ the support of the distribution of $\s$,
where $\mathcal{S}^m = \mathcal{S}_1^m \times \ldots \times \mathcal{S}_{\dsm}^m$ is that of the distribution of each group $\s^m$,
and $\mathcal{S}_a^m \subset \R$ is that of the $a$-th element.
We consider the situation where each $\mathcal{S}_a^m$ is connected (i.e. an interval),
and additionally, without loss of generality, $\mathcal{S}_a^m$ are the same across all variables, denoted as $\bar{\mathcal{S}}$.

Writing the joint log-density of the random vector $\x = (\x^1, \ldots, \x^M)$ for the two parameterizations, yields
\begin{align}
        &\log p \left(\g^1(\x^1), \ldots, \g^M(\x^M) \right) + \sum_{m \in \M} \log | \det \m J_{\g^m}(\x^m) | \nonumber \\
        &= \log \tilde{p} \left(\gbt^1(\x^1), \ldots, \gbt^M(\x^M) \right) + \sum_{m \in \M} \log | \det \m J_{\gbt^m}(\x^m) |,  \label{eq:p=pt}
\end{align}
where we denote the (true) demixing models as $\g^m = (\f^m)^{-1}$, and their other parameterizations as $\gbt^m$, and $\m J$ is the Jacobian for the change of variables.
Let a compound demixing-mixing function $\ve v^m(\s^m) = \gbt^m \circ \f^m(\s^m)$, we then have
\begin{align}
        &\log p \left(\s^1, \ldots, \s^M \right) + \sum_{m \in \M} \log | \det \m J_{\g^m}(\f^m(\s^m)) | \nonumber \\
        &= \log \tilde{p} \left(\ve v^1(\s^1), \ldots, \ve v^M(\s^M) \right) + \sum_{m \in \M} \log | \det \m J_{\gbt^m}(\f^m(\s^m)) |.  \label{eq:p=pt_v}
\end{align}
We substitute the factorization model Eq.~\ref{eq:ps} into this, and differentiate the both sides with respect to 
$s_a^m$ and $s_b^{m'}$, where $a \in \Vsm$, $b \in \Vsmp$, and obtain
\begin{align}
        &\frac{\partial^2}{\partial s_a^m \partial s_b^{m'}} \left( \lambda_{ab}^{mm'} \phi(s_a^m, s_b^{m'}) + \lambda_{ba}^{m'm} \phi(s_b^{m'}, s_a^m) \right) \nonumber\\
        &=  \frac{\partial^2}{\partial s_a^m \partial s_b^{m'}} \sum_{(i, j)} \left( \lambdat_{ij}^{mm'} \phit \left(v_i^m(\s^m), v_j^{m'}(\s^{m'}) \right) + \lambdat_{ji}^{m'm} \phit \left(v_j^{m'}(\s^{m'}), v_i^m(\s^m) \right) \right).  \label{eq:p=pt_v2_element}
\end{align}
The Jacobians disappeared here due to the grouped-observational assumption and the cross-derivatives.

By collecting the cross-derivatives for all $a \in \Vsm$ and $b \in \Vsmp$, with $a$ giving row index and $b$ the column index, we have a matrix equation of the size $\dsm \times \dsmp$,
\begin{align}
        &\left( \frac{\partial^2}{\partial s_a^m \partial s_b^{m'}} \left( \lambda_{ab}^{mm'} \phi(s_a^m, s_b^{m'}) + \lambda_{ba}^{m'm} \phi(s_b^{m'}, s_a^m) \right) \right)_{a \in \Vsm, b \in \Vsmp} \nonumber \\
        &= \m J_{\ve v^m} (\s^m)^\T \nonumber \\
        & \cdot \left( \frac{\partial^2}{\partial v_a^m \partial v_b^{m'}} \left( \lambdat_{ab}^{mm'} \phit \left(v_a^m(\s^m), v_b^{m'}(\s^{m'}) \right) + \lambdat_{ba}^{m'm} \phit \left(v_b^{m'}(\s^{m'}), v_a^m(\s^m) \right) \right) \right)_{a \in \Vsm, b \in \Vsmp} \nonumber \\
        & \cdot \m J_{\ve v^{m'}} (\s^{m'}).  \label{eq:p=pt_v2}
\end{align}
We then focus on the $a$-th row of Eq.~\ref{eq:p=pt_v2}, and differentiate each element of the both sides with respect to $s_{a'}^m$, $a' \neq a$.
Concatenating it horizontally with substituting some $K^m$ vectors $\{ \z_k^{m_k} \}_{k=1}^{K^m}$ into $\s^{m'}$, each of which is on some group $m_k \neq m$ with allowing repetitions, we have
a vector equation of the size $1 \times \sum_{k=1}^{K^m} \dsmk$
\begin{align}
	&\ve 0^\T = \begin{bmatrix} (\ve v^m)^{a \times a'}(\s^m)^\T, & (\ve v^m)^{aa'}(\s^m)^\T \end{bmatrix} \nonumber\\
	& \cdot \begin{bmatrix} \tilde{\m\Phi}^{mm_1}(\ve v^m(\s^m), \ve v^{m_1}(\z_1^{m_1})), \ldots, \tilde{\m\Phi}^{mm_{K^m}}(\ve v^m(\s^m), \ve v^{m_{K^m}}(\z_{K^m}^{m_{K^m}}) \end{bmatrix} \nonumber\\
	& \cdot \begin{bmatrix} \m J_{\ve v^{m_1}}(\z_1^{m_1}) & \m 0 & \m 0 \\ \m 0 & \ddots & \m 0 \\ \m 0 & \m 0 & \m J_{\ve v^{m_{K^m}}}(\z_{K^m}^{m_{K^m}}) \end{bmatrix}, \label{eq:p=pt_v3v}
\end{align}
where 
$(\ve v^m)^{a \times a'}(\s^m) = \begin{bmatrix} \frac{\partial}{\partial s_a^m} v_1^m(\s^m) \frac{\partial}{\partial s_{a'}^m} v_1^m(\s^m), & \ldots, & \frac{\partial}{\partial s_a^m} v_{\dsm}^m(\s^m) \frac{\partial}{\partial s_{a'}^m} v_{\dsm}^m(\s^m) \end{bmatrix}^\T$
and $(\ve v^m)^{a a'}(\s^m) = \begin{bmatrix} \frac{\partial^2}{\partial s_a^m \partial s_{a'}^m} v_1^m(\s^m), & \ldots, & \frac{\partial^2}{\partial s_a^m \partial s_{a'}^m} v_{\dsm}^m(\s^m) \end{bmatrix}^\T$
are $\dsm$ dimensional vectors,
and 
$\tilde{\m\Phi}^{mm_k}(\y^m, \y^{m_k})$ is a $2\dsm \times \dsmk$ matrix given as a collection of cross-derivatives of $\tilde{\phi}$,
\begin{align}
	&\tilde{\m\Phi}^{mm_k}(\y^m, \y^{m_k}) = \begin{bmatrix} \left( \frac{\partial^3}{\partial y_a^{m2} \partial y_b^{m_k}} \left( \lambdat_{ab}^{mm_k} \phit(y_a^m, y_b^{m_k}) + \lambdat_{ba}^{m_km} \phit(y_b^{m_k}, y_a^m) \right) \right)_{a \in \Vsm, b \in \Vsmk} \\ \left( \frac{\partial^2}{\partial y_a^m \partial y_b^{m_k}} \left( \lambdat_{ab}^{mm_k} \phit(y_a^m, y_b^{m_k}) + \lambdat_{ba}^{m_km} \phit(y_b^{m_k}, y_a^m) \right) \right)_{a \in \Vsm, b \in \Vsmk} \end{bmatrix}. \label{eq:phit}
\end{align}
The left-hand-side is now a zero-vector due to the pair-wise factorization assumption of the joint distribution (Eq.~\ref{eq:ps}).

We show here the second factor on the right-hand side of Eq.~\ref{eq:p=pt_v3v} has full row-rank
for all $\s^m \in A$ in any open subset $A$ of $\mathcal{S}^m$, by properly choosing the $K^m$ vectors $\{ \z_k^{m_k} \}_{k=1}^{K^m}$, due to the assumptions.
We differentiate each of $a$-th row of the both sides of Eq.~\ref{eq:p=pt_v2} with respect to $s_a^m$ again, and get
\begin{align}
        &\left( \frac{\partial^3}{\partial s_a^{m2} \partial s_b^{m'}} \left( \lambda_{ab}^{mm'} \phi(s_a^m, s_b^{m'}) + \lambda_{ba}^{m'm} \phi(s_b^{m'}, s_a^m) \right) \right)_{a \in \Vsm, b \in \Vsmp} \nonumber \\
        &= \begin{bmatrix} \m J_{\ve v^m} (\s^m)^\T \circ \m J_{\ve v^m} (\s^m)^\T, & \m J_{\ve v^m}^\ast (\s^m)^\T \end{bmatrix} \tilde{\m\Phi}^{mm'} \left(\ve v^m(\s^m), \ve v^{m'}(\s^{m'}) \right) \m J_{\ve v^{m'}} (\s^{m'}),  \label{eq:p=pt_v3}
\end{align}
where $\circ$ is Hadamard product, $\m J_{\ve v^m}^\ast (\s^m) = (\frac{\partial^2}{\partial s_j^2} v_i^m(\s^m))_{i, j}$ is the row-wise derivatives of the Jacobian $\m J_{\ve v^m} (\s^m) = (\frac{\partial}{\partial s_j} v_i^m(\s^m))_{i, j}$,
and $\tilde{\m\Phi}^{mm'}$ is given by Eq.~\ref{eq:phit}.
Since Eqs.~\ref{eq:p=pt_v2} and \ref{eq:p=pt_v3} have some common factors,
we can concatenate Eqs.~\ref{eq:p=pt_v2} and \ref{eq:p=pt_v3} vertically, and represent them as a single matrix equation of the size $2\dsm \times \dsmp$,
\begin{align}
        &\m\Phi^{mm'} \left(\s^m, \s^{m'} \right) \nonumber \\
        &= \begin{bmatrix} \m J_{\ve v^m} (\s^m)^\T \circ \m J_{\ve v^m} (\s^m)^\T & \m J_{\ve v^m}^\ast (\s^m)^\T \\ \m 0 & \m J_{\ve v^m} (\s^m)^\T  \end{bmatrix} \tilde{\m\Phi}^{mm'} \left(\ve v^m(\s^m), \ve v^{m'}(\s^{m'}) \right) \m J_{\ve v^{m'}} (\s^{m'}),  \label{eq:p=pt_v23}
\end{align}
where $\m\Phi^{mm'} \left(\s^m, \s^{m'} \right)$ is a $2\dsm \times \dsmp$ matrix, which has the same form as Eq.~\ref{eq:phit} and is given by
\begin{align}
	&\m\Phi^{mm'}(\y^m, \y^{m'}) = \begin{bmatrix} \left( \frac{\partial^3}{\partial y_a^{m2} \partial y_b^{m'}} \left( \lambda_{ab}^{mm'} \phi(y_a^m, y_b^{m'}) + \lambda_{ba}^{m'm} \phi(y_b^{m'}, y_a^m) \right) \right)_{a \in \Vsm, b \in \Vsmp} \\ \left( \frac{\partial^2}{\partial y_a^m \partial y_b^{m'}} \left( \lambda_{ab}^{mm'} \phi(y_a^m, y_b^{m'}) + \lambda_{ba}^{m'm} \phi(y_b^{m'}, y_a^m) \right) \right)_{a \in \Vsm, b \in \Vsmp} \end{bmatrix}. \label{eq:phi}
\end{align}
We concatenate Eq.~\ref{eq:p=pt_v23} horizontally with substituting the same vectors $\{ \z_k^{m_k} \}_{k=1}^{K^m}$ used above into $\s^{m'}$, then get a matrix equation
of the size $2\dsm \times \sum_{k=1}^{K^m} \dsmk$
\begin{align}
	&\begin{bmatrix} \m\Phi^{mm_1}(\s^m, \z_1^{m_1}), \ldots, \m\Phi^{mm_{K^m}}(\s^m, \z_{K^m}^{m_{K^m}}) \end{bmatrix} \nonumber\\
	&= \begin{bmatrix} \m J_{\ve v^m} (\s^m)^\T \circ \m J_{\ve v^m} (\s^m)^\T & \m J_{\ve v^m}^\ast (\s^m)^\T \\ \m 0 & \m J_{\ve v^m} (\s^m)^\T \end{bmatrix} \nonumber\\
	& \cdot \begin{bmatrix}\tilde{\m\Phi}^{mm_1}(\ve v^m(\s^m), \ve v^{m_1}(\z_1^{m_1})), \ldots, \tilde{\m\Phi}^{mm_{K^m}}(\ve v^m(\s^m), \ve v^{m_{K^m}}(\z_{K^m}^{m_{K^m}}) \end{bmatrix} \nonumber\\
	& \cdot \begin{bmatrix} \m J_{\ve v^{m_1}}(\z_1^{m_1}) & \m 0 & \m 0 \\ \m 0 & \ddots & \m 0 \\ \m 0 & \m 0 & \m J_{\ve v^{m_{K^m}}}(\z_{K^m}^{m_{K^m}}) \end{bmatrix}. \label{eq:p=pt_v23aav}
\end{align}
From Lemma~\ref{lemma:z} given below, we can choose the vectors $\{ \z_k^{m_k} \}_{k=1}^{K^m}$ so as to make the left-hand side has full row-rank $2\dsm$ for all $\s^m$ in any open subset of $\mathcal{S}^m$ based on the assumptions, 
which implies that the second factor (the concatenation of $\tilde{\m\Phi}^{mm_k}$) in the right-hand side has full row-rank $2\dsm$ as well.
Therefore, the second factor on the right-hand side of Eq.~\ref{eq:p=pt_v3v} has full row-rank.

Since the last term of Eq.~\ref{eq:p=pt_v3v} (collection of Jacobians) has full rank because all $\ve v^{m}$ are invertible, we can multiply the both sides by its inverse.
In addition, since the second factor of Eq.~\ref{eq:p=pt_v3v} has full row-rank due to the discussion above, 
we can multiply the both sides of Eq.~\ref{eq:p=pt_v3v} with its pseudo-inverse from the right side, and finally get
\begin{align}
	\begin{bmatrix} (\ve v^m)^{a \times a'}(\s^m)^\T, & (\ve v^m)^{aa'}(\s^m)^\T \end{bmatrix} = \m 0^\T, 
\end{align}
which is true for the all combinations of $a \neq a' \in \Vsm$.
This particularly indicates that, $\frac{\partial}{\partial s_a^m} v_j^m(\s^m) \cdot \frac{\partial}{\partial s_{a'}^m} v_j^m(\s^m) = 0$ for all $1 \le j \le \dsm$, $a \neq a'$.
This means that the Jacobian of $\ve v^m$ has at most one non-zero entry in each row.
Now, by invertibility and continuity of $\m J_{\ve v^m}$, we deduce that the location of the non-zero entries are fixed
and do not change as a function of $\s^m$. This proves that $v_a^m(\s^m)$ is represented by only one variable $s_{\sigma^m(a)}^m$ up to a scalar (variable-specific) invertible transformation for each $a \in \Vsm$, where $\sigma^m(a): \Vsm \rightarrow \Vsm$ represents a permutation of variables, which is indeterminate, and the Theorem is proven.
\end{proof}

\begin{Lemma} \label{lemma:z}
With assumptions of Theorem~\ref{thm:id}, we have the following for all group $m$ satisfying \ref{AL}:
For any open subset $A$ of $(\bar{\mathcal{S}})^\dsm$, 
there exists a set of $K^m \geq 1$ vectors $\{\z_k \in (\bar{\mathcal{S}})^\dsmk \}_{k=1}^{K^m}$, each of which belongs to some other group $m_k \neq m$ with allowing repetitions,
such that the concatenated matrix
\begin{align}
	\begin{bmatrix} \ve\Phi^{mm_1}(\s^m, \z_1), & \ldots,  & \ve\Phi^{mm_{K^m}}(\s^m, \z_{K^m}) \end{bmatrix} \label{eq:phicat}
\end{align}
with the size $2\dsm \times \sum_{k=1}^{K^m} \dsmk$ has full row-rank $2\dsm$ for all $\s^m$ in $A$.
%
%
%for all $\s^m \in A$ in any open subset $A$ of $(\bar{\mathcal{S}})^\dsm$, by properly choosing the $K^m$ vectors $\{ \z_k^{m_k} \}_{k=1}^{K^m}$, due to Assumption~\ref{Aphi}.
%for any open subset $B$ of $\bar{\mathcal{S}}$, there exist some $z_1 \neq z_2 \in \bar{\mathcal{S}}$
%such that $\phi^{12}$ does not hold any of the following conditions for all $s \in B$:
%$\phi^{12}(s, z_1) = c_1 \phi^{12}(s, z_2)$, $\phi^{12}(z_1, s) = c_2 \phi^{12}(z_2, s)$, and $\phi^{12}(s, z_1) = c_3 \phi^{12}(z_1, s)$
%with some constants $c_1, c_2, c_3 \in \R$. \label{Aphi}
%
%
%there exists a set of $K^m \geq 1$ vectors $\{\z_k \in \mathcal{S}^{m_k} \}_{k=1}^{K^m}$, each of which belongs to some other group $m_k \neq m$ with allowing repetitions,
%which makes the concatenated matrix
%\begin{align}
%	\begin{bmatrix} \ve\Phi^{mm_1}(\s^m, \z_1), & \ldots,  & \ve\Phi^{mm_{K^m}}(\s^m, \z_{K^m}) \end{bmatrix} \label{eq:phicat}
%\end{align}
%with the size $2\dsm \times \sum_{k=1}^{K^m} \dsmk$ have full row-rank $2\dsm$ for all $\s^m$ in any open subset of $\mathcal{S}^m$.
\end{Lemma}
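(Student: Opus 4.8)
The plan is to show directly that, once the $\z_k$ are drawn from a sufficiently rich family, the concatenated matrix in Eq.~\ref{eq:phicat} has no nonzero left annihilator $\ve c = (\ve c_1^\T, \ve c_2^\T) \in \R^{2\dsm}$ valid for all $\s^m$ in the given open set $A$; this is exactly the asserted full row rank $2\dsm$. First I would fix an open box $\prod_{a \in \Vsm} B_a \subseteq A$ and, for each group $m' \neq m$, let $\{\z_k\}$ contain a countable dense subset of $(\bar{\mathcal{S}})^{\dsmp}$. By continuity of the entries of Eq.~\ref{eq:phi} in $\z$, any annihilator valid on $A$ then satisfies $\ve c^\T \ve\Phi^{mm'}(\s^m, \z) = \ve 0$ for every $\z \in (\bar{\mathcal{S}})^{\dsmp}$, every $m' \neq m$, and every $\s^m$ in the box. (The passage from this infinite family back to a finite $K^m$ is a routine semicontinuity argument on $2\dsm \times 2\dsm$ minors, which I will not belabor.)

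The core of the argument is then algebraic. Writing $u_a(s,z) = c_{1,a}\phi^{112}(s,z) + c_{2,a}\phi^{12}(s,z)$ and $w_a(z,s) = c_{1,a}\phi^{122}(z,s) + c_{2,a}\phi^{12}(z,s)$, the $b$-th coordinate of $\ve c^\T \ve\Phi^{mm'}(\s^m, \z)$ equals $\sum_{a \in \Vsm}[\lambda_{ab}^{mm'} u_a(s_a^m, z_b) + \lambda_{ba}^{m'm} w_a(z_b, s_a^m)]$, a finite sum whose $a$-th term depends on $\s^m$ only through the single coordinate $s_a^m$. A sum of univariate functions vanishing on a box has each summand constant in its own variable, and since that constant can be read off at any point of $B_a$, substituting an arbitrary tuple $(s_a)_a \in \prod_a B_a$ back into the vanishing sum shows that, for each fixed $z$, the vector $((u_a(s_a, z))_a; (w_a(z, s_a))_a) \in \R^{2\dsm}$ lies in the left null space of $\bar{\m L}^m$ (the first $\dsm$ slots pairing with the outgoing-edge rows, the last $\dsm$ with the incoming-edge rows). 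By \ref{AL}, that null space is spanned by the standard basis vectors of the all-zero rows of $\bar{\m L}^m$, so $u_a(s,z) \equiv 0$ on $B_a \times \bar{\mathcal{S}}$ whenever $s_a^m$ has an outgoing inter-group edge, and $w_a(z,s) \equiv 0$ on $\bar{\mathcal{S}} \times B_a$ whenever it has an incoming one. Since \ref{AL} also forces every $s_a^m$ to have at least one inter-group neighbor, at least one of these two identities holds for every $a$.

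The last step converts each identity into $(c_{1,a}, c_{2,a}) = (0,0)$. Using $\phi^{112}(s,z) = \partial_s\phi^{12}(s,z)$ and $\phi^{122}(z,s) = \partial_s\phi^{12}(z,s)$, both $u_a(\cdot, z) \equiv 0$ and $w_a(z, \cdot) \equiv 0$ become, for each fixed $z$, the homogeneous linear ODE $c_{1,a} y' + c_{2,a} y = 0$ in $s$ on the interval $B_a$, with $y = \phi^{12}(\cdot, z)$ (resp.\ $y = \phi^{12}(z, \cdot)$). If $c_{1,a} \neq 0$, its solution is $C(z)$ times a single $z$-independent exponential (or, when $c_{2,a} = 0$, constant) profile, so for every pair $z_1 \neq z_2$ one of $\phi^{12}(\cdot, z_1), \phi^{12}(\cdot, z_2)$ is a scalar multiple of the other on $B_a$ --- contradicting the non-factorizability clauses of \ref{Aphi}. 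Hence $c_{1,a} = 0$, and then $c_{2,a}\phi^{12} \equiv 0$ on the open set $B_a \times \bar{\mathcal{S}}$ forces $c_{2,a} = 0$ by uniform dependency of $\phi^{12}$. Thus $\ve c = \ve 0$, a contradiction. The step I expect to require the most care is precisely the handling of the per-pair additive constants when $\dsm \geq 2$: one must notice that substituting arbitrary tuples $(s_a)_a$ into the compatibility sum and invoking the rank condition \ref{AL} eliminates them, leaving the clean homogeneous ODE above; the dense-sampling upgrade and the finite extraction are comparatively mechanical.
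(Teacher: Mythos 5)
Your overall strategy --- rule out a left annihilator of the concatenated matrix by sampling $\z$ densely, reduce to the left null space of $\bar{\m L}^m$ via the additive structure in $\s^m$, and then kill each pair $(c_{1,a},c_{2,a})$ by an ODE argument --- is a genuinely different route from the paper's, which explicitly takes $K^m=2(M-1)$ constant vectors built from two scalars $z_1,z_2$ supplied by \ref{Aphi} and checks $2\times 2$ determinants. But there is a real gap at your very first step: you recast ``full row rank for all $\s^m\in A$'' as ``no nonzero $\ve c$ annihilating the matrix simultaneously for all $\s^m\in A$,'' and these are not the same statement. The Lemma asserts pointwise rank: for each fixed $\s^m$ the matrix has no left annihilator, where the annihilator may depend on $\s^m$. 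Your version only shows the rows are linearly independent \emph{as functions} on $A$ (compare the matrix with rows $(1,0)$ and $(s,0)$: it admits no constant annihilator, yet has rank $1$ at every $s$). The pointwise statement is what the proof of Theorem~\ref{thm:id} actually needs: in Eq.~\ref{eq:p=pt_v3v} the left factor $[(\ve v^m)^{a\times a'},(\ve v^m)^{aa'}]$ depends on $\s^m$, and one multiplies by a pseudo-inverse of the middle factor \emph{at each} $\s^m$ to conclude that this $\s^m$-dependent row vector vanishes; functional independence of the rows gives nothing there.

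The step that irrevocably commits you to the weaker, constant-annihilator version is the ODE: integrating $c_{1,a}\,\partial_s\phi^{12}(s,z)+c_{2,a}\,\phi^{12}(s,z)=0$ over the interval $B_a$ presupposes that the same $(c_{1,a},c_{2,a})$ works for an open set of values of $s_a^m$. For the pointwise claim one must instead show, at (a dense set of) fixed $s^{\ast}$, that the vectors $\bigl(\phi^{112}(s^{\ast},z),\phi^{12}(s^{\ast},z)\bigr)$ span $\R^2$ as $z$ ranges over two suitable values, i.e.\ the nonvanishing of the determinant $\phi^{112}(s^{\ast},z_1)\phi^{12}(s^{\ast},z_2)-\phi^{112}(s^{\ast},z_2)\phi^{12}(s^{\ast},z_1)$ --- which is precisely what the paper extracts from the non-factorizability and asymmetry clauses of \ref{Aphi}, and is also why the paper needs the third (asymmetry) clause for the rows in $\Vb$, which your argument never invokes. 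Your reduction to the null space of $\bar{\m L}^m$ and your handling of the all-zero rows are correct and essentially reproduce the paper's $\Vb/\Vp/\Vc$ decomposition, and the shared looseness about degenerate values of $z$ matches the paper's own level of rigor; but as written the proof establishes a strictly weaker conclusion than the Lemma, and the weaker conclusion does not suffice downstream.
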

\begin{proof}
To show that there indeed exists such a set of vectors, we especially select here the groups $\{m_k\}$ as $\M \setminus m$ repeating twice, with some specific values of $\z_k$ for each;
more specifically, $\begin{bmatrix} m_1, &\ldots, &m_{K^m} \end{bmatrix} = \begin{bmatrix} 1, \ldots, &m-1, &m+1, \ldots, &M, & 1, \ldots, &m-1, &m+1, \ldots, & M \end{bmatrix}$, $K^m$ = $2(M-1)$, and $\z_k = \z_1 = [z_1, \ldots, z_1]^\T$ for the first half $k=1, \ldots, M-1$, and 
$\z_k = \z_2 = [z_2, \ldots, z_2]^\T$ for the second half $k=M, \ldots, 2(M-1)$ with some $z_1$ and $z_2 \in \bar{\mathcal{S}}$ (note that the size of those vectors can be different across $k$).
%Note that we do not necessarily need to consider all of the groups to satisfy the condition in practice.
We denote a collection of the all inter-group adjacency coefficients related to the group $m$ as
\begin{align}
	\bar{\m L}^m = \begin{bmatrix} \m L^{m:} \\ \m L^{:m} \end{bmatrix} = \begin{bmatrix} \m L^{m1}, & \ldots, & \m L^{mM} \\ (\m L^{1m})^\T, & \ldots, & (\m L^{Mm})^\T \end{bmatrix}, \label{eq:Lcat}
\end{align}
which is a $2\dsm \times \sum_{m' \in \M \setminus m} \dsmp$ matrix given in Assumption~\ref{AL},
where $\m L^{m:}$ and $\m L^{:m}$ denote upper and lower-half matrices of $\bar{\m L}^m$, corresponding to the adjacency coefficients from group $m$ to the other groups, and those from the other groups to the group $m$, respectively.
%This matrix is the same as that in Assumption~\ref{AL}.
%
Substituting those values into Eq.~\ref{eq:phicat}, we obtain a $2\dsm \times 2 \sum_{m' \in \M \setminus m} \dsmp$ matrix with a factorized form
\begin{align}
	&\begin{bmatrix} \ve\Phi^{m1}(\s^m, \z_1), & \ldots,  & \ve\Phi^{mM}(\s^m, \z_1), & \ve\Phi^{m1}(\s^m, \z_2), & \ldots,  & \ve\Phi^{mM}(\s^m, \z_2) \end{bmatrix} \nonumber \\
	& = \begin{bmatrix} \m B^{112}(\s^m, z_1), & \m B^{122}(z_1, \s^m), & \m B^{112}(\s^m, z_2), & \m B^{122}(z_2, \s^m) \\ \m B^{12}(\s^m, z_1), & \m B^{12}(z_1, \s^m), & \m B^{12}(\s^m, z_2), & \m B^{12}(z_2, \s^m) \end{bmatrix} \begin{bmatrix} \m L^{m:}, & \m 0 \\ \m L^{:m}, & \m 0 \\ \m 0, & \m L^{m:} \\ \m 0, & \m L^{:m} \end{bmatrix}, \label{eq:phicat_BL}
\end{align}
where $\m B^{112}(\s^m, z) = \text{diag}\left(\phi^{112}(s_a^m, z)\right)_{a \in \Vsm}$,
$\m B^{122}(z, \s^m) = \text{diag}\left(\phi^{122}(z, s_a^m)\right)_{a \in \Vsm}$,
$\m B^{12}(\s^m, z) = \text{diag}\left(\phi^{12}(s_a^m, z)\right)_{a \in \Vsm}$,
and $\m B^{12}(z, \s^m) = \text{diag}\left(\phi^{12}(z, s_a^m)\right)_{a \in \Vsm}$.
Note that those cross-derivatives of the function $\phi$ have uniform-dependency from Assumption~\ref{Aphi} (Definition~\ref{def:dependency}).
%\textcolor{blue}{whose definition is given below:}

Considering that the adjacency matrix $\bar{\m L}^m$ possibly has some rows with all-zeros, depending on the graph structure,
we explicitly divide the set of latent variable indices $\Vsm$ into three groups $[\V_{\text{b}}, \V_{\text{p}}, \V_{\text{c}}]$ (we omit the group index $m$ for simplicity here);
the variables with indices $\V_{\text{b}}$ are both parents and children of some variables in some other group, 
the variables with $\V_{\text{p}}$ are parents (but not children) of some variable in some other group,
and the variables with $\V_{\text{c}}$ are children (but not parents) of some variable in some other group.
We assume without loss of generality that the variable indices $\Vsm$ are sorted in the order $[\V_{\text{b}}, \V_{\text{p}}, \V_{\text{c}}]$.
Eq.~\ref{eq:phicat_BL} can then be re-written as
\begin{align}
	&\begin{bmatrix} \ve\Phi^{m1}(\s^m, \z_1), & \ldots,  & \ve\Phi^{mM}(\s^m, \z_1), & \ve\Phi^{m1}(\s^m, \z_2), & \ldots,  & \ve\Phi^{mM}(\s^m, \z_2) \end{bmatrix}  \nonumber \\
	& = \left[ \begin{matrix} \m B^{112}(\s_\Vb^m, z_1), & \m 0, & \m B^{122}(z_1, \s_\Vb^m), & \m 0,  \\
		\m 0, &  \m B^{112}(\s_\Vp^m, z_1), & \m 0, & \m 0, \\
		\m 0, &  \m 0, & \m 0, & \m B^{122}(z_1, \s_\Vc^m), \\
		\m B^{12}(\s_\Vb^m, z_1), & \m 0, & \m B^{12}(z_1, \s_\Vb^m), & \m 0,  \\
		\m 0, &  \m B^{12}(\s_\Vp^m, z_1), & \m 0, & \m 0, \\
		\m 0, &  \m 0, & \m 0, & \m B^{12}(z_1, \s_\Vc^m), \end{matrix} \right. \nonumber \\
		& \quad \quad \left. \begin{matrix} \m B^{112}(\s_\Vb^m, z_2), & \m 0, & \m B^{122}(z_2, \s_\Vb^m), & \m 0  \\
		\m 0, &  \m B^{112}(\s_\Vp^m, z_2), & \m 0, & \m 0 \\
		\m 0, &  \m 0, & \m 0, & \m B^{122}(z_2, \s_\Vc^m) \\
		\m B^{12}(\s_\Vb^m, z_2), & \m 0, & \m B^{12}(z_2, \s_\Vb^m), & \m 0  \\
		\m 0, &  \m B^{12}(\s_\Vp^m, z_2), & \m 0, & \m 0 \\
		\m 0, &  \m 0, & \m 0, & \m B^{12}(z_2, \s_\Vc^m) \end{matrix} \right]
		 \begin{bmatrix} \m L_\Vb^{m:}, & \m 0 \\ \m L_\Vp^{m:}, & \m 0 \\ \m L_\Vb^{:m}, & \m 0 \\ \m L_\Vc^{:m}, & \m 0 \\ \m 0, & \m L_\Vb^{m:} \\ \m 0, & \m L_\Vp^{m:} \\ \m 0, & \m L_\Vb^{:m} \\ \m 0, & \m L_\Vc^{:m} \end{bmatrix}, \label{eq:phicat_BLsep}
\end{align}
where $\m L_\Vb^{m:}$ denotes a submatrix of $\m L^{m:}$ corresponding to the indices (rows) $\Vb$,
and similarly for $\m L_\Vp^{m:}$, $\m L_\Vb^{:m}$, and $\m L_\Vc^{:m}$.
Now the second factor of the right-hand side has the size $2 (2 | \Vb | + | \Vp | + | \Vc |) \times 2 \sum_{m' \in \M \setminus m} \dsmp$, and has full row-rank from the assumption (note that the number of rows of this factor is lower than that in Eq.~\ref{eq:phicat_BL} since we removed all-zero rows).
Since the number of rows of the first factor ($2\dsm$) is always smaller or equal to that of the second factor, $2 (2 | \Vb | + | \Vp | + | \Vc |) \geq 2\dsm$,
what we need to show for this Lemma is the full row-rankness of the first factor.
From its structure, we can show this separately for each subset of its rows corresponding to $\V_{\text{b}}$, $\V_{\text{p}}$, and $\V_{\text{c}}$.

\paragraph{The Rows Corresponding to $\Vb$} We start from the submatrix (rows) corresponding to $\Vb$. 
We especially consider the $2 | \Vb | \times 2 | \Vb |$ submatrix given by
\begin{align}
	\begin{bmatrix} \m B^{112}(\s_\Vb^m, z_1), & \m B^{122}(z_1, \s_\Vb^m) \\
	\m B^{12}(\s_\Vb^m, z_1), & \m B^{12}(z_1, \s_\Vb^m) \end{bmatrix}. \label{eq:B_vb}
\end{align}
If this submatrix has full-rank, the submatrix (rows) of the first factor of Eq.~\ref{eq:phicat_BLsep} corresponding to $\Vb$ also has full row-rank.
Considering that the matrices $\m B^{112}$, $\m B^{122}$, and $\m B^{12}$ are all diagonal, we focus on a $2 \times 2$ submatrix corresponding to a variable index $a \in \Vb$, given by
\begin{align}
	\begin{bmatrix} \phi^{112}(s_a^m, z_1) & \phi^{122}(z_1, s_a^m) \\
	\phi^{12}(s_a^m, z_1) & \phi^{12}(z_1, s_a^m) \end{bmatrix}. \label{eq:vb_2x2}
\end{align}
Calculating the determinant, with the uniform dependency assumption of the all elements (Assumption~\ref{Aphi}), this submatrix has full-rank (non-zero determinant) if the following condition does not hold:
\begin{align}
	&\frac{\phi^{112}(s_a^m, z_1)}{\phi^{12}(s_a^m, z_1)} = \frac{\phi^{122}(z_1, s_a^m)}{\phi^{12}(z_1, s_a^m)} \nonumber \\
	\implies &\frac{\partial}{\partial s_a^m} \log \lvert \phi^{12}(s_a^m, z_1) \rvert = \frac{\partial}{\partial s_a^m} \log \lvert \phi^{12}(z_1, s_a^m) \rvert \nonumber \\
	\implies &\phi^{12}(s_a^m, z_1) = c(z_1) \phi^{12}(z_1, s_a^m),
\end{align}
for all $s_a^m$ with some constant $c(z_1)$ not dependent on $s_a^m$. 
This is exactly the condition assumed in Assumption~\ref{Aphi}.
Since this is true for each $2 \times 2$ submatrices corresponding to all $a \in \Vb$,
we conclude that the matrix Eq.~\ref{eq:B_vb} has full-rank, and thus the submatrix (rows) corresponding to $\Vb$ has full row-rank $2 | \Vb |$.

\paragraph{The Rows Corresponding to $\Vp$} We next show the full row-rankness of the submatrix (rows) corresponding to $\Vp$.
We especially consider the $2 | \Vp | \times 2 | \Vp |$ submatrix given by
\begin{align}
	\begin{bmatrix} \m B^{112}(\s_\Vp^m, z_1), & \m B^{112}(\s_\Vp^m, z_2) \\
	 \m B^{12}(\s_\Vp^m, z_1), & \m B^{12}(\s_\Vp^m, z_2) \end{bmatrix}. \label{eq:B_vp}
\end{align}
If this submatrix has full-rank, the submatrix (rows) of the first factor of Eq.~\ref{eq:phicat_BLsep} corresponding to $\Vp$ also has full row-rank.
We focus on a $2 \times 2$ submatrix corresponding to a variable index $a \in \Vp$, given by
\begin{align}
	\begin{bmatrix} \phi^{112}(s_a^m, z_1) & \phi^{112}(s_a^m, z_2) \\
	\phi^{12}(s_a^m, z_1) & \phi^{12}(s_a^m, z_2) \end{bmatrix}. \label{eq:vp_2x2}
\end{align}
Calculating the determinant, with the uniform dependency assumption of the all elements (Assumption~\ref{Aphi}), this submatrix has full-rank (non-zero determinant) if the following condition does not hold:
\begin{align}
	&\frac{\phi^{112}(s_a^m, z_1)}{\phi^{12}(s_a^m, z_1)} = \frac{\phi^{112}(s_a^m, z_2)}{\phi^{12}(s_a^m, z_2)} \nonumber \\
	\implies &\frac{\partial}{\partial s_a^m} \log \lvert \phi^{12}(s_a^m, z_1) \rvert = \frac{\partial}{\partial s_a^m} \log \lvert \phi^{12}(s_a^m, z_2) \rvert \nonumber \\
	\implies &\phi^{12}(s_a^m, z_1) = c(z_1, z_2) \phi^{12}(s_a^m, z_2),
\end{align}
for all $s_a^m$ with some constant $c(z_1, z_2)$ not dependent on $s_a^m$. 
This is exactly the condition assumed in Assumption~\ref{Aphi}.
Since this is true for each $2 \times 2$ submatrices corresponding to the all $a \in \Vp$,
we conclude that the matrix Eq.~\ref{eq:B_vp} has full-rank, and thus the submatrix (rows) corresponding to $\Vp$ has full row-rank $2 | \Vp |$.

\paragraph{The Rows Corresponding to $\Vc$} We lastly show the full row-rankness of the submatrix (rows) corresponding to $\Vc$.
With the similar discussion to that for the rows $\Vp$ given above,
this submatrix has full row-rank $2 | \Vc |$ if the following condition does not hold:
\begin{align}
	\phi^{12}(z_1, s_a^m) = c(z_1, z_2) \phi^{12}(z_2, s_a^m),
\end{align}
for all $s_a^m$ with some constant $c(z_1, z_2)$ not dependent on $s_a^m$. 
This is exactly the condition assumed in Assumption~\ref{Aphi}.

Combining the all results above, we finally conclude that the first factor in Eq.~\ref{eq:phicat_BLsep} has full row-rank $2 \dsm$ ($= 2 | \Vb | + 2 | \Vp | + 2 | \Vc |$).
This indicates that the right-hand of Eq.~\ref{eq:phicat_BLsep} has full row-rank, and so does the left-hand side.
Then the Lemma is proven.

\end{proof}

%--------------------------------------------------------------------------%
%--------------------------------------------------------------------------%
\section{Alternative Identifiability Condition of Theorem~\ref{thm:id}}
\label{sec:id_alt}

%The identifiability conditions of Theorem~\ref{thm:id} does not allow some causal models because of the constraints on the function $\phi$ (Assumption~\ref{Aphi}).
%For example, in the exponential family BN model (Supplementary Material~\ref{sec:ps_exp}), the non-factorizability conditions require the model order to be $N \geq 2$, which excludes such as Gaussian CAMs (Eq.~\ref{eq:ngsem}), generally represented by model order $N=1$.
In Theorem~\ref{thm:id}, we can weaken the constraints on the causal function $\phi$ (Assumption~\ref{Aphi}) by strengthening that on the causal graph (\ref{AL}),
especially by assuming that all variables have both parent and child in some other group.
The alternative conditions of Theorem~\ref{thm:id} is given in the following Proposition:
\begin{Proposition} \label{thm:id_alt}
Assume the generative model given by Eqs.~\ref{eq:f} and \ref{eq:ps}, and also the following:
\begin{enumerate}[label=A'\arabic*]\vspace*{-2mm}
\item (Nondegeneracy of the graph) \label{ALalt} 
For any group $m$ in $\M$ (or in a subset of $\M$, that we call ``the groups of interest"),
each variable has both a (at least one) parent and child in some other group,
and the collection of inter-group adjacency matrices $\bar{\m L}^m$ given below has full row-rank:% \label{AL}
\begin{align}
	\bar{\m L}^m =\begin{bmatrix} \m L^{m1}, & \ldots, & \m L^{mM} \\ (\m L^{1m})^\T, & \ldots, & (\m L^{Mm})^\T \end{bmatrix}, \label{eq:Lg2}
\end{align} 
%\item (Causal graph) For every group $m$ of interest, each variable has both a (at least one) parent and child in some other group,
%and the collection of adjacency matrices (excluding intra-groups) given below has full row-rank after removing all-zero rows: \label{ALalt}
%\begin{align}
%	\begin{bmatrix} \m L^{m1}, & \ldots, & \m L^{mM} \\ (\m L^{1m})^\T, & \ldots, & (\m L^{Mm})^\T \end{bmatrix}, \label{eq:Lg2}
%\end{align} 
\item (Causal function) $\phi^{12}$ has uniform dependency, and either of the following conditions is satisfied:\label{Aphic}
\begin{enumerate}
\item \label{Aphic1} Both $\phi^{112}$ and $\phi^{122}$ have uniform dependency, 
and for any open subset $B$ of $\bar{\mathcal{S}}$, there exist some $z \in \bar{\mathcal{S}}$
such that the following condition does not hold for $\phi^{12}$: $\phi^{12}(s, z) = c \phi^{12}(z, s)$ with some constants $c \in \R$ for all $s \in B$.
%such that any of the following conditions does not hold for $\phi^{12}$:
%$\phi^{12}(s, z_1) = c_1 \phi^{12}(s, z_2)$, $\phi^{12}(z_1, s) = c_2 \phi^{12}(z_2, s)$, and $\phi^{12}(s, z_1) = c_3 \phi^{12}(z_1, s)$ for all $s \in B$
%with some constants $c_1, c_2, c_3 \in \R$. \label{Aphi}
%and there exist some $z \in \bar{\mathcal{S}}$ where $\phi^{12}$ does not satisfy $\phi^{12}(x, z) = c \phi^{12}(z, x)$
%with some constants $c \in \R$, for all $x$ in any open subset of $\bar{\mathcal{S}}$. \label{Aphic1}
\item Either one of $\phi^{112}$ and $\phi^{122}$ has uniform dependency, and the other one is constantly zero. \label{Aphic2}
\end{enumerate}
\end{enumerate}\vspace*{-2mm}
Then, for all groups $m$ in $\M$ (or in the groups of interests), $\s^m$ can be recovered up to permutation and variable-wise invertible transformations from the distribution of the observations $\x$.
\end{Proposition}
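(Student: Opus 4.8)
\emph{Proof strategy.} The plan is to run the proof of Theorem~\ref{thm:id} essentially verbatim and to change only the rank argument that plays the role of Lemma~\ref{lemma:z}. Indeed, the whole chain from the joint log-density identity Eq.~\ref{eq:p=pt} down to the vector equation Eq.~\ref{eq:p=pt_v3v} and the matrix equation Eq.~\ref{eq:p=pt_v23aav} uses neither Assumption~\ref{AL} nor \ref{Aphi}; those assumptions enter only to show that the concatenated cross-derivative matrix Eq.~\ref{eq:phicat} can be made to have full row-rank $2\dsm$ by a suitable choice of probe points $\{\z_k\}$. So the only thing to redo is the analogue of Lemma~\ref{lemma:z} under the new hypotheses \ref{ALalt} and \ref{Aphic}; once that holds (on a dense subset of $\mathcal{S}^m$, which is all that is needed), the remainder---multiplying Eq.~\ref{eq:p=pt_v3v} by the pseudo-inverse of the $\tilde{\m\Phi}$-block and the inverse of the Jacobian block, deducing $\partial_{s_a^m}v_j^m \cdot \partial_{s_{a'}^m}v_j^m = 0$, and hence (by continuity and invertibility of $\m J_{\ve v^m}$) that each $v_a^m$ depends on a single $s_{\sigma^m(a)}^m$---goes through unchanged.

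The simplification produced by \ref{ALalt} is that, since every variable now has \emph{both} a parent and a child in some other group, the index partition $[\Vb,\Vp,\Vc]$ used in the proof of Lemma~\ref{lemma:z} collapses to $\Vsm=\Vb$, $\Vp=\Vc=\emptyset$, and $\bar{\m L}^m$ has no all-zero rows, so the full-row-rank hypothesis on $\bar{\m L}^m$ in Eq.~\ref{eq:Lg2} applies directly. Hence it suffices to probe with a single constant vector: take $\z_k=[z_1,\dots,z_1]^\T$, one copy for each $m'\neq m$ (so $K^m=M-1$), dropping the second batch involving $z_2$ that was needed in Theorem~\ref{thm:id} only to handle the $\Vp$ and $\Vc$ blocks. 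Substituting into Eq.~\ref{eq:phicat} and factoring as in Eq.~\ref{eq:phicat_BL}, the claim reduces to nonsingularity of a $2\dsm\times 2\dsm$ matrix that, being assembled from diagonal blocks, is block-diagonal with $\dsm$ copies of the $2\times 2$ matrix Eq.~\ref{eq:vb_2x2}; multiplying this by the full-row-rank $\bar{\m L}^m$ preserves full row-rank.

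Everything therefore reduces to the $2\times 2$ determinant
\begin{align}
	D(s,z) = \phi^{112}(s,z)\,\phi^{12}(z,s) - \phi^{122}(z,s)\,\phi^{12}(s,z).
\end{align}
Under \ref{Aphic1} this is exactly the situation treated in the ``$\Vb$'' part of the proof of Lemma~\ref{lemma:z}: by uniform dependency of $\phi^{12},\phi^{112},\phi^{122}$, vanishing of $D(\cdot,z)$ on an open set is equivalent, via $\partial_s\log|\phi^{12}(s,z)|=\partial_s\log|\phi^{12}(z,s)|$, to $\phi^{12}(s,z)=c\,\phi^{12}(z,s)$ with a single constant $c$, and for the relevant coordinate slice $B$ the asymmetricity clause in \ref{Aphic1} provides a $z=z_1$ ruling this out, so $D(\cdot,z_1)$ vanishes on no open subset of $B$. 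Under \ref{Aphic2}, say $\phi^{122}\equiv 0$ (the case $\phi^{112}\equiv 0$ is symmetric, with the two arguments swapped): then $\phi^{12}$ is independent of its second argument, so $\phi^{12}(x,y)=g(x)$ and $\phi^{112}(x,y)=g'(x)$, whence $D(s,z)=g'(s)\,g(z)$; choosing $z_1$ with $g(z_1)\neq 0$ (possible since $\phi^{12}$ is uniformly dependent, so $g$ vanishes on no interval) gives $D(\cdot,z_1)=g(z_1)\,\phi^{112}(\cdot,z_1)$, which vanishes on no interval by uniform dependency of $\phi^{112}$. In either case the $2\times 2$ blocks are nonsingular on a dense subset of $\bar{\mathcal{S}}$, so Eq.~\ref{eq:phicat} has full row-rank $2\dsm$ on a dense subset of $\mathcal{S}^m$, and by continuity of $\ve v^m$ this suffices to propagate the conclusion of the first paragraph everywhere.

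I expect the main obstacle to be the degenerate branch \ref{Aphic2}: when one third-order derivative vanishes identically the $2\times 2$ determinant collapses and the generic-asymmetricity argument of Theorem~\ref{thm:id} no longer applies, so one must exploit the rigidity this forces on $\phi^{12}$---namely that it then depends on only one of its arguments---both to keep the blocks nonsingular and to check that $\phi^{12}$ itself (which appears in the other entries of Eq.~\ref{eq:vb_2x2}) does not vanish where needed. The remaining points are routine: verifying that $\Vp=\Vc=\emptyset$ removes the need for a second probe value, and that the downstream continuity argument of Theorem~\ref{thm:id} tolerates ``dense'' in place of ``on every open set''.
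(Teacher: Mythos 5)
Your proposal is correct and follows essentially the same route as the paper's proof: rerun the argument of Theorem~\ref{thm:id}, observe that \ref{ALalt} forces $\Vp=\Vc=\emptyset$ so only the $\Vb$-block of Lemma~\ref{lemma:z} (the $2\times 2$ determinant of Eq.~\ref{eq:vb_2x2}) needs to be checked, handle branch \ref{Aphic1} exactly as in Lemma~\ref{lemma:z}, and handle branch \ref{Aphic2} by direct nonvanishing of the determinant. Your explicit computation for \ref{Aphic2} (that, say, $\phi^{122}\equiv 0$ forces $\phi^{12}(x,y)=g(x)$ and hence $D(s,z)=g'(s)g(z)$) is a correct and welcome elaboration of a step the paper only asserts.
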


Assumption~\ref{ALalt} is similar to \ref{AL}, while it requires all variables to have both parent and child.
Note that in this case the matrix $\bar{\m L}^m$ has at least one non-zero element for each row, and thus does not have all-zero rows.
Although \ref{ALalt} imply that the whole causal graph cannot be acyclic,
this does not mean we cannot identify any of the latent variables in acyclic graphs; since \ref{ALalt} is {\it group-wise}, we can still identify the variables on the groups (groups of interest) satisfying them. 
For example, in the illustrative example of the causal dynamics (Illustrative Example~2), 
we cannot identify the latent variables on the first (no-parents) and the last (no-children) time points (groups),
while we would be able to identify the other time-points (groups) $t$ 
since they usually obtain significant causal effects from nearby preceding time-points (groups) $<t$, then cause the other nearby subsequent time-points (groups) $>t$. 

The assumptions on the function $\phi$ (\ref{Aphic}) is weaker than that of Theorem~\ref{thm:id} (\ref{Aphi}) since they do not require the factorizability of $\phi$ and also uniform-dependency of either $\phi^{112}$ or $\phi^{122}$ in \ref{Aphic2}.
%and accepts a wider variety of causal models;
For example, Gaussian CAMs (Eq.~\ref{eq:ngsem}; $N=1$ and linear $\m T$ in Eq.~\ref{eq:p_sb}) are allowed in \ref{Aphic2},
while they are not accepted in Theorem~\ref{thm:id}.
%due to the non-factorizability and uniform-dependency constraints.

\begin{proof}
Proof is basically the same as that of Theorem~\ref{thm:id} (Supplementary Material~\ref{sec:id_proof}),
while we only need to consider the rows $\Vb$ in Lemma~\ref{lemma:z} since all of the variables have both parent and child in some other group here.
We thus only need to show the full row-rankness of Eq.~\ref{eq:vb_2x2} in Lemma~\ref{lemma:z}, corresponding to the rows $\Vb$, under the condition~\ref{Aphic1} or \ref{Aphic2}.
Condition~\ref{Aphic1} represents the asymmetricity, which is actually the same as that assumed in Theorem~\ref{thm:id}, and thus can be proven by the same discussion given in the proof of Lemma~\ref{lemma:z}.
We can also easily see the full row-rankness of Eq.~\ref{eq:vb_2x2} under Condition~\ref{Aphic2} since the determinant of the matrix (Eq.~\ref{eq:vb_2x2}) is non-zero from the uniform dependency assumptions.

Then the Proposition is proven.
\end{proof}

%--------------------------------------------------------------------------%
%--------------------------------------------------------------------------%
\section{Proof of Theorem~~\ref{thm:crl}}
\label{sec:crl_proof}

By well-known theory \citep{Gutmann2012,Hastie2001}, 
after convergence of logistic regression, with infinite data and a function approximator with universal approximation capability, 
the regression function (Eq.~\ref{eq:crl_r}) will equal the difference of the log-pdfs in the two classes $\x^{(n)}$ and $\x^{(n_\ast)}$ in Eq.~\ref{eq:xtilde}:
\begin{align}
	&\sum_{m \in \M} \bar{\psi}^m(\h^m(\x^{m})) + \sum_{m \neq m'} \sum_{(a, b) \in \Vsm \times \Vsmp} w_{ab}^{mm'} \psi(h_a^m(\x^{m}), h_b^{m'}(\x^{m'})) + c \nonumber \\
	&= \log p_\x(\x^1, \ldots, \x^M) - \log p_{\x^\ast}(\x^1, \ldots, \x^M) \nonumber \\
	&= \log p(\g^1(\x^1), \ldots, \g^M(\x^M)) - \log p_{\s^\ast}(\g^1(\x^1), \ldots, \g^M(\x^M)) \nonumber \\
        &= \log p(\g^1(\x^1), \ldots, \g^M(\x^M)) -  \sum_{m \in \M} \log p^m(\g^m(\x^m)),\label{eq:crl_eq}
\end{align}
where $p_\x$, $p_{\x^\ast}$, and $p_{\s^\ast}$ are the joint densities of the observational vector $\x^{(n)}$ (the first dataset in Eq.~\ref{eq:xtilde}), observational vector with randomized samples for each group $\x^{(n_\ast)}$ (the second dataset in Eq.~\ref{eq:xtilde}), and that on the latent space $\s^{(n_\ast)}$, respectively,
and $p^m$ is the marginal distribution of the $m$-th latent variable group,
$\g^m = (\f^m)^{-1}$ are the (true) demixing models. 
The second equation comes from the well-known theory that the changes of variables do not change the density-ratio (subtraction of log-densities; the Jacobians for the changes of variables cancel out), and the third equation comes from the fact that there is no causal relations across groups on the shuffled dataset 
because the samples are obtained randomly and independently for each group (while causal relations can still exist within each group, implicitly involved in $p^m$).

Let a compound demixing-mixing function $\ve v^m(\s^m) = \h^m \circ \f^m(\s^m)$, we then have
\begin{align}
        &\log p(\s^1, \ldots, \s^M) -  \sum_{m \in \M} \log p^m(\s^m) \nonumber \\
        &= \sum_{m \in \M} \bar{\psi}^m(\ve v^m(\s^{m})) + \sum_{m \neq m'} \sum_{(a, b) \in \Vsm \times \Vsmp} w_{ab}^{mm'} \psi(v_a^m(\s^{m}), v_b^{m'}(\s^{m'})) + c.  \label{eq:crl_p=pt_v}
\end{align}
We substitute the factorization model Eq.~\ref{eq:ps} into this, and differentiate the both sides with respect to 
$s_a^m$ and $s_b^{m'}$, where $a \in \Vsm$, $b \in \Vsmp$, $m \neq m'$, and then obtain,
\begin{align}
        &\frac{\partial^2}{\partial s_a^m \partial s_b^{m'}} \left( \lambda_{ab}^{mm'} \phi(s_a^m, s_b^{m'}) + \lambda_{ba}^{m'm} \phi(s_b^{m'}, s_a^m) \right) \nonumber\\
        &=  \frac{\partial^2}{\partial s_a^m \partial s_b^{m'}} \sum_{(i, j)} \left( w_{ij}^{mm'} \psi \left(v_i^m(\s^m), v_j^{m'}(\s^{m'}) \right) + w_{ji}^{m'm} \psi \left(v_j^{m'}(\s^{m'}), v_i^m(\s^m) \right) \right).  \label{eq:crl_p=pt_v2_element}
\end{align}
Now compare this equation to Eq.~\ref{eq:p=pt_v2_element} of the proof of Theorem~\ref{thm:id} in Supplementary Material~\ref{sec:id_proof}.
The functions $\psi$ and $\phit$, and the coefficients $\lambdat_{ij}^{mm'}$ and $w_{ij}^{mm'}$ denote the same things in the two proofs.
Now, we can proceed with the proof of Theorem~\ref{thm:id}, 
and the consistency of the estimation framework is thus proven.

%--------------------------------------------------------------------------%
%--------------------------------------------------------------------------%
\section{Proof of Theorem~\ref{thm:cd3}}
\label{sec:cd_proof}

%The identifiability of the causal graph needs some assumptions on the graph structure $\m L$ and the causal function $\phi$, and they have a trade-off relationship.
%We give here three variants of the identifiability conditions and the proofs; 1)~strong causal graph and weak causal function constraints (Theorem~\ref{thm:cd3} proven in \ref{sec:cd3_proof}), 2)~moderate causal graph and function constraints (Proposition~\ref{thm:cd2} proven in \ref{sec:cd2_proof}), and 3)~weak causal graph and strong causal function constraints (Proposition~\ref{thm:cd1} proven in \ref{sec:cd1_proof}).
%
%
%\subsection{Strong Causal Graph and Weak Causal Function Assumptions (Theorem~\ref{thm:cd3})}
%\label{sec:cd3_proof}
%We start from Theorem~\ref{thm:cd3} shown in the main text, where the constraint on the causal graph is relatively strong, while that on the causal function is relatively weak.
\begin{proof}
From the result of Theorem~\ref{thm:id} with the required assumptions, 
for each $m \in \M$, we so far have the identifiability of the latent variables $(\s^m)$ up to variable-wise nonlinear scalings and a permutation;
i.e., the compound function $\ve v^m$ in the proof of Theorem~\ref{thm:id} (Supplementary Material~\ref{sec:id_proof}) is given by, for each element,
\begin{align}
	v_a^m(\s^m) = k_{\sigma^m(a)}^m(s_{\sigma^m(a)}^m), \label{eq:k_exp}
\end{align}
where $k_{\sigma^m(a)}^m: \R \rightarrow \R$ is a scalar invertible functions,
and $\sigma^m(a): \Vsm \rightarrow \Vsm$ represents the permutation of variables, which are indeterminate according to Theorem~\ref{thm:id}.
Without loss of generality, we assume that the variables were sorted properly ($\sigma^m(a) = a$),
and the nonlinear functions were scaled properly so that the image is embedded on the same space (interval) to that of the input (i.e. $s_a^m \in \mathcal{S}_a^m \rightarrow k_a^m(s_a^m) \in \mathcal{S}_a^m$).

We now discuss identifiability of the model (Eq.~\ref{eq:ps}) by considering two sets of parameters $\theta = \{\m L, \phi\}$ (true) and 
$\tilde{\theta} = \{\tilde{\m L}, \phit \}$ (another parameterization or estimate) satisfying the assumptions of the Theorem,
such that they both give the same data distributions $p(\x; \theta) = p(\x; \tilde{\theta})$.

\paragraph{Resolving the Element-wise Nonlinear Scaling:}
We first show that the element-wise (nonlinear) scaling $k_a^m$ can be resolved to some extent by some additional assumptions given in Theorem~\ref{thm:cd3};
more specifically, the scaling $k_a^m$ can be given by the same function $k^m$ for each group $m$, rather than the variable-wise manner (Eq.~\ref{eq:k_exp}).
We focus on co-parents $s_a^m$ and $s_{a'}^m \in \Vsm$ and their child $s_b^{m_\ast} \in \Vsmast$, assumed in Assumption~\ref{An3}.
We then have the $(a, b)$-th element of Eq.~\ref{eq:p=pt_v2} (causal relation between $s_a^m$ and $s_b^{m_\ast}$) with substituting Eq.~\ref{eq:k_exp},
\begin{align}
	&\frac{\partial^2}{\partial s_a^m \partial s_b^{m_\ast}} \left( \lambda_{ab}^{mm_\ast} \phi(s_a^m, s_b^{m_\ast}) + \lambda_{ba}^{m_\ast m} \phi(s_b^{m_\ast}, s_a^{m}) \right) \nonumber \\
        &= \frac{\partial^2}{\partial s_a^m \partial s_b^{m_\ast}} \left( \lambdat_{ab}^{mm_\ast} \phit(k_a^m(s_a^m), k_b^{m_\ast}(s_b^{m_\ast}))
        + \lambdat_{ba}^{m_\ast m} \phit(k_b^{m_\ast}(s_b^{m_\ast}), k_a^m(s_a^m)) \right), \label{eq:cd_eq12_exp3}
\end{align}
and likewise the $(a', b)$-th element (causal relation between variables $s_{a'}^m$ and $s_{b}^{m_\ast}$).

From Assumption~\ref{An3}, $\lambda_{ab}^{mm_\ast} \neq 0$ and $\lambda_{a'b}^{mm_\ast} \neq 0$ on the left-hand side (true parameter $\theta$; the opposite directions are zeros $\lambda_{ba}^{m_\ast m} =  \lambda_{ba'}^{m_\ast m} =0$ since the graph is directed; Assumption~\ref{An3}).
By taking a division of Eq.~\ref{eq:cd_eq12_exp3} corresponding to those two variable-pairs, 
which is possible thanks to the uniform-dependency of the cross-derivatives of the functions (Assumption~\ref{Aphi}), we obtain four possible equations, depending on
which combination between $(\lambdat_{ab}^{mm_\ast}, \lambdat_{ba}^{m_\ast m})$ and  $(\lambdat_{a'b}^{mm_\ast}, \lambdat_{ba'}^{m_\ast m})$
 has non-zero values on the right-hand side;
\begin{align}
  &\frac{\lambda_{ab}^{mm_\ast} \frac{\partial^2}{\partial s_a^m \partial s_b^{m_\ast}} \phi(s_a^m, s_b^{m_\ast})}{\lambda_{a'b}^{mm_\ast} \frac{\partial^2}{\partial s_{a'}^m \partial s_{b}^{m_\ast}}\phi(s_{a'}^m, s_{b}^{m_\ast})} \nonumber \\
  &= \begin{cases}
    \frac{\lambdat_{ab}^{mm_\ast} \frac{\partial^2}{\partial s_a^m \partial s_b^{m_\ast}} \phit(k_a^m(s_a^m), k_b^{m_\ast}(s_b^{m_\ast}))}{\lambdat_{a'b}^{mm_\ast} \frac{\partial^2}{\partial s_{a'}^m \partial s_{b}^{m_\ast}} \phit(k_{a'}^m(s_{a'}^m), k_{b}^{m_\ast}(s_{b}^{m_\ast}))} & (\lambdat_{ab}^{mm_\ast} \lambdat_{a'b}^{mm_\ast} \neq 0), \\
    \frac{\lambdat_{ba}^{m_\ast m} \frac{\partial^2}{\partial s_a^m \partial s_b^{m_\ast}}  \phit(k_b^{m_\ast}(s_b^{m_\ast}), k_a^m(s_a^m))}{\lambdat_{ba'}^{m_\ast m} \frac{\partial^2}{\partial s_{a'}^m \partial s_{b}^{m_\ast}}  \phit(k_{b}^{m_\ast}(s_{b}^{m_\ast}), k_{a'}^m(s_{a'}^m))} & (\lambdat_{ba}^{m_\ast m} \lambdat_{ba'}^{m_\ast m} \neq 0), \\
    \frac{\lambdat_{ab}^{mm_\ast} \frac{\partial^2}{\partial s_a^m \partial s_b^{m_\ast}} \phit(k_a^m(s_a^m), k_b^{m_\ast}(s_b^{m_\ast}))}{\lambdat_{ba'}^{m_\ast m} \frac{\partial^2}{\partial s_{a'}^m \partial s_{b}^{m_\ast}} \phit(k_{b}^{m_\ast}(s_{b}^{m_\ast}), k_{a'}^m(s_{a'}^m))} & (\lambdat_{ab}^{mm_\ast} \lambdat_{ba'}^{m_\ast m} \neq 0), \\
    \frac{\lambdat_{ba}^{m_\ast m} \frac{\partial^2}{\partial s_a^m \partial s_b^{m_\ast}} \phit(k_b^{m_\ast}(s_b^{m_\ast}), k_a^m(s_a^m))}{\lambdat_{a'b}^{mm_\ast} \frac{\partial^2}{\partial s_{a'}^m \partial s_{b}^{m_\ast}} \phit(k_{a'}^m(s_{a'}^m), k_{b}^{m_\ast}(s_{b}^{m_\ast}))} & (\lambdat_{ba}^{m_\ast m} \lambdat_{a'b}^{mm_\ast} \neq 0). \label{eq:cd_eq12vec_div_exp3}
  \end{cases}
\end{align}
On the right-hand side (estimate $\tilde{\theta}$), only one of them is possible due to the directed causal graph assumption (Assumption~\ref{An3}).
The first two cases are when the causal directions are the same between the two variable-pairs on the parameterization $\tilde{\theta}$, 
similarly to $\theta$ (but possibly both flipped from $\theta$), while they are opposite each other in the latter two cases.

We first show that the latter two cases of Eq.~\ref{eq:cd_eq12vec_div_exp3} (opposite causal directions between the two pairs $(a, b)$ and $(a', b)$) contradict the assumptions, as we expected.
We replace $s_a^m$ and $s_{a'}^m$ by a common variable $y_1 \in \bar{\mathcal{S}}$
(this is possible because we consider the case where the supports of the all latent variables are the same, denoted as $\bar{\mathcal{S}}$),
and $s_b^{m_\ast}$ by $y_2 \in \bar{\mathcal{S}}$, then obtain
\begin{align}
  \frac{\lambda_{ab}^{mm_\ast}}{\lambda_{a'b}^{mm_\ast}} 
  = \begin{cases}
    \frac{\lambdat_{ab}^{mm_\ast} \frac{\partial^2}{\partial y_1 \partial y_2} \phit(k_a^m(y_1), k_b^{m_\ast}(y_2))}{\lambdat_{ba'}^{m_\ast m} \frac{\partial^2}{\partial y_1 \partial y_2} \phit(k_{b}^{m_\ast}(y_2), k_{a'}^m(y_1))} & (\lambdat_{ab}^{mm_\ast} \lambdat_{ba'}^{m_\ast m} \neq 0), \\
    \frac{\lambdat_{ba}^{m_\ast m} \frac{\partial^2}{\partial y_1 \partial y_2} \phit(k_b^{m_\ast}(y_2), k_a^m(y_1))}{\lambdat_{a'b}^{mm_\ast} \frac{\partial^2}{\partial y_1 \partial y_2} \phit(k_{a'}^m(y_1), k_{b}^{m_\ast}(y_2))} & (\lambdat_{ba}^{m_\ast m} \lambdat_{a'b}^{mm_\ast} \neq 0), \label{eq:cd_eq12vec_div_y_34_exp3}
  \end{cases}
\end{align}
where the left-hand-side is constant.
However, Lemma~\ref{lemma:4} given below indicates that these contradict the assumptions,
and thus the latter two cases of Eq~\ref{eq:cd_eq12vec_div_exp3} are indeed excluded.

On the other hand, in the first two cases of Eq~\ref{eq:cd_eq12vec_div_exp3}, we again replace the variables by $y_1$ and $y_2$, then obtain
\begin{align}
  \frac{\lambda_{ab}^{mm_\ast} }{\lambda_{a'b}^{mm_\ast}} 
  = \begin{cases}
    \frac{\lambdat_{ab}^{mm_\ast} \frac{\partial^2}{\partial y_1 \partial y_2} \phit(k_a^m(y_1), k_b^{m_\ast}(y_2))}{\lambdat_{a'b}^{mm_\ast} \frac{\partial^2}{\partial y_1 \partial y_2} \phit(k_{a'}^m(y_1), k_{b}^{m_\ast}(y_2))} & (\lambdat_{ab}^{mm_\ast} \lambdat_{a'b}^{mm_\ast} \neq 0), \\
    \frac{\lambdat_{ba}^{m_\ast m} \frac{\partial^2}{\partial y_1 \partial y_2}  \phit(k_b^{m_\ast}(y_2), k_a^m(y_1))}{\lambdat_{ba'}^{m_\ast m} \frac{\partial^2}{\partial y_1 \partial y_2}  \phit(k_{b}^{m_\ast}(y_2), k_{a'}^m(y_1))} & (\lambdat_{ba}^{m_\ast m} \lambdat_{ba'}^{m_\ast m} \neq 0). \label{eq:cd_eq12vec_div_y_12_exp3}
  \end{cases}
\end{align}
We now show that those equations are possible only when $k_a^m = k_{a'}^m$ due to the assumptions.
From Assumption~\ref{An3}, there exists a path from a variable to any other variable by following the co-parents on group $m$,
and for each co-parents we have either one of the cases in Eq.~\ref{eq:cd_eq12vec_div_y_12_exp3}.
However, once whether the former or the latter case of Eq.~\ref{eq:cd_eq12vec_div_y_12_exp3} is determined for some co-parent,
all other co-parents also need to have the same side of the equation, since the existence of inconsistent causal directions is not allowed due to Lemma~\ref{lemma:4}.
This indicates that we have a relation of either 
\begin{align}
	\frac{\partial^2}{\partial y_1 \partial y_2} \phit \left(k_a^m(y_1), k_b^{m_\ast}(y_2) \right) = \alpha_{1aa'} \frac{\partial^2}{\partial y_1 \partial y_2} \phit \left(k_{a'}^m(y_1), k_{b}^{m_\ast}(y_2) \right), \nonumber \\
	\frac{\partial^2}{\partial y_1 \partial y_2} \phit \left(k_b^{m_\ast}(y_2), k_a^m(y_1) \right) = \beta_{1aa'} \frac{\partial^2}{\partial y_1 \partial y_2} \phit \left(k_{b}^{m_\ast}(y_2), k_{a'}^m(y_1) \right), \label{eq:cd3_eq_copa_2}
\end{align}
consistently for all co-parents $(a, a') \in \Vsm \times \Vsm$ assumed in \ref{An3}, where $\alpha_{1aa'}$ and $\beta_{1aa'}$ are some scalar constants depending on the co-parents.

We next consider the co-children $s_a^m$ and $s_{a''}^m$ and their parent $s_{b'}^{m_\dagger}$, assumed in Assumption~\ref{An3} ($m_\dagger$ does not need to be same as $m_\ast$). Based on the same discussions for the co-parents given above, we have a relation of either 
\begin{align}
	\frac{\partial^2}{\partial y_1 \partial y_2} \phit \left(k_{b'}^{m_\dagger}(y_2), k_a^m(y_1) \right) = \alpha_{2aa''} \frac{\partial^2}{\partial y_1 \partial y_2} \phit \left(k_{b'}^{m_\ast}(y_2), k_{a''}^m(y_1) \right), \nonumber \\
	\frac{\partial^2}{\partial y_1 \partial y_2} \phit \left(k_a^m(y_1), k_{b'}^{m_\dagger}(y_2) \right) = \beta_{2aa''} \frac{\partial^2}{\partial y_1 \partial y_2} \phit \left(k_{a''}^m(y_1), k_{b'}^{m_\dagger}(y_2) \right), \label{eq:cd3_eq_coch_2}
\end{align}
consistently for all co-children $(a, a'') \in \Vsm \times \Vsm$ assumed in \ref{An3}, where $\alpha_{2aa'}$ and $\beta_{2aa'}$ are some scalar constants depending on the co-children.
The former and the latter cases in Eqs.~\ref{eq:cd3_eq_copa_2} and \ref{eq:cd3_eq_coch_2} correspond to each other;
if we have the first case of Eq.~\ref{eq:cd3_eq_copa_2}, we also have the first case of Eq.~\ref{eq:cd3_eq_coch_2}, excluding the second cases, and vice versa.
We show this by contradiction;
we suppose there exist three variables with causal relation $s_{b'}^{m_\dagger} \rightarrow s_a^m \rightarrow s_b^{m_\ast}$ as assumed in \ref{An3} on the true model $\theta$,
but they are wrongly learned as $s_{b'}^{m_\dagger} \leftarrow s_a^m \rightarrow s_b^{m_\ast}$ on the estimate $\tilde{\theta}$ (the following discussions are the same when they are  learned as $s_{b'}^{m_\dagger} \rightarrow s_a^m \leftarrow s_b^{m_\ast}$ as well).
This gives us two equations from Eq.~\ref{eq:cd_eq12_exp3},
\begin{align}
	&\lambda_{b'a}^{m_\dagger m} \frac{\partial^2}{\partial s_a^m \partial s_{b'}^{m_\dagger}} \phi(s_{b'}^{m_\dagger}, s_a^m) = \lambdat_{ab'}^{mm_\dagger} \frac{\partial^2}{\partial s_a^m \partial s_{b'}^{m_\dagger}} \phit(k_a^m(s_a^m), k_{b'}^{m_\dagger}(s_{b'}^{m_\dagger})), \nonumber \\
	&\lambda_{ab}^{mm_\ast} \frac{\partial^2}{\partial s_a^m \partial s_b^{m_\ast}} \phi(s_a^m, s_b^{m_\ast}) = \lambdat_{ab}^{mm_\ast} \frac{\partial^2}{\partial s_a^m \partial s_b^{m_\ast}} \phit(k_a^m(s_a^m), k_b^{m_\ast}(s_b^{m_\ast})).
\end{align}
We substitute $s_{b'}^{m_\dagger}$ with $(k_{b'}^{m_\dagger})^{-1} \circ k_b^{m_\ast}(s_b^{m_\ast})$, which makes the right-hand side the same up to scaling, after applying the change of variables and canceling out the derivatives of the functions $k$ on the both sides. However, this contradicts Lemma~\ref{lemma:4} given below.
This indicates that the relations of {\it parents} and {\it children} from a variable are preserved between $\theta$ and $\tilde{\theta}$, though could be flipped,
and thus the cases of Eqs.~\ref{eq:cd3_eq_copa_2} and \ref{eq:cd3_eq_coch_2} are consistent.

Now we focus on the first cases of Eqs.~\ref{eq:cd3_eq_copa_2} and \ref{eq:cd3_eq_coch_2}.
Since those equations are true for any variable-pairs, we especially consider a pair $(a, a')$ for both of them, and divide the both-sides (after replacing $y_2$ by $y_3$ for the second equation), which is possible thanks to the uniform dependency (Assumption~\ref{Aphi}), and then obtain
\begin{align}
	\frac{\frac{\partial^2}{\partial y_1 \partial y_2} \phit \left(k_a^m(y_1), k_b^{m_\ast}(y_2) \right)}{\frac{\partial^2}{\partial y_1 \partial y_3} \phit \left(k_{b'}^{m_\dagger}(y_3), k_a^m(y_1) \right)} &= \frac{\alpha_{1aa'} \frac{\partial^2}{\partial y_1 \partial y_2} \phit \left(k_{a'}^m(y_1), k_{b}^{m_\ast}(y_2) \right)}{\alpha_{2aa'} \frac{\partial^2}{\partial y_1 \partial y_3} \phit \left(k_{b'}^{m_\dagger}(y_3), k_{a'}^m(y_1) \right)}, \nonumber \\
	\implies \frac{\phit^{12} \left(k_a^m(y_1), k_b^{m_\ast}(y_2) \right)}{\phit^{12} \left(k_{b'}^{m_\dagger}(y_3), k_a^m(y_1) \right)} &= \frac{\alpha_{1aa'}}{\alpha_{2aa'}} \cdot \frac{\phit^{12} \left(k_{a'}^m(y_1), k_{b}^{m_\ast}(y_2) \right)}{\phit^{12} \left(k_{b'}^{m_\dagger}(y_3), k_{a'}^m(y_1) \right)}, \label{eq:cd3_eq_k}
\end{align}
where $\phi^{12}(x, y) = \frac{\partial^2}{\partial x \partial y} \phit(x, y)$, and the derivatives of the scalar functions $k_a^m$, $k_{a'}^m$, $k_b^{m_\ast}$, and $k_{b'}^{m_\dagger}$ canceled out between the left- and the right-hand sides or between the numerator and the denominator of the equation
(note that they are non-zeros almost everywhere due to the invertibility).
Since this equation is true for any choices of $y_2$ and $y_3$, we set $y_2 = (k_b^{m_\ast})^{-1} \circ k_{a'}^m (y_1)$ and $y_3 = (k_{b'}^{m_\dagger})^{-1} \circ k_a^m (y_1)$, 
and we get 
\begin{align}
	\frac{\phit^{12} \left(k_a^m(y_1), k_{a'}^m(y_1) \right)}{\phit^{12} \left(k_a^m(y_1), k_a^m(y_1) \right)} &= \frac{\alpha_{1aa'}}{\alpha_{2aa'}} \cdot \frac{\phit^{12} \left(k_{a'}^m(y_1), k_{a'}^m(y_1) \right)}{\phit^{12} \left(k_a^m(y_1), k_{a'}^m(y_1) \right)}, \nonumber \\
	\implies \phit^{12} \left(k_a^m(y_1), k_{a'}^m(y_1) \right)^2 &= \frac{\alpha_{1aa'}}{\alpha_{2aa'}} \cdot \phit^{12} \left(k_a^m(y_1), k_a^m(y_1) \right) \phit^{12} \left(k_{a'}^m(y_1), k_{a'}^m(y_1) \right).\label{eq:cd3_eq_k}
\end{align}
Since this equation indicates symmetricity of $\phit^{12}$ (flipping $k_a^m(y_1)$ and $k_{a'}^m(y_1)$ on the left-hand side gives the same value on the right-hand side), 
which is prohibited by Assumption~\ref{Aasym3}, we need to have $k_a^m = k_{a'}^m$.
%If $k_a^m(y_1) \neq k_{a'}^m(y_1)$ for all $y_1$ in some open subset $B \subset \bar{\mathcal{S}}$,
%this equation contradicts the restriction of the factorization form of $\phit^{12}$ (Assumption~\ref{Aphi3}).
Therefore we conclude that $k_a^m = k_{a'}^m$.
Since this is true for each index-pair $(a, a')$ in the group $m$ considered in Assumption~\ref{An3},
$k_a^m$ can be given as a single function $k^m$ for all variable index $a \in \Vsm$.
This is also true when we focus on the latter cases Eqs.~\ref{eq:cd3_eq_copa_2} and \ref{eq:cd3_eq_coch_2}.

\paragraph{Identifiability of the Causal Graph:}
With the same discussions above, the functions $\{k_b^{m'}\}_b$ on a group $m'$ can be also simply denoted as $k^{m'}$ for all $b$, 
based on the relations of the variables $\{s_b^{m'}\}_b$ on the group $m'$ to the variables on the other groups.
Using this to Eq.~\ref{eq:cd_eq12_exp3} with a group-pair $(m, m')$,
and by gathering this equation for all variable-index-pairs $(a, b) \in \Vsm \times \Vsmp$ on the group-pair $(m, m')$ in a matrix form ($a$ giving rows, and $b$ columns), 
and also by replacing all $\{s_a^m\}_a$ by a common variable $y_1 \in \bar{\mathcal{S}}$
and similarly all $\{s_b^{m'}\}_b$ by $y_2 \in \bar{\mathcal{S}}$, we get a matrix equation of the size $\dsm \times \dsmp$,
\begin{align}
        &\m L^{mm'} \frac{\partial^2}{\partial y_1 \partial y_2} \left( \phi(y_1, y_2) \right) + (\m L^{m' m})^\T \frac{\partial^2}{\partial y_1 \partial y_2} \left( \phi(y_2, y_1) \right) \nonumber \\
        &= \tilde{\m L}^{mm'} \frac{\partial^2}{\partial y_1 \partial y_2} \left( \phit (k^m(y_1), k^{m'}(y_2)) \right) \nonumber\\
        &+ (\tilde{\m L}^{m' m})^\T \frac{\partial^2}{\partial y_1 \partial y_2} \left( \phit (k^{m'}(y_2), k^m(y_1)) \right). \label{eq:cd_eq12vec_1_vec_exp3}
\end{align}
The factors other than the adjacency matrices $\m L^{mm'}$, $\m L^{m'm}$, $\tilde{\m L}^{mm'}$, and $\tilde{\m L}^{m'm}$ 
are now scalar values and do not change across rows and columns.

Firstly, for the elements of Eq.~\ref{eq:cd_eq12vec_1_vec_exp3} where $\lambda_{ab}^{mm'} = \lambda_{ba}^{m'm} = 0$ on the left-hand side,
the corresponding coefficients $\lambdat_{ab}^{mm'}$ and $\lambdat_{ba}^{m'm}$ on the right-hand side should be also zeros, due to the directed causal graph assumption (Assumption~\ref{An3}) and uniform dependency of the cross-derivative of $\phit$ (\ref{Aphi}).

We next focus on the elements of Eq.~\ref{eq:cd_eq12vec_1_vec_exp3} where $\m L^{mm'}$ are non-zeros (the corresponding elements of $(\m L^{m'm})^\T$ are constantly zeros due to the directed causal graph assumption \ref{An3}).
In this case, we can say that only either of $\tilde{\m L}^{mm'}$ or $(\tilde{\m L}^{m'm})^\T$ are non-zeros consistently for all of the corresponding elements.
This is because, if both of $\tilde{\m L}^{mm'}$ and $(\tilde{\m L}^{m'm})^\T$ have non-zero values on some different elements,
it is easy to show that it contradicts the asymmetricity of the function $\phit$ (Assumption~\ref{Aasym3}).
This is the same for the case when we focus on the elements where $(\m L^{m'm})^\T$ are non-zeros.

Similarly, this is also true when we focus on the right-hand side; if some elements of $\tilde{\m L}^{mm'}$ are non-zeros,
only the corresponding elements of either $\m L^{mm'}$ or $(\m L^{m'm})^\T$ are consistently non-zeros, due to the asymmetricity of $\phi$ (Assumption~\ref{Aasym3}).

These indicate that we can identify the causal graph $\m L^{mm'}$ and $\m L^{m'm}$ up to scaling and matrix-transpose (flipping of $\tilde{\m L}^{mm'}$ and $(\tilde{\m L}^{m' m})^\T$).
Additionally considering the permutation of variables for each group, which are indeterminate according to Theorem~\ref{thm:id}, we eventually have:
if we have two sets of causal graphs $\m L$ and $\tilde{\m L}$ giving the same data distributions,
we have $(\tilde{\m L}^{mm'}, \tilde{\m L}^{m'm}) = c^{mm'} (\m L_{\ve\sigma^m \ve\sigma^{m'}}^{mm'}, \m L_{\ve\sigma^{m'} \ve\sigma^m}^{m' m})$ or $c^{mm'} ((\m L_{\ve\sigma^{m'} \ve\sigma^m}^{m' m})^\T, (\m L_{\ve\sigma^m \ve\sigma^{m'}}^{m m'})^\T)$ with a scalar constant $c^{mm'}$, and permutations of variables (rows and columns) represented by $\ve\sigma^m$ and $\ve\sigma^{m'}$ on groups $m$ and $m'$, respectively. Theorem is then proven.

\end{proof}

\begin{Lemma} \label{lemma:4}
Assume $k_{\{1, 2, 3\}}: \R \rightarrow \R$ are $C^1$ scalar invertible functions, 
%$\gamma \neq 0 \in \R$ is a constant scalar value,
and $\phi(\cdot, \cdot): \R^2 \rightarrow \R$ is a function whose cross-derivative satisfies the asymmetricity (Assumption~\ref{Aasym3}).
Then for any open subset $B$ of $\bar{\mathcal{S}}$, the following relation cannot hold;
\begin{align}
	\frac{\partial^2}{\partial x \partial y} \phi(k_1(x), k_2(y)) = \gamma \frac{\partial^2}{\partial x \partial y} \phi(k_2(y), k_3(x)) \label{eq:lemma4}
\end{align}
with some scalar constant $\gamma \neq 0 \in \R$, for all $x$ and $y \in B$.
\end{Lemma}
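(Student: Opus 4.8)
The plan is a proof by contradiction: assume the identity \eqref{eq:lemma4} holds on $B\times B$ for a fixed $\gamma\neq0$, and deduce a symmetry of $\phi^{12}$ that is forbidden by Assumption~\ref{Aasym3}. First I would strip off the Jacobians. Expanding the cross-derivatives by the chain rule turns \eqref{eq:lemma4} into $\phi^{12}(k_1(x),k_2(y))\,k_1'(x)k_2'(y)=\gamma\,\phi^{12}(k_2(y),k_3(x))\,k_2'(y)k_3'(x)$. Since each $k_i$ is $C^1$ and invertible, $k_i'\neq0$ on a dense open subset of $B$; cancelling $k_2'(y)$ and dividing by $k_1'(x)$ there, and writing $c(x):=\gamma k_3'(x)/k_1'(x)$, one obtains $\phi^{12}(k_1(x),v)=c(x)\,\phi^{12}(v,k_3(x))$ for $v$ ranging over the interval $k_2(B)$: for each fixed $x$, the function $v\mapsto\phi^{12}(k_1(x),v)$ is a scalar multiple of $v\mapsto\phi^{12}(v,k_3(x))$.

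Next I would feed this relation into itself. Substituting $v=k_1(x')$, and then rewriting the factor $\phi^{12}(k_1(x'),k_3(x))$ via the same relation with $x$ replaced by $x'$ and $v=k_3(x)$, one gets $\phi^{12}(k_1(x),k_1(x'))=c(x)c(x')\,\phi^{12}(k_3(x),k_3(x'))$. (In the situation where Lemma~\ref{lemma:4} is applied, \eqref{eq:lemma4} holds on all of $\bar{\mathcal S}\times\bar{\mathcal S}$, so these substitutions are unconstrained; for a general open $B$ one first restricts to where $k_1(B),k_2(B),k_3(B)$ overlap.) Swapping $x\leftrightarrow x'$ and dividing cancels the factor $c(x)c(x')$; in the variables $a=k_1(x)$, $a'=k_1(x')$ and with $g:=k_3\circ k_1^{-1}$ (a $C^1$, strictly monotone diffeomorphism, being a composition of the strictly monotone bijections $k_3$ and $k_1^{-1}$), this reads
\begin{align}
	R(a,a')=R\bigl(g(a),g(a')\bigr),\qquad R(x,y):=\frac{\phi^{12}(x,y)}{\phi^{12}(y,x)},\label{eq:lem4Rinv}
\end{align}
on an open (dense) square, where $R$ is continuous on the open dense set $\{\phi^{12}\neq0\}$. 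So the ``asymmetry ratio'' $R$ is invariant under $(a,a')\mapsto(g(a),g(a'))$.

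Finally I would exploit the dynamics of $g$. If $g$ equals the identity on a subinterval, then the Jacobian factors in \eqref{eq:lemma4} already match and \eqref{eq:lemma4} reduces to $\phi^{12}(s,t)=\gamma\,\phi^{12}(t,s)$ on an open box, contradicting Assumption~\ref{Aasym3}. Otherwise I iterate \eqref{eq:lem4Rinv}: $R(a,a')=R(g^n(a),g^n(a'))$ for every $n$. Restricting to a small square $\mathcal J\times\mathcal J$ with $\mathcal J$ contained in one component of $\bar{\mathcal S}\setminus\mathrm{Fix}(g)$, on which $g(x)-x$ has constant sign and $g$ is an increasing self-bijection of the component, every forward orbit converges monotonically to the attracting endpoint $\ell$; hence $g^n(a),g^n(a')\to\ell$, so $g^n(a)-g^n(a')\to0$ and $R(g^n(a),g^n(a'))\to1$ provided $\phi^{12}(\ell,\ell)\neq0$. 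Choosing the component so that this holds then forces $R\equiv1$ on $\mathcal J\times\mathcal J$, i.e.\ $\phi^{12}(s,t)=\phi^{12}(t,s)$ on an open box — once more contradicting Assumption~\ref{Aasym3}. This contradiction proves the lemma.

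I expect the heart of the argument to be this last step: controlling the orbit of the monotone map $g$ and making the limit $R\to1$ genuinely legitimate, which requires ruling out degeneracies (e.g.\ $\phi^{12}$ vanishing at the relevant fixed point, or $g$ having no interior fixed point in $\bar{\mathcal S}$); this is where the uniform-dependency of $\phi^{12}$, $\phi^{112}$, $\phi^{122}$ from Assumption~\ref{Aphi} should be invoked. The self-substitution bookkeeping in the middle step (in particular the domain overlaps for a general $B$) is a secondary technical point.
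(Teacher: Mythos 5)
Your opening step coincides with the paper's: apply the chain rule to Eq.~\ref{eq:lemma4}, cancel $k_2'(y)$, and observe that the ratio $\phi^{12}(k_1(x),v)/\phi^{12}(v,k_3(x))$ depends on $x$ only. From there, however, the paper proceeds much more directly: it evaluates this $y$-independent ratio at the two special values $v=k_1(x)$ and $v=k_3(x)$, obtaining $\phi^{12}(k_1(x),k_3(x))^2=\phi^{12}(k_1(x),k_1(x))\,\phi^{12}(k_3(x),k_3(x))$, concludes $k_1=k_3$ from the asymmetricity (Assumption~\ref{Aasym3}), and then notes that with $k_1=k_3$ the Jacobian factors cancel and the original relation collapses to $\phi^{12}(u,v)=\gamma\,\phi^{12}(v,u)$ on an open box, directly contradicting \ref{Aasym3}. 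Your route instead derives the two-point invariance $R(a,a')=R(g(a),g(a'))$ with $g=k_3\circ k_1^{-1}$ (that derivation is correct, and the domain-overlap caveat you mention is shared with the paper's own substitutions) and then tries to force $R\equiv 1$ by iterating $g$ towards an attracting fixed point.

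That last, dynamical step --- which you yourself defer as ``the heart of the argument'' --- is a genuine gap, and it is not plausibly closed by the tool you point to. Nothing in the hypotheses guarantees that $g$ has an interior fixed point of $\bar{\mathcal S}$, that $g$ is increasing, that the component of $\bar{\mathcal S}\setminus\mathrm{Fix}(g)$ is $g$-invariant together with the set on which the relation holds, or that orbit differences shrink: take $k_1=\mathrm{id}$ and $k_3$ a translation (both $C^1$ and invertible), so $g$ has no fixed point, $g^n(a)-g^n(a')$ is constant, and the orbits leave any bounded square $\mathcal J\times\mathcal J$, so the limit $R\to 1$ never materializes. Moreover, uniform dependency (Definition~\ref{def:dependency}, Assumption~\ref{Aphi}) constrains $\phi$, not $g=k_3\circ k_1^{-1}$, so it cannot supply the missing fixed point; and since it only forbids zero sets of $\phi^{12}$ with nonempty interior, it does not exclude $\phi^{12}(\ell,\ell)=0$ at the single diagonal point your limit requires --- nor can you ``choose the component so that this holds,'' because the fixed-point set of $g$ is dictated by $k_1,k_3$. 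As written, the proof therefore fails exactly where you expected the work to be; the paper's two-substitution evaluation of the $y$-independent ratio avoids any appeal to the dynamics of $g$, and adopting it (or otherwise showing the relation itself forces the degenerate configurations of $g$ into your already-handled case $g=\mathrm{id}$) is what is needed to complete your argument.
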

\begin{proof}
%The only difference of Eq.~\ref{eq:lemma4} from Eq~.\ref{eq:lemma3} in Lemma~\ref{lemma:3} is that the inputs $x$ and $y$ are flipped on the right-hand side.
We give a proof by contradiction.
We suppose the negation; there exist an open subset $B \subset \bar{\mathcal{S}}$ such that the equation hold for all $x$ and $y \in B$.
By the chain rule of the derivatives,
\begin{align}
	\phi^{12}(k_1(x), k_2(y)) \frac{\partial}{\partial x} k_1(x) = \gamma \phi^{12}(k_2(y), k_3(x)) \frac{\partial}{\partial x} k_3(x), \label{eq:lemma4_2}
\end{align}
where $\phi^{12}(x, y) = \frac{\partial^2}{\partial x \partial y} \phi(x, y)$, and the derivatives of $k_{\{1, 2, 3\}}$ are non-zeros almost everywhere from their invertibility.
The derivatives of $k_2(y)$ on both sides canceled out.

By a simple calculation from Eq.~\ref{eq:lemma4_2} and the uniform dependency (Assumption~\ref{Aasym3} with $c=0$), 
we can say that a function $Q(x, y) = \frac{\phi^{12}(k_1(x), k_2(y))}{\phi^{12}(k_2(y), k_3(x))} $ does not depend on $y$.
%This indicates that we can consider this function $Q(x, y)$ with two different values of $y$, especially $y = (k_2)^{-1} \circ k_1(x)$ and $(k_2)^{-1} \circ k_3(x)$, and obtain an equation
Due to this, we can consider $Q(x, y)$ with two different values of $y$, especially $y = (k_2)^{-1} \circ k_1(x)$ and $(k_2)^{-1} \circ k_3(x)$, and obtain an equation
\begin{align}
	\frac{\phi^{12}(k_1(x), k_1(x))}{\phi^{12}(k_1(x), k_3(x))} &= \frac{\phi^{12}(k_1(x), k_3(x))}{\phi^{12}(k_3(x), k_3(x))}, \nonumber \\
	\implies \left( \phi^{12}(k_1(x), k_3(x)) \right)^2 &= \phi^{12}(k_1(x), k_1(x)) \phi^{12}(k_3(x), k_3(x)). \label{eq:lemma4_3}
\end{align}
%We now show that this factorization form is prohibited by Assumption~\ref{Aasym3}.
%From the symmetric form of this equation, we have
%\begin{align}
%	\left( \phi^{12}(k_1(x), k_3(x)) \right)^2 &= \phi^{12}(k_1(x), k_1(x)) q(k_3(x), k_3(x)) = \left( \phi^{12}(k_3(x), k_1(x)) \right)^2, \nonumber \\
%	\implies  &\phi^{12}(k_1(x), k_3(x)) = \pm \phi^{12}(k_3(x), k_1(x))
%\end{align}
Since this equation indicates symmetricity of $\phi^{12}$ (flipping $k_1(x)$ and $k_3(x)$ on the left-hand side gives the same value on the right-hand side), 
which is prohibited by Assumption~\ref{Aasym3}, we need to have $k_1 = k_3$.
However, substituting this result to Eq.~\ref{eq:lemma4_2} indicates that this is contradictory to the asymmetricity of $\phi^{12}$ (Assumption~\ref{Aasym3}).
From this contradiction, we conclude that Eq.~\ref{eq:lemma4} cannot hold with those assumptions, and thus the Lemma is proven.

\end{proof}

%--------------------------------------------------------------------------%
%--------------------------------------------------------------------------%
\section{Alternative Identifiability Condition of Theorem~\ref{thm:cd3}}
\label{sec:cd_alt}

%The identifiability of the causal graph needs some assumptions on the graph structure $\m L$ and the causal function $\phi$, and they have a trade-off relationship.
%We give here three variants of the identifiability conditions and the proofs; 1)~strong causal graph and weak causal function constraints (Theorem~\ref{thm:cd3} proven in \ref{sec:cd3_proof}), 2)~moderate causal graph and function constraints (Proposition~\ref{thm:cd2} proven in \ref{sec:cd2_proof}), and 3)~weak causal graph and strong causal function constraints (Proposition~\ref{thm:cd1} proven in \ref{sec:cd1_proof}).

%\subsection{Moderate Constraints on Causal Graph and Functions}
%\label{sec:cd2_proof}
By adding an additional constraint on the causal function $\phi$, we can weaken the assumption on the causal graph $\m L$ in Theorem~\ref{thm:cd3}.
The alternative condition of Theorem~\ref{thm:cd3} is given below:
%The alternative versions of Assumptions~\ref{An} and \ref{Anonlin_exp} are given below (\ref{AL} and \ref{Aasym_exp} are the same):
\begin{Proposition} \label{thm:cd2}
Assume the same as those in Theorem~\ref{thm:id}, and also:
\begin{enumerate}[label=C'\arabic*]
%\item (Directed graph) The causal graphs are directed:  only one of $\lambda_{ab}^{mm_k}$ and $\lambda_{ba}^{m_k m}$ has a non-zero value, for all variable pairs. \label{AL2}
\item (Causal graph) The inter-group causal relations of variables are all directed, and for every group-pair $(m, m')$ in the groups of interest, 
all variables in a group $m$ (and $m'$) have either co-parent or co-child in the same group.
In addition, any variables in the group $m$ (and $m'$) can be reached from any other variables in the same group 
by moving from a variable to one of its co-parents or co-children, possibly by multiple hops. \label{An2}
%by transiting via co-parents or co-children, possibly by multiple hops. \label{An2}
%\item (Causal graph) All variables in a group $m$ have either co-parent or co-child in the same group.
%In addition, any variables in the group $m$ can be reached from any other variables in the same group by following the pairs of co-parents and co-child, possibly by multiple hops. \label{An2}
%\item (Graph structure) For all variables $s_a^m$, there exists at least one other variable $s_{a'}^m$ in the same group, 
%such that they both have a (at least one) same causally-related variable $s_b^{m_\ast}$, on some other group $m_\ast$ (depending on the pair $a$ and $a'$), with the same causal directions (i.e., $\lambda_{ab}^{mm_\ast} \lambda_{a'b'}^{mm_\ast} \neq 0$ or $\lambda_{ba}^{m_\ast m} \lambda_{b'a'}^{m_\ast m} \neq 0$).
%In addition, any variables can be reached from any other variables in the same group $m$ by following the pairs $(a, a')$ assumed above, possibly by multiple hops. \label{An2}
\item (Asymmetricity) 
There is no open subset $B$ of $\bar{\mathcal{S}}$ such that for all $x \neq y \in B$, it holds
%The function $\phi$ does not hold the relation: for all $x \neq y$ in any open subset of $\bar{\mathcal{S}}$,
\begin{align}
	\phi^{12}(x, y)  = c \phi^{12}(y, x)
\end{align}
with some constant $c \in \R$. \label{Aasym2}
%\item (Asymmetricity) 
%The function $\phi$ is asymmetric in the sense that
%it does not have the relation
%\begin{align}
%	\phi^{12}(x, y)  = c \phi^{12}(y, x)
%\end{align}
%with a constant $c \in \R$, for all $x$ and $y$ in any open subset of $\bar{\mathcal{S}}$. \label{Aasym2}
%\item (Asymmetricity) 
%The functions $\{\phi^{mm_k}\}_k$ are asymmetric in the sense that
%they do not have the relation
%\begin{align}
%	\frac{\partial^2}{\partial x \partial y} \rho^{mm_k}(x, y)  = c \frac{\partial^2}{\partial x \partial y} \rho^{mm_k}(y, x)
%\end{align}
%with a constant $c \in \R$, for all $x$ and $y$ in any open subset of $\bar{\mathcal{S}}$, for all $k$. \label{Aasym2}
\item (Non-factorizability) 
There is no open subset $B$ of $\bar{\mathcal{S}}$ such that for all $x \neq y \in B$, it holds \label{Aphi2}
%The function $\phi$ does not hold the relation: for all $x \neq y$ in any open subset of $\bar{\mathcal{S}}$,
\begin{align}
	\phi^{12}(x, y) = \alpha(x) \beta(y) \exp \left( \gamma(x, y) - \gamma(y, x) \right) \label{eq:phi_fact}
\end{align}
with some scalar functions $\alpha$, $\beta$, and $\gamma$.
%There is no open subset $B$ of $\bar{\mathcal{S}}$ such that for all $x \neq y \in B$, it holds \label{Aphi2}
%%The function $\phi$ does not hold the relation: for all $x \neq y$ in any open subset of $\bar{\mathcal{S}}$,
%\begin{align}
%	\phi^{12}(x, y) \phi^{12}(y, x) = \phi^{12}(x, x) \phi^{12}(y, y). \label{eq:q_nonlin_1}
%\end{align}
%for all $x \neq y$ in any open subset of $\bar{\mathcal{S}}$. \label{Aphi2}
%
%\begin{align}
%	\left( \phi^{12}(x, y) \right)^2 &= c \phi^{12}(x, x) \phi^{12}(y, y), \label{eq:q_nonlin_3}
%\end{align}
%with some constant $c \in \R$. \label{Aphi3}
%
%\item (Causal function) The function $\phi$ cannot be factorized as 
%\begin{align}
%	\phi^{12}(x, y) \phi^{12}(y, x) &= \phi^{12}(x, x) \phi^{12}(y, y), \label{eq:q_nonlin_1} \\
%	\text{and} \quad \left( \phi^{12}(x, y) \right)^2 &= \phi^{12}(x, x) \phi^{12}(y, y), \label{eq:q_nonlin_2}
%\end{align}
%for all $x \neq y$ in any open subset of $\bar{\mathcal{S}}$. \label{Aphi2}
%
%\item (Causal function) For all $m_\ast$ considered in \ref{An2}, the function $q^{mm_\ast}$ cannot be factorized as 
%\begin{align}
%	q^{mm_\ast}(x, y) q^{mm_\ast}(y, x) &= q^{mm_\ast}(x, x) q^{mm_\ast}(y, y), \label{eq:q_nonlin_1} \\
%	\text{and} \quad q^{mm_\ast}(x, y)^2 &= q^{mm_\ast}(x, x) q^{mm_\ast}(y, y), \label{eq:q_nonlin_2}
%\end{align}
%for all $x \neq y$ in any open subset of $\bar{\mathcal{S}}$. \label{Anonlin2}
\end{enumerate}
Then, for all group-pairs $(m, m')$ satisfying \ref{AL} and \ref{An2}, $(\m L^{mm'}, \m L^{m'm})$ are identifiable up to permutation of variables, linear scaling, and matrix transpose.
%Then, for each group-pair $(m, m')$ satisfying \ref{AL} and \ref{An2}, $(\m L^{mm'}, \m L^{m'm})$ are identifiable up to permutation of variables, linear scaling, and matrix transpose.
%Then, $(\m L^{mm'}, \m L^{m'm})$ are identifiable up to permutation of variables, linear scaling, and matrix transpose 
%for each pair of groups $(m, m')$ satisfying the assumptions.
\end{Proposition}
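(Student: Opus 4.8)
The plan is to follow the proof of Theorem~\ref{thm:cd3} line by line and to change only the step that promotes the variable-wise scalings to group-wise ones. First I would invoke Theorem~\ref{thm:id}: under the shared hypotheses every group $m$ in the groups of interest has its demixing recovered up to a permutation and variable-wise invertible scalars, so after sorting the indices the compound map reads $v_a^m(\s^m) = k_a^m(s_a^m)$ with each $k_a^m$ a $C^1$ invertible scalar function. As in the proof of Theorem~\ref{thm:cd3}, I take two parameterizations $\theta = \{\m L, \phi\}$ and $\tilde\theta = \{\tilde{\m L}, \phit\}$ that induce the same law of $\x$ and both obey the hypotheses, differentiate the log-density identity twice to obtain Eq.~\ref{eq:cd_eq12_exp3}, and apply Lemma~\ref{lemma:4} together with the asymmetricity \ref{Aasym2} to conclude that each variable's inter-group parent/child partition is preserved under $\tilde\theta$ up to a consistent global flip (causal directions can only flip all together within a group-pair). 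Nothing beyond the existing machinery is needed here, since \ref{Aasym2} coincides with \ref{Aasym3}.

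The new ingredient is how one forces $k_a^m = k_{a'}^m$ using only \emph{one} of a co-parent/co-child relation (Assumption~\ref{An2}), rather than both as in Theorem~\ref{thm:cd3}. Take a co-parent pair $(s_a^m, s_{a'}^m)$ with common child $s_b^{m_\ast}$ (the co-child case is identical after swapping the two arguments of $\phi$). Writing the $(a,b)$- and $(a',b)$-th scalar equations from Eq.~\ref{eq:cd_eq12_exp3}, dividing them, setting $s_a^m = s_{a'}^m =: y_1$ and $s_b^{m_\ast} =: y_2$, cancelling the shared $k_b^{m_\ast}$-Jacobian, and reparametrising by $w = k_a^m(y_1)$, $z = k_b^{m_\ast}(y_2)$, yields a cocycle-type functional equation
\begin{align}
	\phi^{12}(w, z) = C\, p'(w)\, \phi^{12}(p(w), z) \quad \text{for all } (w,z) \text{ in an open set}, \label{eq:cd2plan}
\end{align}
with $p = (k_{a'}^m)^{-1}\circ k_a^m$ and a constant $C$ (the same relation holds with $\phit^{12}$). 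If $p$ is the identity then $k_a^m = k_{a'}^m$ and we are done; so the crux is an auxiliary lemma playing the role Lemma~\ref{lemma:4} plays in Theorem~\ref{thm:cd3}: \emph{if Eq.~\ref{eq:cd2plan} holds on an open set with $p \neq \mathrm{id}$, then $\phi^{12}$ must have the factorized shape $\phi^{12}(x,y) = \alpha(x)\beta(y)\exp(\gamma(x,y)-\gamma(y,x))$ prohibited by \ref{Aphi2}}. The idea for the lemma is that Eq.~\ref{eq:cd2plan} is a Schr\"{o}der-type equation whose solutions satisfy $\log|\phi^{12}(x,y)| + \log|\phi^{12}(y,x)| = f(x)+f(y)$ on an open set once $p$ is a nontrivial reparametrisation (the symmetric part of $\log|\phi^{12}|$ degenerates to an additive-separable form), and that additive-separable symmetry is precisely equivalent to being of the forbidden form; the asymmetricity \ref{Aasym2} is used to discard the flipped-direction branch arising in the reduction. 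With the lemma in hand, $p = \mathrm{id}$ along every co-parent and every co-child edge, and the mixed-connectivity clause of \ref{An2} propagates this through the group so that $k_a^m = k^m$, a single scaling per group.

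Once the scalings are group-wise, the rest is verbatim Theorem~\ref{thm:cd3}: substituting $k_a^m = k^m$ into Eq.~\ref{eq:cd_eq12_exp3}, gathering all index pairs $(a,b) \in \Vsm \times \Vsmp$ into a matrix and replacing all first- and second-group variables by common $y_1, y_2$ gives the scalar-coefficient identity Eq.~\ref{eq:cd_eq12vec_1_vec_exp3}; uniform dependency forces zero entries of $\m L^{mm'}$ to zero entries of $\tilde{\m L}^{mm'}$, the asymmetricity \ref{Aasym2} forbids $\tilde{\m L}^{mm'}$ and $(\tilde{\m L}^{m'm})^{\T}$ from both being nonzero on distinct entries (and symmetrically on the $\theta$-side), which pins the support down to a transpose and the nonzero values to a single scalar $c^{mm'}$; adding the permutation indeterminacy from Theorem~\ref{thm:id} yields identifiability up to permutation, scaling, and matrix transpose. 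I expect the auxiliary lemma of the second paragraph to be the only genuine difficulty: converting the relation in Eq.~\ref{eq:cd2plan} into precisely the shape banned by \ref{Aphi2} requires a careful functional-equation analysis (Schr\"{o}der/Abel reduction, the increasing- versus fixed-point cases for $p$, and the book-keeping of which argument the reparametrisation acts on for co-parent versus co-child edges), and the particular form of \ref{Aphi2} is evidently chosen to make exactly this reduction close.
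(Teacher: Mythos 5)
Your proposal is correct and follows essentially the same route as the paper: the paper likewise reuses the proof of Theorem~\ref{thm:cd3} verbatim except for the step forcing $k_a^m = k_{a'}^m$, which it replaces by Lemma~\ref{lemma:3} (your auxiliary lemma) showing that the single-co-parent/co-child functional equation forces $\phi^{12}$ into exactly the factorized form $\alpha(x)\beta(y)\exp(\gamma(x,y)-\gamma(y,x))$ banned by \ref{Aphi2} unless the reparametrization is trivial, with Lemma~\ref{lemma:q} supplying the equivalence between the diagonal product identity and that factorized (additive-separable symmetric part) form, precisely as you sketch. Your identification of this lemma as the sole new difficulty, and of \ref{Aphi2} as being tailored to close it, matches the paper's argument.
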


This Proposition requires additional constraint on the function $\phi$ (\ref{Aphi2}) compared to Theorem~\ref{thm:cd3}.
%Assumption~\ref{Aphi2} has additional constraint Eq.~\ref{eq:q_nonlin_1} compared to Assumption~\ref{Aphi3} of Theorem~\ref{thm:cd3}.
%excludes the condition Eq.~\ref{eq:q_nonlin_1} additionally to Eq.~\ref{eq:q_nonlin_2} assumed in Theorem~\ref{thm:cd3} (Assumption~\ref{Aphi3}).
It restricts some factorization form of the cross-derivative of $\phi$.
This is similar to the non-factorizability in \ref{Aphi} of Theorem~\ref{thm:id}, but a bit stronger.
Such restriction on the factorization of the (cross-derivative of) $\phi$ is reasonable because the factorization of $\phi^{12}$ into some input-variable-wise factors 
$\alpha$ and $\beta$ would not be informative enough to fully determine the causal direction,
which is also the case for the factorization into an anti-symmetric function with some factor $\gamma$.

%and excludes, for example, any functions $\phi(x, y)$ given by a factorization $\alpha(x) \beta(y) \exp\left(\gamma(x, y) - \gamma(y, x) \right)$ with some scalar functions $\alpha$, $\beta$, and $\gamma$
%which excludes such as Gaussian CAMs (Eq.~\ref{eq:ngsem}), which is allowed in Theorem~\ref{thm:cd3}.
%Such explicit restriction on the factorization of the (cross-derivative of) causal function is reasonable because the factorization of $\phi^{12}$ into some input-variable-wise factors $\alpha$ and $\beta$ would not be informative enough to fully determine the causal direction (this factorization is also prohibited by Assumption~\ref{Aphi} in Theorem~\ref{thm:id}),
%and also that into an anti-symmetric function with some factor $\gamma$ is not informative enough either for the causal discovery.

Thanks to such stronger constraint on $\phi$, the assumption on the causal graph (\ref{An2}) is weaker than \ref{An3} (Theorem~\ref{thm:cd3});
it requires only either co-parent or co-children for each variable, rather than both of them as in \ref{An3}.
In addition, we can consider both co-parents and co-children pairs to reach from one variable to another (see Supplementary Material~\ref{sec:c1} for some illustrative examples).

\begin{proof}
Proof is basically the same as that of Theorem~\ref{thm:cd3} (Supplementary Material~\ref{sec:cd_proof}).
%with replacing the exponential family factorization model $\etab(\cdot)^\T \m T(\cdot)$ by the general form $\rho^{mm_k}(\cdot, \cdot)$.
%We replace Lemma~\ref{lemma:7} and Lemma~\ref{lemma:8} by Lemma~\ref{lemma:3} given below and Lemma~\ref{lemma:4} given above, respectively.
For showing $k_a^m = k_{a'}^m$ from Eq.~\ref{eq:cd_eq12vec_div_y_12_exp3}, we use Lemma~\ref{lemma:3} given below,
which only requires either co-parent or co-child for each variable in contrast to both of them as in Theorem~\ref{thm:cd3} (from Eq.~\ref{eq:cd3_eq_copa_2} to Eq.~\ref{eq:cd3_eq_k}), thanks to the additional assumption of the non-factorizability of the function $\phi$ (\ref{Aphi2}).
The remaining proof is the same as that of Theorem~\ref{thm:cd3} (Supplementary Material~\ref{sec:cd_proof}), and thus omitted.
\end{proof}

\begin{Lemma} \label{lemma:3}
Assume we have
\begin{align}
	\frac{\partial^2}{\partial x \partial y} \phi(k_1(x), k_2(y)) = \gamma \frac{\partial^2}{\partial x \partial y} \phi(k_3(x), k_2(y)) \label{eq:lemma3}
\end{align}
for all $x$ and $y$ in an open subset $B$ of $\bar{\mathcal{S}}$,
where $k_{\{1, 2, 3\}}: \R \rightarrow \R$ are $C^1$ scalar invertible functions, 
$\gamma \neq 0 \in \R$ is a constant scalar value,
and $\phi(\cdot, \cdot): \R^2 \rightarrow \R$ is a function whose cross-derivative satisfies the uniform-dependency (Definition~\ref{def:dependency}) and the condition~\ref{Aphi2}.
Then we have $k_1(x) = k_3(x)$, and $\gamma = 1$.
In addition, this is also the case even if the two arguments of $\phi$ are switched on the both-sides.
\end{Lemma}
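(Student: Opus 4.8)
\textbf{Proof plan for Lemma~\ref{lemma:3}.} The idea is to strip off the scalar functions $k_i$, reduce the hypothesis to a single functional equation for $\phi^{12}$, and then use the non-factorizability assumption \ref{Aphi2} to rule out anything other than $k_1=k_3$ and $\gamma=1$. First I would differentiate Eq.~\ref{eq:lemma3} by the chain rule and cancel the common factor $k_2'(y)$ (nonzero almost everywhere since $k_2$ is invertible), which leaves $k_1'(x)\,\phi^{12}(k_1(x),k_2(y)) = \gamma\,k_3'(x)\,\phi^{12}(k_3(x),k_2(y))$ on the open set $B\times B$. Setting $u=k_1(x)$, $v=k_2(y)$, and $\mu=k_3\circ k_1^{-1}$ — so that $\gamma k_3'(x)/k_1'(x)=\gamma\,\mu'(u)$ — this becomes
\begin{equation*}
\phi^{12}(u,v) \;=\; \gamma\,\mu'(u)\,\phi^{12}(\mu(u),v) \tag{$\star$}
\end{equation*}
on an open rectangle $U\times V$ (which I may take with $U=V$ after intersecting domains). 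Using uniform dependency of $\phi^{12}$ (its zero set contains no open subset) I shrink this rectangle so that $\phi^{12}$ is nowhere zero there, which legitimizes every division below.

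The core step is to show $\mu=\mathrm{id}$. Fix $v_0\in V$ and put $\Psi(u,v):=\phi^{12}(u,v)/\phi^{12}(u,v_0)$; dividing $(\star)$ evaluated at $v$ by $(\star)$ evaluated at $v_0$ cancels the factor $\gamma\mu'(u)$ and yields $\Psi(u,v)=\Psi(\mu(u),v)$, so $\Psi(\cdot,v)$ is invariant under $\mu$. Suppose $\mu\neq\mathrm{id}$. Assuming $\mu$ orientation-preserving (otherwise pass to $\mu^2$, for which $(\star)$ holds with prefactor $\gamma^2(\mu^2)'$; the decreasing-involution case is the one residual situation, discussed below), pick a maximal interval $(p,q)\subseteq U$ on which $\mu(u)>u$; then each forward orbit $\mu^n(u)$ increases to the fixed point $q$ of $\mu$, so by continuity $\Psi(u,v)=\lim_n\Psi(\mu^n(u),v)=\Psi(q,v)=:\Theta(v)$ for all $u\in(p,q)$. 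Hence $\phi^{12}(u,v)=\phi^{12}(u,v_0)\,\Theta(v)$ on $(p,q)\times V$, a multiplicatively separable form, which is precisely the $\gamma\equiv 0$ instance of the expression prohibited by \ref{Aphi2} — a contradiction. Therefore $\mu=\mathrm{id}$ on $B$, i.e.\ $k_1=k_3$; and then $k_1'=k_3'$, so $(\star)$ reduces to $\phi^{12}(u,v)=\gamma\,\phi^{12}(u,v)$ on a set where $\phi^{12}\neq 0$, whence $\gamma=1$. The variant of the claim with the two arguments of $\phi$ exchanged on both sides of Eq.~\ref{eq:lemma3} follows by applying the same argument to $(x,y)\mapsto\phi(y,x)$, whose cross-derivative satisfies the identical hypotheses (\ref{Aphi2} being symmetric under swapping the slots).

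\textbf{Main obstacle.} The delicate part is the step forcing $\mu=\mathrm{id}$, and it is twofold. (i) One must genuinely secure an open rectangle on which $\phi^{12}$ is nonvanishing, so that $\Psi$ and the limiting slice $\Theta=\Psi(q,\cdot)$ are well defined and nondegenerate; this is exactly where uniform dependency is essential, and some care is needed so that the forward $\mu$-orbits stay inside the chosen rectangle up to the fixed point. (ii) The orientation of $\mu$ must be controlled: passing to $\mu^2$ reduces to the orientation-preserving case except when $\mu$ is a decreasing involution, where applying $(\star)$ twice gives $1=\gamma^2\,\mu'(u)\,\mu'(\mu(u))$ which, together with the identity $\mu'(\mu(u))\mu'(u)=1$ coming from $\mu\circ\mu=\mathrm{id}$ and the sign of $\phi^{12}(u,v)/\phi^{12}(\mu(u),v)$ on the nonvanishing rectangle, must be analysed separately (and, for the linear reflection $\mu$, $(\star)$ again collapses $\phi^{12}$ to a separable form contradicting \ref{Aphi2}). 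I expect the bulk of the careful bookkeeping in the full proof to live in this orientation analysis.
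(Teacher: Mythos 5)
Your opening reduction coincides with the paper's: apply the chain rule, cancel $k_2'(y)$, and observe that the ratio $Q(x,y)=\phi^{12}(k_1(x),k_2(y))/\phi^{12}(k_3(x),k_2(y))$ is independent of $y$ (your equation $(\star)$ with $\mu=k_3\circ k_1^{-1}$). From there you diverge, and the divergence is where the gap sits. Your route to $\mu=\mathrm{id}$ is dynamical: iterate $u\mapsto\mu(u)$ to a fixed point $q$ and conclude that the $\mu$-invariant ratio $\Psi(\cdot,v)$ is constant, hence $\phi^{12}$ separable. Two things go wrong. (i) The map $\mu$ need not have a fixed point in the closure of the working domain at all (e.g.\ $\mu(u)>u$ throughout the interval on which you arranged $\phi^{12}\neq 0$, with the orbit exiting through the endpoint), and even when $q$ exists on the boundary, $\Psi(q,\cdot)$ requires $\phi^{12}(q,v_0)\neq 0$, which your shrinking argument does not provide at the boundary. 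You flag this as ``care,'' but it is not a bookkeeping issue: without an interior fixed point the iteration yields nothing, and no part of your argument supplies one. (ii) The decreasing-involution case is not actually closed: for such $\mu$, iterating $(\star)$ twice returns you to the starting point (giving only $\gamma^2\mu'(u)\mu'(\mu(u))=1$, which is automatically consistent with $\mu\circ\mu=\mathrm{id}$), and your parenthetical claim that $(\star)$ then ``collapses $\phi^{12}$ to a separable form'' is unsupported — the relation $\phi^{12}(u,v)=\gamma\mu'(u)\phi^{12}(\mu(u),v)$ is a self-consistent symmetry for a decreasing involution and does not by itself force separability.

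The paper's proof avoids dynamics entirely, and this is the idea your proposal is missing. Having established that $Q(x,\cdot)$ is constant in $y$, it evaluates $Q$ at the two particular points $y=(k_2)^{-1}\circ k_1(x)$ and $y=(k_2)^{-1}\circ k_3(x)$ and equates the results, which yields the pointwise product identity $\phi^{12}(a,b)\,\phi^{12}(b,a)=\phi^{12}(a,a)\,\phi^{12}(b,b)$ with $a=k_1(x)$, $b=k_3(x)$. Lemma~\ref{lemma:q} then shows that this identity characterizes exactly the factorized form $\alpha(x)\beta(y)\exp(\gamma(x,y)-\gamma(y,x))$ excluded by Assumption~\ref{Aphi2} — note that the excluded class is strictly larger than the plain separability $\alpha(u)\beta(v)$ you aim for, and the skew-symmetric factor is precisely what absorbs the ``involution-like'' configurations that your orientation analysis leaves open. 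This two-point substitution treats increasing and decreasing $\mu$ uniformly, needs no fixed point, and is the step you would need to import (or replace with a complete orbit analysis, including a genuine argument for the no-fixed-point and decreasing-involution cases) to make your proof go through.
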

\begin{proof}
By the chain rule of the derivatives,
\begin{align}
	\phi^{12}(k_1(x), k_2(y)) \frac{\partial}{\partial x} k_1(x) = \gamma \phi^{12}(k_3(x), k_2(y)) \frac{\partial}{\partial x} k_3(x), \label{eq:lemma3_2}
\end{align}
where $\phi^{12}(x, y) = \frac{\partial^2}{\partial x \partial y} \phi(x, y)$, and the derivatives of $k_{\{1, 2, 3\}}$ are non-zeros almost everywhere from their invertibility.
The derivatives of $k_2(y)$ on both sides canceled out.

By a simple calculation from Eq.~\ref{eq:lemma3_2} and the uniform dependency, 
we can see that a function $Q(x, y) = \frac{\phi^{12}(k_1(x), k_2(y))}{\phi^{12}(k_3(x), k_2(y))} $ does not depend on $y$.
Due to this, we can consider $Q(x, y)$ with two different values of $y$, especially $y = (k_2)^{-1} \circ k_1(x)$ and $(k_2)^{-1} \circ k_3(x)$, and obtain an equation
\begin{align}
	\frac{\phi^{12}(k_1(x), k_1(x))}{\phi^{12}(k_3(x), k_1(x))} &= \frac{\phi^{12}(k_1(x), k_3(x))}{\phi^{12}(k_3(x), k_3(x))}, \nonumber \\
	\implies \phi^{12}(k_1(x), k_3(x)) \phi^{12}(k_3(x), k_1(x)) &= \phi^{12}(k_1(x), k_1(x)) \phi^{12}(k_3(x), k_3(x)).
\end{align}
However, this equation indicates that the function $\phi^{12}$ can be factorized as Eq.~\ref{eq:phi_fact} from Lemma~\ref{lemma:q} given below, which is prohibited by Assumption~\ref{Aphi2} unless $k_1 = k_3$.
Therefore we have $k_1 = k_3$ by contradiction. Putting this into Eq.~\ref{eq:lemma3} also indicates that $\gamma = 1$.
The same result can be obtained in the same manner even if the two arguments of $\phi$ are switched on the both-sides of Eq.~\ref{eq:lemma3}.
Then Lemma is proven.

%However, this factorization form of $\phi^{12}$ is prohibited by Assumption~\ref{Aphi2} unless $k_1 = k_3$ (to see this, put Eq.~\ref{eq:q_nonlin_1} into this).
%Therefore we have $k_1 = k_3$ by contradiction. Putting this into Eq.~\ref{eq:lemma3} also indicates that $\gamma = 1$.
%Then Lemma is proven.

\end{proof}

\begin{Lemma} \label{lemma:q}
The equation of a two variable function $q$ with uniform-dependency given as
\begin{align}
	q(x, y) q(y, x) = q(x, x) q(y, y) \label{eq:q_skew}
\end{align}
holds if and only if the function $q$ can be factorized as
\begin{align}
	q(x, y) = \alpha(x) \beta(y) \exp \left( \gamma(x, y) - \gamma(y, x) \right) \label{eq:q_fact}
\end{align}
for some scalar functions $\alpha$, $\beta$, and $\gamma$.
\end{Lemma}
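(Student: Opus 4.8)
The plan is to prove the two directions of the equivalence separately, with essentially all of the content lying in the ``only if'' direction.

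For the ``if'' direction I would simply substitute $q(x,y) = \alpha(x)\beta(y)\exp(\gamma(x,y) - \gamma(y,x))$ into the left-hand side of Eq.~\ref{eq:q_skew}: the exponential factors multiply to $\exp\big((\gamma(x,y)-\gamma(y,x)) + (\gamma(y,x)-\gamma(x,y))\big) = 1$, leaving $q(x,y)q(y,x) = \alpha(x)\alpha(y)\beta(x)\beta(y)$, which equals $q(x,x)q(y,y) = \alpha(x)\beta(x)\cdot\alpha(y)\beta(y)$ because $\gamma(x,x)-\gamma(x,x) = 0$. This is a one-line verification that uses nothing beyond the postulated form of $q$.

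For the ``only if'' direction the idea is to pass to logarithms, at which point the identity becomes \emph{additive} and the factorization is forced almost tautologically. First I would localize: by uniform-dependency the zero set of $q$ contains no open set, so $\{q\neq 0\}$ is open (using continuity of $q$, which holds in our application where $q=\phi^{12}$ is a derivative of the sufficiently smooth $\phi$) and dense, and on any connected component $q$ has a constant sign $\sigma\in\{\pm 1\}$; since both sides of Eq.~\ref{eq:q_skew} are invariant under $q\mapsto -q$, I may replace $q$ by $\sigma q>0$ on that component and restore the sign at the end by absorbing it into $\alpha$. On such a set put $L(x,y) := \log q(x,y)$, so the hypothesis reads $L(x,y) + L(y,x) = L(x,x) + L(y,y)$. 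Then
\[
L(x,y) \;=\; \tfrac12 L(x,x) + \tfrac12 L(y,y) + \tfrac12\big(L(x,y) - L(y,x)\big),
\]
because, using the hypothesis to rewrite $\tfrac12 L(x,x) + \tfrac12 L(y,y)$ as $\tfrac12\big(L(x,y)+L(y,x)\big)$, the right-hand side collapses to $L(x,y)$. Exponentiating yields precisely $q(x,y) = \alpha(x)\beta(y)\exp(\gamma(x,y) - \gamma(y,x))$ with $\alpha(x) = \beta(x) := \sqrt{q(x,x)}$ and $\gamma(x,y) := \tfrac12\log q(x,y)$ (in fact with $\alpha=\beta$, which is stronger than required, matching the form in Eq.~\ref{eq:q_fact}).

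The main obstacle is not the algebra --- which is essentially forced once one passes to logs --- but the bookkeeping around the zeros and sign of $q$: $\log q$ is only defined where $q>0$, so one must first localize to an open set on which $q$ has constant sign and check that this sign can be absorbed into $\alpha$ without spoiling the form. Uniform-dependency together with continuity of $q$ is exactly what makes this localization go through, and the resulting factorization on an open subset of $\bar{\mathcal{S}}$ is precisely the local statement that is needed when this lemma is invoked inside the proof of Lemma~\ref{lemma:3}.
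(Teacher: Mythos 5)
Your proof is correct, but the ``only if'' direction takes a genuinely different route from the paper's. The paper takes $\log\lvert q\rvert$ of both sides and then applies the cross-derivative $\frac{\partial^2}{\partial x\,\partial y}$, which annihilates the diagonal terms $\log\lvert q(x,x)\rvert$ and $\log\lvert q(y,y)\rvert$ and leaves the skew-symmetry relation $\frac{\partial^2}{\partial x\,\partial y}\log\lvert q(x,y)\rvert + \frac{\partial^2}{\partial x\,\partial y}\log\lvert q(y,x)\rvert = 0$; it then decomposes this skew-symmetric function as $\bar{\gamma}(x,y)-\bar{\gamma}(y,x)$ and integrates back up, the functions of $x$ alone and $y$ alone reappearing as constants of integration. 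Your argument skips the differentiation and integration entirely: the identity $L(x,y)=\tfrac12 L(x,x)+\tfrac12 L(y,y)+\tfrac12\bigl(L(x,y)-L(y,x)\bigr)$, verified directly from the hypothesis, already \emph{is} the factorization, with the explicit witnesses $\alpha=\beta=\sqrt{q(\cdot,\cdot)}$ and $\gamma=\tfrac12\log q$. This is more elementary (no smoothness of $q$ beyond what is needed to define it, no integration constants to track) and arguably cleaner. Both proofs share the same soft spot --- one must restrict to a region where $q$, including its diagonal values $q(x,x)$, is nonvanishing and of controlled sign before taking logarithms; the paper waves at this with ``considering the possible flipping of signs'' while you localize to a component of $\{q\neq 0\}$, and neither treatment fully verifies that the four points $(x,y)$, $(y,x)$, $(x,x)$, $(y,y)$ can be kept in a common constant-sign region --- but your handling is no less careful than the paper's, and the local statement you obtain is exactly what is consumed in the proof of Lemma~\ref{lemma:3}.
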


\begin{proof}
We take absolute values and logarithms on both sides of Eq.~\ref{eq:q_skew}, and then take derivatives with respect to $x$ and $y$, and obtain
\begin{align}
	\frac{\partial^2}{\partial x \partial y} \left( \log \lvert q(x, y) \rvert \right) + \frac{\partial^2}{\partial x \partial y} \left( \log \lvert q(y, x) \rvert \right) = 0.
\end{align}
This skew-symmetric equation holds if and only if the function can be represented by
\begin{align}
	\frac{\partial^2}{\partial x \partial y} \left( \log \lvert q(x, y) \rvert \right) = \bar{\gamma}(x, y) - \bar{\gamma}(y, x),
\end{align}
with some scalar function $\bar{\gamma}$.
Taking integrals and exponential both sides (also considering the possible flipping of signs), we obtain Eq.~\ref{eq:q_fact}, where $\gamma$ corresponds to the integral function of $\bar{\gamma}$.
Note that this factorization form of $q$ indeed gives the Eq.~\ref{eq:q_skew}.
Then the Lemma is proven.
\end{proof}

\section{Illustrative Discussion about Assumptions~\ref{An3} and \ref{An2}}
\label{sec:c1}

\begin{figure*}[t]
 \centering
 \includegraphics[width=0.7\columnwidth]{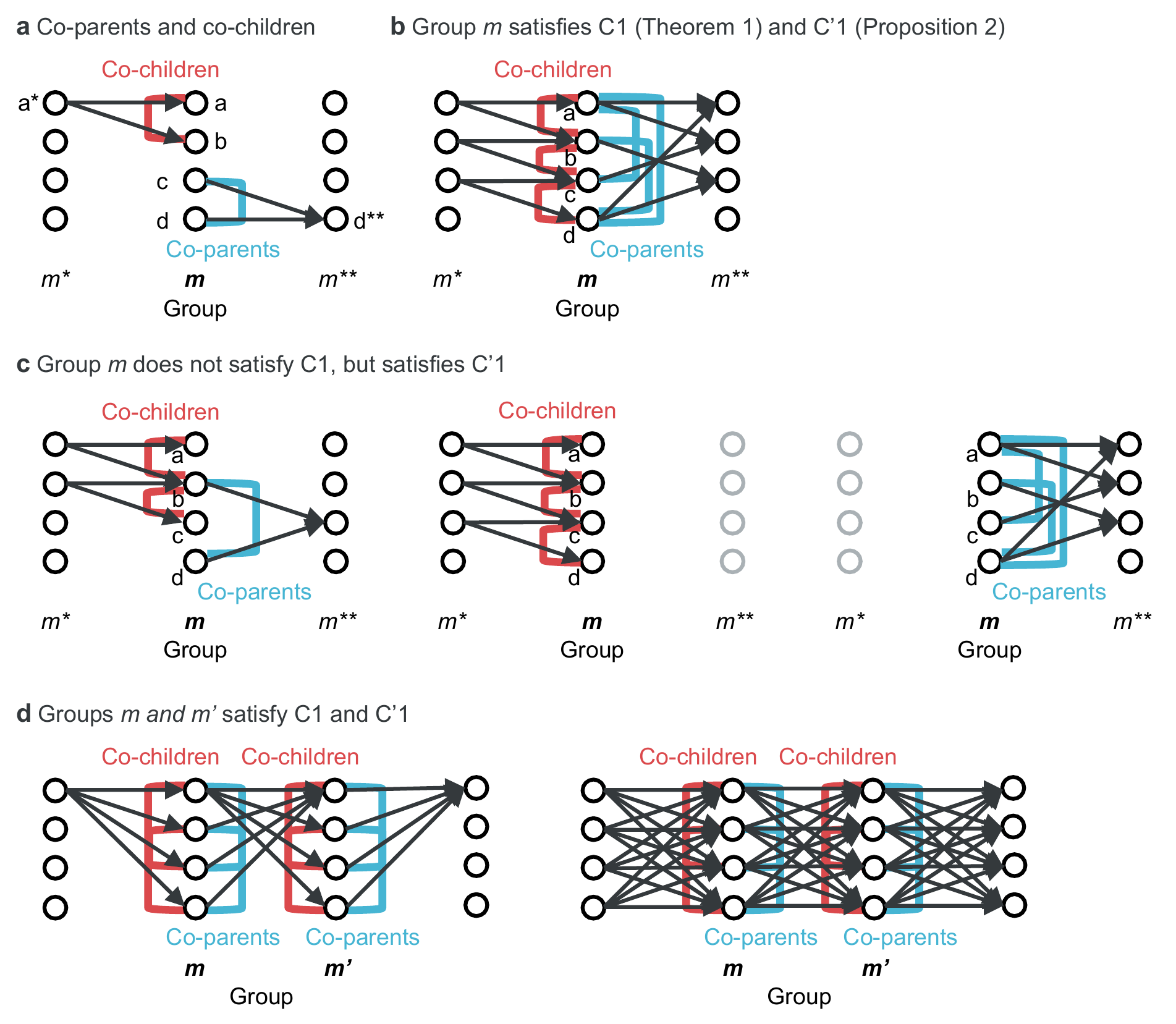}
 \caption{(\textsf{\textbf{a}}) Illustrative description of the {\it co-parents} and {\it co-children}. (\textsf{\textbf{b}}--\textsf{\textbf{d}}) Illustrative examples of some causal graphs, which (do not) satisfy Assumption~\ref{An3} (Theorem~\ref{thm:cd3}) and/or \ref{An2} (Proposition~\ref{thm:cd2}). 
% Note that the variable~$a$ in a group~$m$ ($s_a^m$) and the variable~$a$ in the other group~$m^\ast$ ($s_a^{m^\ast}$) are completely different variables.
 }
 \label{fig:c1}
\end{figure*}

We give here additional discussion about Assumption~\ref{An3} (Theorem~\ref{thm:cd3}) and \ref{An2} (Proposition~\ref{thm:cd2}).
Firstly, Fig.~\ref{fig:c1}a illustrates the definitions of the {\it co-parent} and {\it co-child}.
From the perspective of variable~$a$ in group $m$ ($s_a^m$), variable~$b$ ($s_b^m$) is a {\it co-child} since it has the same parent in some other group $m^\ast$ ($s_{a^\ast}^{m^\ast}$).
Similarly, from the perspective of variable~$c$ in group $m$ ($s_c^m$), variable~$d$ ($s_d^m$) is a {\it co-parent} since it has the same child in some other group $m^{\ast\ast}$ ($s_{d^{\ast\ast}}^{m^{\ast\ast}}$).
The parent $s_{a^\ast}^{m^\ast}$ and the child $s_{d^{\ast\ast}}^{m^{\ast\ast}}$ to be considered can be arbitrary selected from any other group and variable-index, and $m^\ast$ can be even the same to $m^{\ast\ast}$ (but not $m$).
%The groups and the variables of the parent $s_a^{m'}$ and child $s_d^{m''}$ can be arbitrary selected, and $m'$ can be even the same to $m''$.
In this example, group~$m$ does not satisfy both \ref{An3} and \ref{An2} since variables do not have either co-parent or co-child (\ref{An3}),
and thus some variables cannot be reached from some other variables through co-parent-and-child-paths (\ref{An3} and \ref{An2}).

For illustrative purpose, we start from a rather sparse causal graph as shown in Fig.~\ref{fig:c1}b, where group~$m$ can satisfy Assumption~\ref{An3}
(and also \ref{An2}; note that we here focus only on a single group $m$ for simplicity, while \ref{An3} and \ref{An2} actually require both of the two target groups $(m, m')$ to satisfy the same condition to identify the causal graph between them).
%Note that in order to identify the adjacency matrix between group $m$ and $m'$, both of those groups need to satisfy this condition.
%We firstly illustrate Assumption~\ref{An3} by using a rather sparse graph.
%Fig.~\ref{fig:c1}b shows an example of such causal graph, where the variables in group $m$ can satisfy Assumption~\ref{An3}.
In this example, every variable in the group $m$ has both (at least one) co-parent and co-child in the same group,
and can reach any other variables in the same group by moving to one of its co-children (red paths; similarly for co-parents, blue paths), possibly by multiple hops.
For example, we can reach from $s_a^m$ to $s_d^m$ on the co-children side via the path $s_a^m \rightarrow s_b^m \rightarrow s_c^m \rightarrow s_d^m$ in three hops (red paths),
and via the path $s_a^m \rightarrow s_d^m$ in a single hop on the co-parents side (blue paths).
%This is the same for co-parents (blue path).
%Note that the variables in the other groups $m^\ast$ and $m^{\ast\ast}$, on the other hand, cannot satisfy the condition
%because there are some variables which do not have either or both of co-parent and co-child in this specific example.

On the other hand, if we remove some edges from Fig.~\ref{fig:c1}b, \ref{An3} cannot be satisfied anymore;
in the all three examples shown in Fig.~\ref{fig:c1}c, some variables do not have either co-parent or co-child,
and thus cannot reach some other variables on either or both of the co-parent-side and co-child-side.
On the other hand, all of them can satisfy \ref{An2} (Proposition~\ref{thm:cd2}) since every variable has {\it either} co-parent or co-child in the same group,
and can reach any other variables in the same group by moving through {\it either} co-parent-path or co-child-path for each hop.
Those examples indicate that \ref{An2} allows much sparser graphs than \ref{An3}, though it requires an additional assumption on the function $\phi$ (\ref{Aphi2}).
%Fig.~\ref{fig:c1}c is a bit more sparse, and does not satisfy Assumption~\ref{An3}.
%In Fig.~\ref{fig:c1}c, all variables in group~$m$ have both a co-parent and a co-child in the same group. However, some variables are not achievable from some others
%for the both sides of co-children (red) and co-parents (blue);
%for example, we cannot reach from $s_a^m$ to $s_d^m$ in both of the co-child-paths and co-parent-paths.

%The examples above imply that 
Assumption~\ref{An3} would be fulfilled as long as the connections between groups are not too sparse.
Fig.~\ref{fig:c1}d show some such examples, where both of the two target groups $(m, m')$ satisfy \ref{An3},
and thus the causal graph between them can be identified.
Especially the right side of Fig.~\ref{fig:c1}d corresponds to the example given in Section~\ref{sec:cd};
fully-connected autoregressive temporal causality to the subsequent time-point (group), where the groups $m$ and $m'$ correspond to time-points $t-1$ and $t$, respectively.
%Those examples satisfy Assumption~\ref{An3}, and thus the causal graph between the groups $(m, m')$ can be identified.
%In those examples, we can identify the causal graph between the groups $(m, m')$.
As we can see, for every variable in a group (time-point in time-series cases), all other variables in the same group are both co-parents and co-children in this case.
Note that due to Assumption~\ref{AL} of Theorem~\ref{thm:id}, causal strengths need to be sufficiently different across edges in this case.

%--------------------------------------------------------------------------%
%--------------------------------------------------------------------------%
% Related works
%--------------------------------------------------------------------------%
%--------------------------------------------------------------------------%
\section{Related Works}
\label{sec:related}

%\paragraph{Contrastive learning}
%
%Contrastive learning has been found to be useful.
%%
%Originally introduce by Gutmann.
%%
%Later extended to nonlinear ICA.
%%
%There are many frameworks fundamentally based on contrastive-learning; such as image-representation learning, natural-language processing.

\paragraph{Disentangled Representation Learning and Nonlinear ICA}

Revealing fundamental representation (latent variables $\s$) generating the observational data $\x$ in a data-driven manner is called representation learning \citep{Bengio2013}.
This is supposed to be achieved by assuming some underlying observational process (e.g., mixing function $\f$ in Eq.~\ref{eq:fg}; $\x = \f(\s)$), and then by disentangling it (estimating the inverse model $\g = \f^{-1}$) in a data-driven manner from the observations.
%However, such inverse problem is in general ill-posed, and thus some level of assumptions are required for its identifiability and also interpretability.
%One of the common assumptions used for representation learning is {\it mutual independence} of the latent variables, as generally assumed in independent component analysis (ICA),
%and many others including variants of variational autoencoders (VAEs) \citep{Chen2018, Higgins2017, Kim2018, Kingma2014}, and so on.
%However, it is well-known, especially in the context of ICA, that such independence assumption alone is not sufficient for the identifiability.
However, such inverse problem is in general ill-posed, and there could exist many different combinations of components $\s$ and mixing function $\f$ which can explain the same observational data.
The main focus of representation learning is thus to find the conditions where the components can be determined uniquely, for its interpretability, applicability, and reproducibility.
Such model is called {\it identifiable}, and satisfies the condition $\forall (\theta, \theta')$, $p(\x; \theta) = p(\x; \theta') \Rightarrow \theta = \theta'$,
which indicates that we can uniquely identify the parameters $\theta$ of the model (the observation model $\f$ or equivalently the demixing model $\g = \f^{-1}$) from the data distribution alone.

Independent component analysis (ICA) is one of such representation learning frameworks made to identifiable based on some assumptions on the latent variables.
As the name suggests, ICA assumes that the latent components are  {\it mutually independent}, $p(\s) = \prod_i p_i(s_i)$,
%ICA models observations ($\x$) as mixtures of {\it mutually independent} latent components ($\s$) through an unknown observational mixing function $\f$ (Eq.~\ref{eq:fg});
%i.e., $p(\s) = \prod_i p_i(s_i)$,
%This assumption is also used int 
as in many other representation learning frameworks including variants of variational autoencoders (VAEs) \citep{Chen2018, Higgins2017, Kim2018, Kingma2014}, and so on.
However, it is well-known that such independence assumption alone is not sufficient for the identifiability \citep{Comon1994}.
%The goal of ICA is then to recover the original latent components from the observations by estimating the inverse mixing function $\g = \f^{-1}$.
%It is well-known that such mutual independence assumption alone is not sufficient for the identifiability.
%Since this is ill-posed and there could exist many different combinations of components $\s$ and mixing $\f$ satisfying the model in general, 
%the main focus of ICA is to find the conditions where the components can be determined uniquely.
%Such model is called {\it identifiable}, and satisfies the condition $\forall (\theta, \theta')$, $p(\x; \theta) = p(\x; \theta') \Rightarrow \theta = \theta'$,
%which indicates that we can uniquely identify the parameters $\theta$ generating the observations (in ICA, the function $\f$ or equivalently $\g$) from the data distribution alone.
%The equation might be held for some different parameters rather than for a unique one
%due to some trivial indeterminacy, such as scalings and permutation of the components in ICA, which are fundamentally inevitable.
The key idea of ICA is thus to give some additional assumptions on the components so as to give the identifiability.
For example, it is well-known that when the mixing $\f$ is linear and samples are independent and identically distributed (i.i.d.), an additional assumption of non-Gaussianity (up to one Gaussian component) can make the model identifiable \citep{Comon1994}.

When the observational mixing $\f$ is nonlinear, the identifiability condition becomes much severer;
it is known that the i.i.d.~with non-Gaussianity assumption successfully used in the linear case above is not acceptable anymore in the nonlinear models \citep{Hyvarinen1999, Locatello2019}.
Nonlinear ICA (NICA) was firstly shown to be identifiable by assuming that the components are not i.i.d., but rather modulated depending on some additional (possibly latent) information associated with each sample \citep{Khemakhem2020, Hyvarinen2016, Hyvarinen2017, Hyvarinen2019, Klindt2021, Sorrenson2020, Sprekeler2014}.
More specifically, by assuming that the components are {\it conditionally} mutually independent given an (possibly unobservable) additional auxiliary variable $\ve u$, i.e., $p(\s | \ve u) = \prod_i p_i(s_i | \ve u)$,
and the $p_i$ is sufficiently modulated by $\ve u$, the model was shown to be identifiable \citep{Hyvarinen2019}.
There exist many ways to consider $\ve u$; temporal- or spatial-segment-index \citep{Hyvarinen2016, Morioka2020}, components on the other samples \citep{Hyvarinen2017, Hyvarinen2019, Halva2021, Klindt2021},  state-index of hidden Markov process \citep{Halva2020}, positive-negative-samples \citep{Zimmermann2021}, 
observation-target-samples \citep{Roeder2021}, and so on.
Efficient estimation framework is also a crucial problem of NICA.
Maximum-likelihood estimation (MLE) \citep{Halva2020} or variational estimation \citep{Halva2021, Khemakhem2020} were shown to give reasonable inference.
On the other hand, if $\ve u$ is directly observable, a self-supervised (or weakly-supervised) learning framework might be also proposed by using it as a target label or weak supervision \citep{Hyvarinen2016, Hyvarinen2017, Hyvarinen2019, Morioka2020, Morioka2021, Zimmermann2021}, which are empirically known to achieve better performances \citep{Morioka2021}.

%We cannot apply multimodal CRL frameworks Saturma since it only estimates latent varibels shared across groups.
%Among multi-view representation learning frameworks, \citet{Lyu2022} estimates both shared and private variables.
%On the other hand, most of others only estimate latent variables shared across groups \citep{Daunhawer2023, Gresele2020, Kugelgen2021, Locatello2020}

The other direction for the identifiable NICA is to give some constraints on the mixing models $\f$, with weaker assumptions on the latent components;
such as local isometry \citep{Horan2021}, volume preservation \citep{Yang2022}, independent mechanism \citep{Gresele2021}, sparseness \citep{Moran2021}, Brenier map \citep{Wang2021b}, and a piecewise affine function \citep{Kivva2022}.
It was found recently that local isometry gives identifiability up to linear transformation \citep{Horan2021}.
\citet{Yang2022} showed identifiability by assuming that $\f$ is volume-preserving, the latent components are factorial multivariate Gaussian, and there exist two distinctive observations of auxiliary variable.
Structural sparsity of the observational model is shown to give the identifiability \citep{Zheng2022, Zheng2023}.
Assuming multiple views from latent variables was also studied \citep{Gresele2020, Locatello2020}.
%Assuming multiple views from latent variables was also studied \citep{Daunhawer2023, Gresele2020, Kugelgen2021, Locatello2020, Lyu2022}.
%Assuming a specific combination of some observational functions were also studied  \citep{Gresele2020, Locatello2020}.
Independent mechanism analysis (IMA) was shown to solve some well-known type of indeterminacy of NICA \citep{Gresele2021}, though its full-identifiability has not been resolved.
\citet{Moran2021} proposed sparse VAE and showed the identifiability with additional anchor feature assumption.
\citet{Wang2021b} showed VAE with Brenier map enables identifiability without auxiliary information.
\citet{Willetts2022} empirically showed the possibility of identifiable NICA with clustering structure of the latent variables based on a Gaussian mixture model (VaDE; \citet{Jiang2017}),
and \citet{Kivva2022} actually showed the identifiability of such model by additionally assuming that $\f$ is a piecewise affine function.
%\citet{Kivva2022} assumed that $\f$ is a piecewise affine function and the latent variables are characterized by 
%a finite mixture model depending on some (unobservable, possibly multidimensional) discrete auxiliary variable.
They also showed that those assumptions make the latent components identifiable up to an affine transform even with (conditional) dependency between them,
% latent variables can be identified up to an affine transform even with (conditional) dependency between latent variables,
though (conditional) mutual independence is required if we need stronger identifiability.

%The representation learning frameworks generally assume that the components are mutually independent each other.
%The estimation performances are affected when ground truth factors of the observational data are correlated \citep{Trauble2021}.

\paragraph{Causal Discovery}

Causal discovery aims to estimate causal relations among variables from their observations in a data-driven manner.
One of the major approaches is called a model-based approach, which models causal relations of variables based on some parametric models, such as Bayesian networks (BNs) or state-equation models (SEMs), and then estimate the causal graph (or adjacency matrix) from the observations in a data-driven manner.
%The model-based approaches typically assume causal relations based on some parametric models.
One of the most general models is BNs \citep{Pearl2000}, which represent a causal graph among variables by a factorization of their joint distribution into some conditional distributions representing the conditional independence of the variables; i.e., $p(\x) = \prod_{a \in \V} p_a(x_a | \text{ pa}(x_a) )$.
Although BNs are flexible, recovering the graph from the joint distribution alone is not generally possible because many different graphs can have exactly the same joint distribution  \citep{Andersson1997, Spirtes2001}. 
Some studies showed that suitable assumptions on the type of the conditional distributions enable identifiability of the causal structure, such as Poisson distribution \citep{Park2015, Park2019}, generalized hypergeometric distribution \citep{Park2019aistats}, and zero-inflated Poisson model \citep{Choi2020}.
A very closely related framework is given by SEMs \citep{Bollen1989}.
%also called structural/functional causal models (SCM or FCM).
%which represent causal dependencies of variables. 
Since SEMs are not generally identifiable \citep{Bollen1989, Geiger1994, Pearl2000}, similarly to BNs, 
some further assumptions were proposed to guarantee the identifiability:
linear acyclic models with non-Gaussian noise \citep{Shimizu2006, Shimizu2011},
additive noise models excluding linear functions \citep{Hoyer2008, Hyvarinen2013, Peters2014}, post-nonlinear models \citep{Zhang2009}, and so on.
The SEMs can be also extended to time series \citep{Gong2015, Hyvarinen2010},
and models with latent confounding factors \citep{Hoyer2008IJAR, Maeda2020, Shimizu2014}, and so on. More recently, general nonlinear SEMs with non-additive noise have been proven to be identifiable by assuming nonstationarity of the noise \citep{Monti2020, Wu2020}, though limited to bivariate settings. 
%Some studies found identifiability conditions even under influences of latent confounders \citep{Maeda2020}.

\paragraph{Causal Representation Learning}

Causal representation learning (CRL) \citep{Schlkopf2021} assumes the same observational models as NICA (Eq.~\ref{eq:fg}), while the latent variables are not mutually independent but causally dependent each other;
the causal mechanism $p(\s)$ is given, for example, by a BN or SEM as in causal discovery studies (see above).
%In contrast to ICA, causal representation learning (CRL) models observations as mixings of {\it causally dependent} variables \citep{Schlkopf2021};
%i.e., $\x = \f(\s)$, similarly to ICA, while $\s$ is not mutually independent, but rather $p(\s)$ is, for example, given by Bayesian network models representing the (conditional) dependency between variables by a product of conditional distributions (see Eq.~\ref{eq:bn}).
%given as a product of conditional distributions representing the (conditional) dependency between variables.
The focus of CRL is to estimate both of the (high-level) latent causal variables and the causal graph from the (low-level) observations simultaneously,
which would be achievable by jointly performing representation learning and causal discovery.
However, CRL is supposed to be highly ill-posed without any assumptions, as a combination of two notoriously ill-posed problems of NICA and causal discovery (see above), 
and the degree of the indeterminacy should be even worse compared to those two individual problems;
 causal discovery can be seen as a special case of CRL, where the latent variables are directly observable ($\f$ is the identity mapping),
and NICA can be seen as a special case of CRL as well, where all variables are mutually independent and thus $p(\s)$ is simply given by a product of variable-wise distributions. 
%This can be technically seen as a combination of representation learning and causal discovery mentioned above;
%the causal discovery can be seen as a special case of CRL, where the latent variables are directly observable ($\f$ is the identity mapping),
%and NICA can be seen as a special case of CRL as well, where all variables are mutually independent and thus $p(\s)$ is simply given by a product of variable-wise distributions. 
%When the latent components are directly observable ($\f$ is the identity mapping), this problem equivalents to causal discovery.
%Causal representation learning, which is a combination of representation learning and causal discovery, has been receiving increasing attention recently \citep{Schlkopf2021}.
%The essential difference of CRL from ICA is that CRL assumes observational mixing across {\it causally dependent} variables rather than mutually independent components.
%
%NICA can be seen as a special case of CRL, where all components do not have any parental nodes.
%CRL can be seen as a general case of NICA, considering NICA corresponds to the special case where the conditional distribution Eq.~\ref{eq:bn} is given by component-wise un-conditional distribution for each component.
%CRL is thus supposed to be highly ill-posed without any assumptions, as a combination of notoriously ill-posed NICA and causal discovery, 
%and the degree of the indeterminacy should be even worse compared to those two problems.
Unfortunately, simply assuming causal structure on the latent space \citep{Leeb2022} would not be enough for giving identifiability.
Some studies recently succeeded to guarantee the identifiability by assuming some constraints on the latent variables, 
such as linear SEM with supervision or intervention \citep{Buchholz2023, Liu2023, Shen2022, Squires2023, Varici2023, Yang2021}, 
or more general causal relations but with do-interventions \citep{Ahuja2022b}, perfect intervention with unknown targets \citep{Kugelgen2023}, or linear mixing \citep{Varici2023},
discrete latent variables with {\it mixture oracles} \citep{Kivva2021},
%linear mixing and SEM with intervention \citep{Squires2023, Varici2023},
%nonlinear mixing and linear SEM with intervention \citep{Buchholz2023},
purity of the children of (subsets of) variables \citep{Cai2019, Xie2020, Xie2022},
or access to paired counterfactual data \citep{Ahuja2022a, Brehmer2022}.
%
%or perterbations or intervention \citep{Ahuja2022a, Ahuja2022b} 
%In contrast, CCL assumes tensorial data with nonlinear (asymmetric) causal effects on continuous latent variables. 
Independently Modulated Component Analysis (IMCA) \citep{Khemakhem2020beem} was proposed as an extension of NICA to allow some dependency across variables, 
though it requires weak-supervision (observable auxiliary variables), and only considers one-to-one relations between the latent variables and the auxiliary variables.
Crucially, all of those frameworks require some level of supervision or intervention for each sample for the identifiability.
\citet{Kivva2021} requires information of the number of components $k(S)$ of the mixture model (so-called {\it mixture oracle}) for every subsets $S$ of observational variables.
Estimating the number of componetns $k$ itself is a long-standing challenge of finite mixture models, and infeasible or unstable in high-dimensional cases in practice, though they proposed a heuristic method for low dimensional case.
Other studies \citep{Lachapelle2022, Lippe2022, Yao2022b, Yao2022a} have used temporal causal relations and are thus only applicable to time-series data (many of them also require weak-supervisions or interventions). %, in contr
Although \citet{Lippe2023} extended the temporal causality to also include instantaneous one, the identifiability condition is still highly dependent on the temporal structure.
%\citep{Lachapelle2022} shows the identifiability by assuming the sparseness of the latent causal dynamics.

%Buchholz 2023, Nonlinear mixing, linear DAG, intervention
%Squires 2023, Linear mixing
%Varici 2023, Linear mixing

Recently the concept of grouping of variables for CRL, similarly to our work, was proposed by \citet{Daunhawer2023, Lyu2022, Morioka2023, Sturma2023, Yao2023}.
Most of them, except for \citet{Morioka2023}, especially focused on the intersections between groups.
\citet{Sturma2023} showed that by considering multiple domains (corresponding to {\it groups} in this study) sharing some latent variables, 
the latent variables shared across all domains can be identified.
Their causal and observational models are limited to linear models since their identifiability theorem and estimation algorithm are in principle based on linear ICA.
%The important difference of our framework is that 
%1) we require the latent variables to {\it not} overlap across domains (groups), then identify the {\it inter-group} causal graphs,
%and 2) our causal and observational models are much more general;
%we can consider nonlinear causal relations (Eq.~\ref{eq:ps}) and general nonlinear observational mixings (Eq.~\ref{eq:f}), while \citet{Sturma2023} assumed linear SEMs and linear observational mixings
%since it is in principle based on linear ICA.
%
\citet{Daunhawer2023, Lyu2022, Yao2023} considered more general causal and observational models;
a nonlinear observational mixing for each view/modality (corresponding to {\it group} in this study) without assuming latent causal models explicitly.
Those studies showed the identifiability of the latent variables corresponding to the intersection of the (subset of) groups.
However, due to their less-restrictive models compared to \citet{Sturma2023}, the identifiability is limited to only up to block(intersection)-wise transformations.
\citet{Lyu2022, Yao2023} also showed the identifiability of the group-specific (private) variables by assuming that they are independent on the intersections.
\citet{Daunhawer2023, Lyu2022} are limited to two group settings, while \citet{Yao2023} extended them to more than two groups.
%
%\citet{Yao2023} considered a less-restrictive model than \citet{Sturma2023};
%a nonlinear observational mixing for each view (corresponding to {\it group} in this study) without assuming latent causal models explicitly.
%The authors then showed the block-identifiability of the group of latent variables, corresponding to the intersection of each subset of groups.
%Due to the less-restrictive model compared to \citet{Sturma2023}, the identifiability is limited to only up to block(intersection)-wise functions.
%Although the model is supposed to be less restrictive than \citet{Morioka2023, Sturma2023}, the identifiability is limited to only up to block-wise functions.
%
\citet{Morioka2023} proposed a CRL framework called connectivity-contrastive learning (CCL) designed for (homogeneous-)sensor-network-type architectures.
In contrast to the group-based CRL frameworks mentioned above, the goal of this framework is to estimate the (causally-related) group-specific (private) variables.
Their model can be seen as a very special case of ours;
1)~CCL assumes the mixing functions $\f^m$ are the same for all groups $m$ rather than group-specific as in ours (Eq.~\ref{eq:f}; the dimensions $\dsm$ can be also different across groups $m$ in ours), 2)~only considers component-wise relations between groups similarly to NICA (in other words, the adjacency coefficients $\lambda_{ab}^{mm'}$ in Eq.~\ref{eq:ps} can have non-zero values only when $a = b$, rather than all pairs of $(a, b)$ as in ours)\footnote{Note the difference of the notation of $(a, b)$ in our model Eq.~\ref{eq:ps} and that in Eq.~2 of \citet{Morioka2023}. The indices $(a, b)$ in our model indicate the {\it variables}, while those in CCL indicate the {\it groups} (called nodes in CCL). The causal graphs are considered separately for each component in CCL ($j$ in Eq.~2 of \citet{Morioka2023}), while our model can even consider causal relations between every components (variables) without assuming independence.},
and 3)~the graph is a forest, while ours can be more general and even cyclic.
These indicate much higher generality and applicability of our model (see Illustrative Examples) compared to CCL (homogeneous-sensor-network-type architectures).
We also emphasize that the estimation frameworks are very different too, though both of them can be categorized as self-supervised (contrastive) learning;
CCL uses group(node)-paired data for taking contrast, while our G-CaRL uses group-wise-shuffled data (Eq.~\ref{eq:xtilde}).

% limited to a specific type of CRL; they assumed two dimensional latent variables representing {\it node} $\times$ {\it modalities} for each sample $\m S^{(n)}$, and then jointly performed disentanglement across {\it modalities} and causal discovery across {\it nodes}.
%We can see some connections to ours by seeing that the {\it nodes} in \cite{Morioka2023} corresponds to {\it groups} in this study.
%Importantly, our model is more general than \cite{Morioka2023} in the sense that 1) no independence across variables is assumed in our model, while \cite{Morioka2023} assumed mutual independence across {\it components} after the disentanglement of the modalities,
%and 2) the observational mixings ($\f^m$) and the dimensions ($\dsm$) can be different across groups $m$, while \cite{Morioka2023} assumed the same observational mixing $\f$ across all groups (nodes).

%--------------------------------------------------------------------------%
%--------------------------------------------------------------------------%
\section{Implementation Detail for Experiments}
\label{sec:app_exp}

We give here more detail on the data generation, training, and evaluation in Experiments (Section~\ref{sec:exp}).
%Also see the Supplementary Code for the implementation and evaluation of G-CaRL.
The codes are available at \url{https://github.com/hmorioka/GCaRL}.

%--------------------------------------------------------------------------%
%--------------------------------------------------------------------------%
\subsection{Simulation~1: DAG}
\label{sec:app_sim1}

\paragraph{Data Generation}

We generated artificial data based on the generative model described in Section~\ref{sec:model}.
Basically, the latent variables $\s^{(n)}$ were generated probabilistically for each sample $n$ based on the pairwise BN causal model parameterized by an adjacency matrix $\m L$ (Eq.~\ref{eq:ps}), and then observed through nonlinear observational mixings $\f^m$ for each group $m$, after being divided into $M$ groups (Eq.~\ref{eq:f}).

The whole causal graph $\m L$ was designed to be a DAG (see Supplementary Fig.~\ref{fig:w_sample}a for some examples).
More specifically; the variables are causally ordered from $1$ to $D_\mathcal{S}$, such that no later variable $b$ causes any earlier variable $a < b$,
and divided into non-overlapping $M$ groups in order (i.e., the first $d_{\mathcal{S}}^1$ variables are group-1, and the next $d_{\mathcal{S}}^2$ variables are group-2, and so on).
The whole causal graph $\m L$ was generated separately for each sub-graphs; intra-group sub-graphs denoted as $\{\m L^{mm}\}_{m \in \M}$, and inter-group sub-graphs $\{\m L^{mm'}\}_{(m, m')}$.
The intra-group sub-graph $\m L^{mm} \in \R^{\dsm \times \dsm}$ was generated as a DAG for each group $m$, 
where each variable $s_a^m$ was give one other randomly selected variable $s_{a'}^m$, $a' < a$, in the same group $m$ as a parent 
(except for the first variable in the group).
We used a special structure for the first group $\m L^{11}$, where the number of parents (if they have) were fixed to 2 to avoid strong correlations between variables within the group, which can happen when the variables have only one parent.
The inter-group sub-graphs $\m L^{mm'} \in \R^{\dsm \times \dsmp}$ were generated randomly for each group-pair $(m, m')$, $m <  m'$, so that each variable $s_a^m$ on a group $m$ has (almost) two children on other groups $m'$, and conversely, each variable $s_b^{m'}$ on a group $m'$ has (almost) two parents on the group $m$.
The inter-group sub-graph on the opposite direction $\m L^{m'm}$ is empty (zero-matrix) due to the causal ordering.
In total, each variable on the $m$-th group ($m \geq 2$) has almost $2m - 1$ causal parents on average.
The non-zero values of $\m L$ were randomly drawn from $[0.9, 1]$,
and then divided by the number of parents for each variable (column) so that the standard deviations of the variables were approximately the same
regardless of the number of parents (also see below).

The latent variables were then sampled based on the following conditional distribution for each variable $s_a^{m}$ at $n$-th sample:
\begin{align}
	s_{a}^{m(n)} \sim \exp \sum_{m' \neq m} \sum_{b \in \Vsmp} - \lambda_{ba}^{m'm} \left| s_a^{m(n)} - \alpha \tanh (\beta s_b^{m'(n)}) \right|,\label{eq:sim_ps}
\end{align}
where 
$\alpha = 3$ and $\beta=0.8$ are scalar coefficients.
This indicates that the sample $s_a^{m(n)}$ is randomly generated through a (piecewise) Laplace distribution
with a standard deviation modulated by the inverse of the (summation of) $\lambda_{ba}^{m'm}$, and its average is biased by the activities of its parents, after nonlinearly transformed by $\tanh(\cdot)$.
The non-parental variables do not directly influence $s_a^{m}$ because the corresponding coefficients $\lambda_{ba}^{m'm}$ are set to zeros as mentioned above.
This sampling distribution indicates that the function $\phi$ in Eq.~\ref{eq:ps} is given by, 
\begin{align}
        \phi(x, y) = \left| y - \alpha \tanh(\beta x) \right|. \label{eq:sim_phi}
\end{align}
%which can satisfy the assumptions of Theorems~\ref{thm:id} and \ref{thm:cd}.
This function $\phi$ slightly violates the Assumption~\ref{Aphi} of Theorem~\ref{thm:id}, so we can investigate the robustness of our theory at the same time.
A typical smooth approximation of the Laplace density, such as $\phi(x, y) = - \sqrt{(y - \alpha \tanh(\beta x))^2 + \epsilon}$ with some small $\epsilon$, would satisfy the assumption.
%$G(u) = - \sqrt{u^2 + \epsilon}$, would satisfy the assumption.
%On the other hand, Assumption~\ref{Anp} is not obvious.

For the observation model $\f^m: \R^{\dsm} \rightarrow \R^{\dxm}$, we used a multilayer perceptron (MLP) with $L$~layers (excluding the input layer) with random parameters,
which takes a $\dsm$-dimensional latent variable $\s^{m(n)}$ and then outputs a $\dxm$-dimensional observation $\x^{m(n)}$ for each group $m \in \M$ and sample $n$.
To guarantee the invertibility, we fixed $\dsm = \dxm = d^m$ and the number of units of each layer to $d^m$,
and used leaky ReLU units for the nonlinearity except for the last layer which has no-nonlinearity.

The number of groups ($M$) was 3, the number of variables on each group ($\dsm = \dxm = d^m$) was 10 for all groups (i.e., the number of variables $D_\mathcal{S}$ was $30$ in total),
the number of data points $n$ was $2^{16} = 65,536$, and the complexity (the number of layers) of the observational mixing model $L$ was 3.
We also evaluated the performances with changing those parameters to see how they affect the estimation performances (Supplementary Fig.~\ref{fig:sim1_LMD}a).

\paragraph{Training (G-CaRL)}

We trained the nonlinear regression function in Eq.~\ref{eq:crl_r} with the observed data by G-CaRL.
We adopted MLP for each $\h^m: \R^{\dxm} \rightarrow \R^{\dsm}$ ($h_\text{MLP}$), whose outputs are supposed to represent the latent variables after the training (Theorem~\ref{thm:crl}). 
The number of layers was selected to be the same as that of the observation model ($L$),
and the number of units in each layer was $2d^m$ except for the output ($d^m$),
so as to make it have enough number of parameters as the demixing model.
A {\it maxout} unit was used as the activation function in the hidden layers,
which was constructed by taking the maximum across two affine fully connected weight groups,
while no-nonlinearity was applied at the output (last layer).

The function $\psi: \R^2 \rightarrow \R$ in the regression function (Eq.~\ref{eq:crl_r}) was parameterized by
\begin{align}
        w_{ab}^{mm'}  \psi(x, y) = w_{1ab}^{mm'} \left| w_{2ab}^{mm'} y + |w_{2ab}^{mm'}| \text{MLP}(x) \right|, \label{eq:psi_sim1}
\end{align}
where 
$w_{1ab}^{mm'}$ and $w_{2ab}^{mm'}$ are weight parameters, which are supposed to give the estimation of the causal structure $(\lambdat_{ab}^{mm'})$ by $(w_{1ab}^{mm'} |w_{2ab}^{mm'}|)$ after training (see Supplementary Fig.~\ref{fig:w_sample}a for some examples).
The nonlinear function $\text{MLP}(\cdot): \R \rightarrow \R$ was parameterized by a learnable two-layer MLP with hyperbolic tangent (tanh) activation units.
This function has enough degree of freedom to represent the true $\phi$ (Eq.~\ref{eq:sim_phi}).
The intra-group functions $\bar{\psi}^m$ in Eq.~\ref{eq:crl_r} were also parameterized by MLP.

Those nonlinear functions were then trained by back-propagation with a momentum term (SGD)
so as to optimize the cross-entropy loss of LR with a regression function Eq.~\ref{eq:crl_r} (Supplementary Algorithm~\ref{alg:gcarl}).
The initial parameters were randomly drawn from a uniform distribution or some non-informative constant values. 
The number of iterations for the optimization depends on the complexity of the model;
e.g., convergence of a three-layer model by G-CaRL took about 3 hours (Intel Xeon 3.6 GHz 16 core CPUs, 384 GB Memory, NVIDIA Tesla A100 GPU),
though we continued the training longer for safety.

\paragraph{Evaluation}
We evaluated the estimation performances of the latent variables and the causal structures by comparing the estimations with the true values.
The learning was performed for 10 runs with changing the parameters of the observation model and the causal structures.

The estimated latent variables $\h^m(\cdot)$ were evaluated by their Pearson correlation to the true values $\s^m$ across samples.
Since the order of the variable index is undetermined for each group (Theorems~\ref{thm:id} and \ref{thm:crl}), we performed
an optimal assignment of the variable indices ($\ve\sigma^m(\cdot): \Vsm \rightarrow \Vsm$) between the estimations and the true ones by the Munkres assignment algorithm \citep{Munkres1957}, 
maximizing the mean absolute-correlation coefficients, for each group $m$.
The variable-wise accuracies (correlations) were then averaged over all variables.

For evaluations of the estimated causal structures $\tilde{\m L}^{mm'} = (\lambdat_{ab}^{mm'}) = (w_{1ab}^{mm'} |w_{2ab}^{mm'}|)$ (see Supplementary Fig.~\ref{fig:w_sample}a for some examples), we at first converted them into binary directed (not necessarily DAG) adjacency matrices by the following procedure:
we determined the causal direction on every pairs $(a, b) \in \Vsm \times \Vsmp$ by comparing the absolute values of $\lambdat_{ab}^{mm'}$ and $\lambdat_{ba}^{m'm}$;
direction is $s_a^m \rightarrow s_b^{m'}$ if $| \lambdat_{ab}^{mm'} | > | \lambdat_{ba}^{m'm} |$, and vice versa.
We then removed edges whose absolute weights were less than a specific ratio (35\% for Simulation~1) of the maximum absolute values of both $\tilde{\m L}^{mm'}$ and $\tilde{\m L}^{m'm}$ for each group-pair $(m, m')$.
If both $| \lambdat_{ab}^{mm'} |$ and $| \lambdat_{ba}^{m'm} |$ are under the threshold, $s_a^m$ and $s_b^{m'}$ are considered to have no direct causal relation.
The obtained adjacency matrices were then compared with the (binarized) true causal structure $(\lambda_{ab}^{mm'})$,
and evaluated by F1-score ($=$ 2 $\cdot$ precision $\cdot$ recall / (precision $+$ recall)).
%and Structural Hamming Distance (SHD) normalized by the number of variables $D_\mathcal{S}$.
This kind of hard thresholding is known to be effective to reduce the number of false discoveries \citep{Zheng2018},
and seems to be especially important for methods like G-CaRL which do not explicitly impose sparseness or DAG structure constraints for the estimation.
%We used (undirected-)group-pair-wise thresholding due to the indeterminacy of the group-pair-specific scaling of $\lambdat_{ab}^{mm'}$ (Theorem~\ref{thm:cd}),
%and also due to the difference of the scales of the true $\lambda_{ab}^{mm'}$, depending on the number of parents (see Data generation;
%note that the threshold ratio determined above is the same across all group-pairs, while the maximum values of $| \lambdat_{ab}^{mm'} |$ are group-pair-dependent).
The threshold-ratio was determined separately for each experiment (simulation~1 and 2, and gene regulatory network recovery), 
but it was not changed across the parameter settings or runs within each experiment.
Our preliminary analyses showed that the G-CaRL framework was not so sensitive to the selection of the threshold values,
which can be seen from the ROC curves with varying threshold (Supplementary Fig.~\ref{fig:roc}).
Although G-CaRL is supposed to have indeterminacy of the causal graphs with its group-pair-wise matrix-transposes (Theorem~\ref{thm:cd3}), 
we solved that indeterminacy by giving some level of constraints to the function $\psi$ (Eq.~\ref{eq:psi_sim1}),
where the function cannot be fitted into the opposite direction by its design.
Therefore we directly used $\tilde{\m L}^{mm'}$ as the final guess of
$\m L^{mm'}$ for each group-pair $(m, m')$ without considering the possible matrix transpose.
For solving the possible permutation of variables, we used the same permutations $\ve\sigma^m$ and $\ve\sigma^{m'}$ estimated based on the latent variables above.
%We only evaluated the {\it inter-group} causal connections, since we used our BN-based causal model, and it only identifies those (Theorem~\ref{thm:cd3}).
We only evaluated the {\it inter-group} causal connections, since only those are identifiable in our model (Theorem~\ref{thm:cd3}).

\paragraph{Baselines} 
The baselines only include {\it unsupervised} frameworks with {\it instantaneous} (causal) dependency, since our experimental setting does not include supervision, intervention, nor temporal causality.
Specifically, we showed the comparisons to three CRL frameworks MVCRL \citep{Yao2023}, CausalVAE (\citet{Yang2021} in unsupervised setting), and CCL \citep{Morioka2023}, 
and three representation learning frameworks MFCVAE (\citet{Falck2021, Kivva2022}), VaDE \citep{Jiang2017, Kivva2022, Willetts2022} and $\beta$-VAE \citep{Higgins2017} (see Supplementary Material~\ref{sec:app_baseline} for details).
We used publicly available implementations of them.
We also applied a CRL framework \citet{Kivva2021} as well, though it failed due to the difficulty of the estimation of the mixture model in our data.
Note that we cannot apply many of the existing CRL frameworks designed for instantaneous causal models (e.g., \citet{Shen2022}) 
since they usually require supervision or interventions, which are not available in this experiment.
We cannot apply the multi-domain unsupervised CRL framework proposed by \citet{Sturma2023} since the groups have no overlap in our setting.
\citet{Daunhawer2023, Lyu2022} are not applicable since they are limited to two group settings. 
We do not consider frameworks based on temporal causality, such as \citet{Lachapelle2022, Lippe2022, Yao2022b, Yao2022a},
since the samples are generated independently, with only instantaneous causality in this experiment.
For a fair comparison, we used the same architecture (group-wise disentanglement) for the encoders of those baselines as in the feature extractors $\h^m$ of G-CaRL.

For baselines which do not estimate the (whole or part of) causal graph by themselves (MVCRL, CCL, MFCVAE, VaDE, and $\beta$-VAE), we additionally applied a causal discovery framework to the estimated latent variables as a post-processing,
from a wide variety of selections so as to maximize the performances (Supplementary Fig.~\ref{fig:h_cd});
DirectLiNGAM \citep{Shimizu2011}, NOTEARS \citep{Zheng2018}, NOTEARS-MLP \citep{Zheng2020},
GOLEM \citep{Ng2020}, PC \citep{Spirtes1991}, CAM \citep{Buhlmann2014}, and CCD \citep{Lacerda2008}.
Briefly, DirectLiNGAM, NOTEARS, and GOLEM are specialized at linear DAGs,
CAM and NOTEARS-MLP are for nonlinear DAGs,
and CCD assumes existence of directed cycles (See Supplementary Material~\ref{sec:app_baseline}).

We used the same evaluation criteria for those baselines to that of G-CaRL
(causal direction determination, thresholding, and variable assignments).
The threshold was determined separately for each method, so as to maximize the F1-score
(see Supplementary Fig.~\ref{fig:roc} for the effect of the varying threshold). 
Although some causal discovery frameworks listed above have a function to adjust the threshold so as to make the estimated graphs DAG, 
we instead applied the same thresholding method to ours, without constraining the acyclicity on the final graph.
For some causal discovery frameworks which output a binarized adjacency matrix, we directly compared them with the binarized true adjacency matrices, after variable assignments.
Since some of them output graphs possibly with some bi-directional (or undetermined) edges,
we gave the {\it true} directions to them favorably.

To see the difficulty of our latent causal model for the conventional causal discovery frameworks,
we also applied the causal discovery frameworks listed above directly to the latent variables (Supplementary Fig.~\ref{fig:h_cd}). 
In this case, we applied them after standardizing the latent variables (zero-mean and unit-variance for each variable), as suggested by \citep{Reisach2021}.

%For the frameworks which output weighted adjacency matrices (CausalVAE, DirectLiNGAM, NOTEARS-linear, NOTEARS-MLP, and GOLEM), we used the same evaluation criteria to that of G-CaRL
%(causal direction determination, thresholding, and variable assignments).
%The threshold was determined separately for each method, so as to maximize the F1-score
%(see Supplementary Fig.~\ref{fig:roc} for the effect of the varying threshold). 
%Although some baselines have a function to adjust the threshold so as to make the estimated graphs DAG, 
%we instead applied the same thresholding method to ours, without constraining the acyclicity on the final graph.
%The other frameworks which output a binarized adjacency matrix, we directly compared them with the binarized true adjacency matrices, after variable assignments.
%Since some of them output graphs possibly with some bi-directional (or undetermined) edges,
%we gave the {\it true} directions to them favorably.
%%For a fair comparison, we optimally chose either the originally estimated adjacency matrix or its transpose as the final guess for each modality.

%Note that when we apply the causal discovery baselines directly to the latent variables,
%we applied them after standardization of the latent variables (zero-mean and unit-variance for each variable),
%as suggested by \citep{Reisach2021}

%--------------------------------------------------------------------------%
%--------------------------------------------------------------------------%
\subsection{Simulation~2: Cyclic Graphs with Latent Confounders}
\label{sec:app_sim2}

We give here more detail on the data generation and training in Simulation~2 (Section~\ref{sec:sim2}).
Evaluation methods are the same to those in Simulation~1 (see Supplementary Material~\ref{sec:app_sim1}),

\paragraph{Data Generation}
We generated artificial data in a similar manner to Simulation~1,
though the causal graphs were designed to be much more complex, due to the presence of directed cycles and latent confounders.
Basically, we firstly generated latent variables with twice of the target size of variables with possible cycles, and then simply masked half of them as unobservable variables (latent confounders) alternately for each group; there are 10 observable (non-confounder) variables and 10 latent confounders for each group $m$ ($\dsm=10, D=30$ for observable variables).

The whole causal graph (including latent confounders) was generated separately for each group-pair as in Simulation~1, but here without considering the causal order.
The intra-group sub-graphs $\m L^{mm}$ was generated randomly so that
each variable $s_a^m$ has one other randomly selected variable $s_{a'}^m$, $a' \neq a$, in the same group as a parent.
The inter-group sub-graphs $\m L^{mm'}$ were generated randomly for each group-pair $(m, m')$ so that each variable $s_a^m$ on a group $m$ has two children on other groups $m'$, and conversely, each variable $s_b^{m'}$ on a group $m'$ has (almost) two parents on the group $m$.
And similarly for the opposite direction $\m L^{m'm}$.
At this point, each variable is supposed to have $2M - 1$ causal parents (including the latent confounders).
The non-zero values of $\m L$ were randomly drawn from $[0.9, 1]$.
%and then divided by the number of parents for each variable (column) so that the standard deviations of variables were approximately the same
%regardless of the number of parents (also see below). 

The latent variables were then sampled based on the following conditional distribution for each variable $s_a^m$ at $n$-th sample:
\begin{align}
	s_{a}^{m(n)} \sim \exp \left( \sum_{m' \neq m} \sum_{b \in \Vsmp} - \frac{\lambda_{ba}^{m'm}}{| \text{pa}(s_a^m) |} \left(s_b^{m'(n)} + | \text{pa}(s_a^m) | \text{Relu}(s_a^{m(n)}) \right)^2 \right),\label{eq:sim_ps}
\end{align}
where $\text{pa}(s_a^m)$ is the set of parents (including latent confounders) of variable $s_a^m$, deduced from the adjacency matrix, $| \text{pa}(s_a^m) |$ is the number of parents,
and $\text{Relu}(x) = \max(0, x)$ is a rectified linear unit.
This indicates that the activity $s_a^{m(n)}$ is randomly generated through a Gaussian distribution
with a standard deviation modulated by the inverse of root of summation of $\lambda_{ba}^{m'm}$, 
and its average is negatively biased by the positive-, but not by negative-, activities of its parents (nonlinear inhibitory connection).
The non-parental variables do not directly influence $s_a^{m}$ because the corresponding coefficients $\lambda_{ba}^{m'm}$ are zeros as mentioned above.
The inverse-scaling of $\lambda_{ba}^{m'm}$ by $| \text{pa}(s_a^m) |$ was used so that the (conditional) standard deviations of variables were approximately the same
regardless of the number of parents. 
This sampling distribution indicates that the function $\phi$ in Eq.~\ref{eq:ps_exp} is given by, with a simple calculation,
\begin{align}
        \phi(x, y) = y \text{Relu}(x). \label{eq:sim2_phi}
\end{align}
Since this causal graph can have a directed cycle, we generated the data realizations based on Gibbs sampling.

After generating the latent variables, we masked half of the variables as latent confounders alternately.
Since each variable needs to be causally related to at least one of the variables on some other group (Assumption~\ref{AL}),
we generated the causal graph under a constraint that each variable $s_a^m$ has one observable child and one observable parent on all of the other groups $m' \neq m$ after the masking, in the graph generation above.
%Importantly, due to Assumption~\ref{Aphi}, each variable needs to be causally related to at least one of the variables on another group.
%We thus generated the causal graph under a constraint that each variable $s_a^m$ has one observable child and one observable parent on all of the other groups $m' \neq m$ in the graph generation above.

We used MLPs for the observation models $\f^m: \R^{\dsm} \rightarrow \R^{\dxm}$, as in Simulation~1.

The number of groups ($M$) was 3, the number of the observable variables was 10 for all groups (i.e., $\dsm = \dxm = d^m = 10$;  the number of variables $D_\mathcal{S}$ was $30$ in total, and the number of latent confounds was also 30),
The number of data points $n$ was $2^{20}$, and the complexity (the number of layers) of the observational mixing model $L$ was 3.
We also evaluated the performances with changing those parameters to see how they affect the estimation performances (Supplementary Fig.~\ref{fig:sim1_LMD}b).

\paragraph{Training (G-CaRL)}

We train the nonlinear regression function in Eq.~\ref{eq:crl_r} with the observed data by G-CaRL.
The model is basically the same as that used in Simulation~1, except for the regression function.
The function $\psi: \R^2 \rightarrow \R$ in the regression function (Eq.~\ref{eq:crl_r}) was parameterized by
\begin{align}
        \psi(x, y) = y \max(a_1^{mm'} (x - b_1^{mm'}), a_{2}^{mm'} (x - b_2^{mm'}))
\end{align}
with some scalar parameters $a_1^{mm'}, b_1^{mm'}, a_{2}^{mm'}$ and $b_2^{mm'}$.
This is based on the idea of maxout unit, and has enough degree of freedom to represent the causal effect function $\phi$ (Eq.~\ref{eq:sim2_phi}).
%where 
%$w_{ab}^{mm'}$ are weight parameters, which are supposed to give the estimation of the causal structure $(\lambdat_{ab}^{mm'})$ after training 
%(Theorem~\ref{thm:cd3}; see Supplementary Fig.~\ref{fig:w_sample}b for some examples).
%This nonlinear function with the trainable parameters $a_1^{mm'}, b_1^{mm'}, a_{2}^{mm'}$ and $b_2^{mm'}$
% is based on the idea of maxout unit, and has enough degree of freedom to represent the causal effect function $\phi$ (Eq.~\ref{eq:sim2_phi}).
The intra-group functions $\bar{\psi}^m$ were parameterized by MLP.
The weight parameters $w_{ab}^{mm'}$ in the regression function (Eq.~\ref{eq:crl_r}) are supposed to give the estimation of the causal structure $(\lambdat_{ab}^{mm'})$ after training 
(Theorem~\ref{thm:cd3}; see Supplementary Fig.~\ref{fig:w_sample}b for some examples).

%\paragraph{Baselines}
%Among the causal discovery frameworks listed in Section~\ref{sec:app_sim2} (Baselines) for the post-processing,
%we excluded NOTEARS and NOTEARS-MLP in this simulation because they did not give reasonable estimations, possibly due to the presence of causal cycles.

%--------------------------------------------------------------------------%
%--------------------------------------------------------------------------%
\subsection{Recovery of Gene Regulatory Network}
\label{sec:app_gene}

We used synthetic single-cell gene expression data generated by SERGIO \citep{Dibaeinia2020},
where each gene expression is governed by a stochastic differential equation (SDE) derived from a chemical Langevin equation, 
with activating or repressing causal interactions with the other genes.
The gene expression data generated by SERGIO were shown to be statistically comparable to real experimental data \citep{Dibaeinia2020}.
We used the same parameters for the differential equations as in \citep{Dibaeinia2020}, 
but changed the hill coefficient from 2 to 6 to make the causal relations more nonlinear.

The causal graph was designed to be a DAG (as required of SERGIO) similarly to Simulation~1, 
but with latent confounders similarly to Simulation~2 (Supplementary Fig.~\ref{fig:w_sample}c shows  examples). 
More specifically, the intra-group sub-graphs $\m L^{mm}$ was generated randomly in same way used in Simulation~1,
while the inter-group sub-graphs $\m L^{mm'}$ were generated randomly for each group-pair $(m, m')$ in the same way used in Simulation~2, but only for the group pairs $m < m'$.
To make almost all genes have children on some other group, we designated the last gene of each group $m < M$ as the leaves (have no children),
and connected genes on the last group (group-$M$) to those genes as parents.
The number of variables (genes) including the latent confounders was fixed to 60, we then masked half of them as the latent confounders,
and divided them into 3 groups (i.e., $M=3$, and $\dsm=10, D=30$ for non-latent-confounder variables).
The maximum contributions (weights of edges) from parental genes to target genes were set to 0.25 for all edges.
We set half of the parents as activating, and the others as repressing for each gene.
See Supplementary Fig.~\ref{fig:w_sample}c for some example.
The genes (variables) which do not have any parents were assigned as master regulators (MRs), 
and controlled by basal production rates, randomly selected from $[0.25, 0.75]$.
We fixed the number of samples to $2^{18}$.

For the observation model $\f: \R^{\dsm} \rightarrow \R^{\dxm}$, we used a multilayer perceptron (MLP) with $L$ layers (excluding the input layer) similarly to Simulations~1 and 2
because there is no known realistic settings of the observational mixings in this kind of gene expression data, to the best of our knowledge.
The complexity (the number of layers) of the observational mixing model $L$ was fixed 3, similarly to Simulations~1 and 2.

The function $\psi: \R^2 \rightarrow \R$ in the regression function (Eq.~\ref{eq:crl_r}) was parameterized as
\begin{align}
        \psi(x, y) = y  \sum_{k=1}^K a_{k} \tanh(b_{k} x + c_{k}),
\end{align}
where $K=5$ is a model order, $a_{k}$, $b_{k}$, and $c_{k}$ are trainable scalar parameters.
The weight parameters $w_{ab}^{mm'}$ in the regression function (Eq.~\ref{eq:crl_r}) are supposed to give the estimation of the causal structure $(\lambdat_{ab}^{mm'})$ after training 
(Theorem~\ref{thm:cd3}; see Supplementary Fig.~\ref{fig:w_sample}c for some examples).

%--------------------------------------------------------------------------%
%--------------------------------------------------------------------------%
\subsection{High-Dimensional Image Observation}
\label{sec:app_sim3}

\paragraph{Data Generation}

We additionally evaluated our framework by using high-dimensional image-generation process as the observational model.
%replacing the observational model based on MLPs by that based on high-dimensional image-generation process.
We especially used 3DIdent image dataset \citep{Zimmermann2021}.
The dataset is compose of images of an object (tea pot) rendered with image-size $224 \times 224$, conditioned on ten continuous factors (called hereafter {\it image-factors}; see Supplementary Fig.~\ref{fig:sim2_image}); three dimensional positions of the object ($\text{PosX}_\text{obj}$, $\text{PosY}_\text{obj}$, and $\text{PosZ}_\text{obj}$), three dimensional rotation of the object in Euler angles ($\text{RotX}_\text{obj}$, $\text{RotY}_\text{obj}$, and $\text{RotZ}_\text{obj}$), the color of the object and the ground of the scene ($\text{Color}_\text{obj}$ and $\text{Color}_\text{bg}$), and the position and color of the spotlight ($\text{Pos}_\text{light}$ and $\text{Color}_\text{light}$).
%Although the image-factors were originally generated mutually independently, we used them as if they are causally-related by selecting proper combination of images.

We firstly generated {\it candidates} of the latent causal variables based on the same way used in Simulation~2 (cyclic graphs with latent confounders),
especially with fixing the number of latent variables to 10 (there also exist 10 additional latent confounders) for all group (see Supplementary Fig.~\ref{fig:sim2_image} for example). 
%Since the latent factors behind the images are ten dinmensional, 
%We generated 10 latent variables under influence of additional 10 latent confounders.
We then picked a set of ten image-factors closest to the generated ten dimensional variables (after scaling-normalization) from the dataset,
and then adopted it as the actual latent causal variables ($\s^m$) and the corresponding image as the observational image ($\x^m$), for each group $m$.
Note that we cannot find the image-factors which exactly matches the generated ten dimensional variables since the number of images are limited (250,000) in this dataset.
This indicates possible misspecification of the causal model, which enables the evaluation of the robustness of G-CaRL.
%that the causal variables cannot be perfectly described by our causal model (Eq.~\ref{eq:ps}).
%We considered those ten latent variables (after scale-normalization) as the image-factors, and then 
%pick an image corresponding to the image-factors closest to the generated latent variables, for each group.
%necessarily find an image which has a good correspondence to the generated latent variables
We fixed the number of groups $M$ to 3, and the number of data points $n$ to $2^{20}$.

%We fixed the number of groups $M$ to 2 for reducing computational complexity.
%The number of data points $n$ was $2^{20}$.

%We firstly generated latent causal variables based on the procedure mentioned above,
%then assigned each of them to one of the latent factors of the images.
%Since the number of image-factors is 10, we generated 10 latent variables under influence of additional 10 latent confounders, for each group
%We then assigned an image of a tea pot, which has the closest latent factors to the generated causal variables, for each group.

\paragraph{Training (G-CaRL)}

Since the observations are high-dimensional images, we used here convolutional neural networks as the feature extractors $\{\h^m\}$.
%For the feature extractors $\{\h^m\}$, we used a convolutional neural network.
More specifically, we used ResNet-18 \citep{He2016} additionally with hyperbolic tangent units and a fully-connected layer on top of it,
which encodes an input image into 10-dimensional features.
Those 10 features are supposed to represent the original image-factors after the training.
We trained a single feature extractor shared across all groups since the observational models are supposed to be the same across groups in this experiment.
The learning was performed for 10 runs with changing the parameters of the causal structures.
%We omit the implementation of this experiment from Supplementary Code, since it can be easily given from that of Simulation~2, by simply assigning an image from the 3DIdent image dataset \citep{Zimmermann2021} to the generated latent variables for each sample, and replacing the feature extractors $\{\h^m\}$ by ResNet based on some publicly available implementation (e.g., Torchvision). 

\paragraph{Evaluation}
Due to the much higher nonlinearity of the observational model compared to the former simulations, we slightly changed the evaluation criteria, which is the same for the baselines too:
1)~for evaluating the estimation of the latent variables, we used Spearman's rank correlation instead of Pearson correlation, and
%, due to the higher nonlinearity of the feature extractor (demixing model).
2)~for evaluating the causal graph, we optimally chose either the original estimate or its matrix-transpose so as to maximize the F1-score,
since the estimated causal graph could be flipped (matrix-transposed) from the true one sometimes as suggested by Theorem~\ref{thm:cd3}, which did not happen in the previous simulations.
We found that only considering a matrix-transpose of the whole causal graph was enough, rather than group-pair-specific matrix transposes as suggested in Theorem~\ref{thm:cd3}.
%Due to the severe nonlinearity of the model, sometimes the estimated causal graphs were flipped (matrix-transposed) from the true ones as suggested by Theorem~\ref{thm:cd3},
%which did not happen in the previous simulations. To solve this problem, we optimally chose either the original estimates or their matrix-transpose for each group pair, so as to maximize the F1-scores.
%(this was applied to the baselines too).}
The other evaluation methods are the same to those in Simulation~1 and 2.

%--------------------------------------------------------------------------%
%--------------------------------------------------------------------------%
\section{Details of Baselines}
\label{sec:app_baseline}

\paragraph{CCL}
Connectivity-contrastive learning (CCL; \citet{Morioka2023}) is a CRL framework based on self-supervised learning,
whose generative model can be seen as a special case of ours.
CCL assumes a sensor-network-type generative model with homogeneous observations, where multidimensional observations are obtained for each group (called {\it node} in \citet{Morioka2023}) from latent components (corresponding to {\it latent variables} in this study),
which are causally-related across groups while mutually-independent across components.
More specifically, the observational model is given by
\begin{align}
	\x &= \begin{bmatrix} \x^1, & \ldots, &  \x^M \end{bmatrix} \nonumber \\
	&= \begin{bmatrix} \f(\s^1), & \ldots, & \f(\s^M) \end{bmatrix}, \label{eq:f_ccl}
\end{align}
with a group-common (node-homogeneous) observational mixing $\f: \R^d \rightarrow \R^d$, and the joint distribution of the latent causal variables $\s$ are assumed to be factorized as
\begin{align}
	p(\s) \propto & \prod_{a \in \Vs} \left[ \prod_{m \in \M} \exp \left(\bar{\phi}_a^m(s_a^m) \right) \prod_{m \neq m'} \exp \left( \lambda_{a}^{mm'} \phi_a(s_a^m, s_a^{m'}) \right)  \right], \label{eq:ps_ccl} 
\end{align}
with some (component-specific) potential functions $\bar{\phi}_a^m: \R \rightarrow \R$ and $\phi_a(s_a^m, s_a^{m'}): \R \times \R \rightarrow \R$, and (component-wise) adjacency coefficients between groups $\{\lambda_{a}^{mm'}\}_{(m, m')}$.
They then showed the identifiability of this model as a causal model with some causal structural assumptions, such as asymmetricity of the causal graph $\{\lambda_{a}^{mm'}\}_{(m, m')}$ and the potential functions $\phi_a$.
Although their models somewhat resembles to ours, they can be seen as a very special case of ours (see Section~\ref{sec:discussion}).
They also proposed a self-supervised learning framework called CCL, which estimates the latent components and the causal graph simultaneously, based on a multinomial logistic regression (MLR), whose pretext task is to well predict the group-pair label
of group-paired observations, for every group-pairs and samples. 
Note that the estimation framework is much different from ours, though both of them are categorized as self-supervised learning.
%Our model greatly generalizes CCL in many aspects, which is discussed in more detail in Supplementary Material~\ref{sec:related}.
%\cite{Morioka2023} showed the identifiability of the model with some causal structural assumptions, such as a pairwise BN, similarly to our model.
%They also proposed a self-supervised learning framework, based on a multinomial logistic regression (MLR), whose goal is to well predict the node-pair label
%of paired node observations, for every node-pairs and samples. 
%Our model greatly generalizes CCL in many aspects, which is discussed in more detail in Supplementary Material~\ref{sec:related}.
%Since CCL estimates only diagonal part of the adjacency matrix $\m L^{mm'}$ for each group-pair $(m, m')$ (group is called {\it node} in CCL), 
%we applied a causal discovery algorithm to the estimated latent components as a post-processing for reconstructing the whole graph.
% as a post-processing to estimate the whole graph after estimating the latent variables.
%The causal model of CCL is based on a pairwise BN, similarly to our model, while our model greatly generalizes CCL.
%See Supplementary Material~\ref{sec:related} for more detailed discussion about the difference of our framework G-CaRL to CCL.

In our experiment, since CCL can consider only the diagonal part of the adjacency matrix $\m L^{mm'}$ for each group-pair $(m, m')$
(compare Eq.~\ref{eq:ps_ccl} to our model Eq.~\ref{eq:ps}; also see Supplementary Material~\ref{sec:related}), 
we applied a causal discovery algorithm to the estimated latent components as a post-processing for reconstructing the whole causal graph.

\paragraph{MVCRL}
Multi-view CRL (MVCRL; \citet{Yao2023}) is a CRL framework targeting multi-view data, based on alignments of the (embeddings of) intersections of the latent variables (called {\it content}) between views
%content alignments between views 
(for the sake of consistency, we hereafter call {\it views} as {\it groups}).
Some multimodal representation learning frameworks can be seen as a special case of their work \citep{Ahuja2022a, Daunhawer2023, Gresele2020, Kugelgen2021, Locatello2020}.
%simultaneously observed views, such as different data modalities.
%proposed a unified framework for studying the identifiability of representations learned from simultaneously observed views, such as different data modalities.
It assumes group-wise observational model
\begin{align}
	\x^m &= \f^m(\s^m), \label{eq:f_mvcrl}
\end{align}
for each of the groups $m \in \M$, similarly to ours (Eq.~\ref{eq:f}),
while it considers overlaps of the latent variables (and possibly those of the observations) for (some) subsets of groups $\M$.
%in other words, the intersections of $\Vsm$ across groups are not necessarily empty.
The authors then showed that sets of latent variables, each of which is the intersection of some subset of groups (called {\it content}), can be block-identified.
By additionally assuming that the non-shared (view-specific, called {\it style}) variables are independent on the shared variables (content), the non-shared variables are also block-identifiable.
MVCRL seeks to learn the embeddings of the latent variables by optimizing the alignments of the contents shared between views.
%This generative model is less restrictive than ours, in the sense that it can consider overlaps across groups and does not require explicit latent causal model.
%%all variables are {\it view-specific} (non-shared; Eq.~\ref{eq:f}) and the latent causal model is given explicitly (Eq.~\ref{eq:ps}).
%%Our model can be seen as a special case of them, which assumes that the all variables are {\it view-specific} (non-shared; Eq.~\ref{eq:f}), and also assumes the latent causal model explicitly (Eq.~\ref{eq:ps}).
%On the other hand, our model has advantage that it can give the identifiability of the latent variables up to permutation and variable-wise invertible nonlinearity, rather than block-wise-functions,
%and also gives the identifiability of the latent causal graph as well.
%In addition, MVCRL does not give estimates of the latent causal graph.

In our experiments, although our data do not have {\it contents variables} shared across groups, we considered that all group-specific variables $\s^m$ are the {\it contents} to be aligned for all subsets of groups in the estimation by MVCRL.
This should be practically better than explicitly considering them as group-specific (independent) variables, since they should have some similarity (though not {\it equivalence} like content variables) between groups due to their inter-group causal relations in our settings.

\paragraph{CausalVAE}
CausalVAE \citep{Yang2021} is a CRL framework based on VAE with some causal structural assumptions on the latent embedding.
The authors showed that some causal assumptions, such as linear directed acyclic causal graphs, and (weak) supervision on the latent variables give the identifiability of the model up to some indeterminacy.
We especially used its unsupervised setting (CausalVAE-unsup, \citep{Yang2021}) as a baseline,
which is composed of an encoder and a decoder, as in vanilla-VAE, and also a Causal Layer to represent the causal relations of the latent variables.
More specifically, the input signals (observations $\x$) passes through an encoder to obtain independent exogenous factors $\ve\epsilon \sim \mathcal{N}(\ve\epsilon; \ve 0, \m I)$,
which are passed through a Causal Layer to generate causal variables $\s$ via a linear SEM as
\begin{align}
	\s = \m A^\T \s + \ve\epsilon = (\m I - \m A^\T)^{-1} \ve\epsilon,
\end{align}
where $\m A$ is a matrix parameter representing an adjacency matrix,
which are then taken by the decoder to reconstruct the original observation $\x$.
CausalVAE estimates the model by optimizing the evidence lower bound (ELBO), similarly to vanilla-VAE, but additionally with regularization on DAG-ness of $\m A$.
%More specifically, we jointly train an encoder, a decoder, and a Causal Layer; the encoder embeds observations $\x$ into
%% which encodes observations $\x$ into
%mutually independent Gaussian exogenous factors $\ve\epsilon \sim \mathcal{N}(\ve\epsilon; \ve 0, \m I)$ similarly to vanilla-VAE, the Causal Layer then translates them to causal variables $\s$ via linear SEM as
%\begin{align}
%	\s = \m A^\T \s + \ve\epsilon = (\m I - \m A^\T)^{-1} \ve\epsilon,
%\end{align}
%where $\m A$ is a matrix parameter representing an adjacency matrix, and then the observations are decoded from the causal variables through the decoder.
% CausalVAE optimizes evidence lower bound (ELBO), similarly to vanilla-VAE, while additionally with DAG-ness regularization on $\m A$.

\paragraph{$\beta$-VAE}
$\beta$-VAE \citep{Higgins2017} is a representation learning framework based on the vanilla-VAE \citep{Kingma2014}, but with an adjustable regularization parameter $\beta$ on the distribution of the embeddings.
The generative model is the same as the vanilla-VAE;
% the model architecture is the same as the vanilla-VAE;
the observations $\x$ are obtained from the latent variables $\s$ through an unknown observational mixing $\f$ as
\begin{align}
	\x &=  \f(\s) + \ve\epsilon \label{eq:f_vae}
\end{align}
with observational noise $\ve\epsilon$, 
and the joint distribution of the latent variables $\s$ are assumed to be the standard multivariate Gaussian distribution
\begin{align}
	p(\s) =  \mathcal{N}(\s; \ve 0, \m I), \label{eq:ps_vae}
\end{align}
where $\mathcal{N}(\cdot; \ve 0, \m I)$ is the Gaussian density with zero-mean and unit diagonal covariance,
which implies that the latent variables are assumed to be mutually orthogonal.
$\beta$-VAE additionally has a regularization parameter $\beta$ 
% while it incorporates additional regularization parameter $\beta$ 
 for adjusting the strength of the KL-divergence regularization controlling the discrepancy between the encoded latent variable distributions and their priors. When $\beta=1$, $\beta$-VAE simply leads to the vanilla-VAE. 
 
 In our experiments, we used the setting of $\beta=1$, which actually corresponds to the original-VAE \citep{Kingma2014}, since it gives the best estimation performance of the latent variables.

\paragraph{VaDE}
VaDE \citep{Jiang2017} is a representation learning framework based on VAE with Gaussian mixture priors.
The observational model is the same as the vanilla-VAE (Eq.~\ref{eq:f_vae}), while
%assumed to be the general form
%\begin{align}
%	\x &=  \f(\s) + \ve\epsilon \label{eq:f_vade}
%\end{align}
%with observational noise $\ve\epsilon$, 
the joint distribution of the latent variables is assumed to be given by a Gaussian mixture model as
\begin{align}
	p(\s) = \sum_{k = 1}^K \pi_k \mathcal{N}(\s; \ve\mu_k, \ve\Sigma_k), \quad \sum_{k=1}^K \pi_k = 1, \quad \pi_k > 0,
\end{align}
where $N(\cdot; \ve\mu_k, \ve\Sigma_k)$ is the Gaussian distribution with mean vector $\ve\mu_k$ and covariance matrix $\ve\Sigma_k$ of the $k$-th mixture component.
\citet{Willetts2022} later empirically showed that the estimation of this model gives rather consistent results across estimations,
%possibility of identifiable NICA with clustering structure based on latent mixture model (VaDE; \citet{Jiang2017}),
\citet{Kivva2022} then showed that the identifiability of such model can be given by additionally assuming that $\f$ is a piecewise affine function.
VaDE model can be seen as an unsupervised version of identifiable-VAE (iVAE; \citet{Khemakhem2020}), where the indices of the mixture components are unknown rather than being given as an observable auxiliary variable for all samples as in \citet{Khemakhem2020}.
%\citet{Kivva2022} later showed that such generative model can be made identifiable,
%by assuming that the observational mixing $\f$ is a piecewise affine function and the latent variables are characterized by 
%a finite mixture model depending on some (unobservable, possibly multidimensional) discrete auxiliary variable.

In our experiments, we used the implementation by \citet{Kivva2022}, and fixed the number of mixture components to be 5.

%\citet{Kivva2022} is a representation learning framework, and showed that a generative model can be made identifiable
%by assuming that the observational mixing is a piecewise affine function and the latent variables $Z$ are characterized by 
%a finite mixture model depending on some (unobservable, possibly multidimensional) discrete auxiliary variable $U$.
%We used the implementation based on VAE with a mixture prior (VaDE; \citet{Jiang2017}).
%Since it does not give an estimate of causal graph, we applied PC algorithm to the estimated latent variables $Z$ as a post-processing.

\paragraph{MFCVAE}
Multi-facet clustering variational autoencoder (MFCVAE; \citet{Falck2021}) extends the idea of VaDE \citep{Jiang2017}
so as to jointly consider multiple aspects (facets) of clustering features hidden in the data, such as {\it color} and {\it shape} in images.
MFCVAE assumes that the observations are obtained from $J$ multidimensional latent facets $\{\z_1, \ldots, \z_J\}$ as
 \begin{align}
	\x &=  \f(\z_1, \ldots, \z_J) + \ve\epsilon \label{eq:f_mfcvae}
\end{align}
with observational noise $\ve\epsilon$, and each facet $j$ has
its own unique clustering structure represented by a multivariate Gaussian mixture model with $K_j$ mixture components
\begin{align}
	p(\z_j) = \sum_{k_j=1}^{K_j} p(k_j) \mathcal{N}(\z_j; \ve\mu_{k_j}, \ve\Sigma_{k_j}), \quad p(k_j) = \text{Cat}(\ve\pi_j),
\end{align}
%\begin{align}
%	p(\z_j \vert k_j) = \mathcal{N}(\z_j; \ve\mu_{k_j}, \ve\Sigma_{k_j}), \quad p(k_j) = \text{Cat}(\ve\pi_j),
%\end{align}
where $\text{Cat}(\cdot)$ denotes a categorical distribution, $\ve\pi_j$ is the $K_j$-dimensional vector of mixing weights, 
and $\ve\mu_{k_j}$ and $\ve\Sigma_{k_j}$ are the mean vector and (diagonal or full) covariance matrix of the $k_j$-th mixture component in facet $j$.
MFCVAE then learns clustering on multiple facets jointly, from shared (or progressively-trained ladder-architectured) latent variables in a fully unsupervised and end-to-end manner.
\citet{Kivva2022} later showed that the model is identifiable by additionally assuming that $\f$ is a piecewise affine function.
%The clustering structure on each facet are learned based on a Gaussian mixture model, similarly to VaDE \citep{Jiang2017}.

In our experiments, we fixed the number of facets to 3. The dimension of the latent space was set to be 5 for each facet and number of mixture components to be 25, as recommended in \citet{Falck2021}.

\paragraph{DirectLiNGAM}
DirectLiNGAM \citep{Shimizu2011} assumes SEMs with linear DAG and non-Gaussian errors.
In the first step, DirectLiNGAM finds the causal order of variables by iteratively finding a 
root variable by performing regression and independence testing for each pair of variables,
extracting one which is exogenous to the others,
and then removing the effect of the root variable from the other ones.
DirectLiNGAM then eliminates unnecessary edges using AdaptiveLasso \citep{Zou2006},
and outputs a weighted adjacency matrix.

\paragraph{NOTEARS}
NOTEARS \citep{Zheng2018} assumes linear SEMs of DAG.
It estimates a weighted adjacency matrix by minimizing a least-squares loss in scoring DAGs
with regularization terms imposing sparseness and DAG-ness of the adjacency matrix.
Since NOTEARS formulates the structure learning problem as a continuous optimization problem over real matrices,
it can effectively avoid the traditional combinatorial optimization problem (NP-hard) of learning DAGs.
We used the default parameters.

\paragraph{NOTEARS-MLP}
NOTEARS-MLP \citep{Zheng2020} is an extension of NOTEARS \citep{Zheng2018} to general nonparametric DAG models.
NOTEARS-MLP models variable-wise nonlinear causal functions by MLPs,
which are learned based on continuous optimization problem with regularizations for the sparseness of the MLP parameters 
and for DAG-ness of the causal functions.
We used the default parameters.

\paragraph{GOLEM}
GOLEM \citep{Ng2020} is an efficient version of NOTEARS \citep{Zheng2018}, which can reduce number of optimization iterations.
GOLEM assumes linear DAGs, and performs multivariate Gaussian maximum likelihood estimation (MLE)  
with a soft version of the differentiable acyclicity constraint proposed in \citep{Zheng2018}.
There are two proposed models; equal(EV) or unequal (NV) noise variances. We used EV here since the estimation performances were better than NV.
%variances of the latent variables are supposed to be almost the same.
We used the hyper-parameters used in \citep{Ng2020}.

\paragraph{PC}
PC algorithm \citep{Spirtes1991} is a constraint-based method.
PC algorithm firstly constructs an undirected graph by removing edges from a fully connected graph based on independence and conditional independence tests.
It then constructs a DAG by directing the edges based on the information of separation sets and with some additional assumptions (no new v-structures and directed cycles).

\paragraph{GES}
Greedy equivalent search (GES) \citep{Chickering2003} algorithm is a score-based method.
GES starts with an empty graph and iteratively adds directed edges such that the improvement of Bayesian score (BIC score) is maximized,
until no single edge addition increases the score (forward phase).
GES then iteratively removes edges until no more improvements in the score can be made by single-edge deletions (backward phase).

%\paragraph{FGS}
%Fast greedy search (FGS) \citep{Ramsey2017} is a score-based method and assumes DAG, 
%which is an extension of greedy equivalent search (GES) algorithm \citep{Chickering2003}.
%GES starts with an empty graph and iteratively adds directed edges such that the improvement of Bayesian score (BIC score) is maximized,
%until no single edge addition increases the score (forward phase).
%GES then iteratively removes edges until no more improvements in the score can be made by single-edge deletions (backward phase).
%FGS improved performances and speeds of GES with some techniques, such as parallelization.
%
%
%\paragraph{GFCI}
%GFCI \citep{Ogarrio2016} assumes DAG and presences of latent confounders,
%which  is an extension of constraint-based method FCI \citep{Spirtes1995}, by using a score-based greedy algorithm FGS \citep{Ramsey2017}.
%GFCI produces a partial ancestral graph (PAG), which is a representation of a set of causal networks that may include hidden confounders,
%and consists of four types of edges; 1)~directed, 2)~no causal relation but there is an unmeasured variable that is a cause of both variables,
%3)~directed or there is an unmeasured variable that is a cause of both variables, or both, 
%4)~directed but the direction is undtermined or there is an unmeasured variable that is a cause of both variables, or both.
%For comparisons to the other methods, we treated 1 and 3 as directed edges and 4 as bi-directed edges.
%We set the upper bound on the maximum degree of graphs to 10 to make the computation feasible.

\paragraph{CAM}
Causal additive model (CAM) \citep{Buhlmann2014} assumes SEMs specified by DAG and additive Gaussian errors,
which is an extension of linear SEMs by allowing for variable-wise scalar nonlinear functions (Collorary~31 in \citep{Peters2014}).
CAM at first estimates the causal order of variables based a greedy search algorithm so as to maximize the likelihood, then 
non-relevant edges were removed (pruning) by a sparse regression technique implemented based on significance testing of covariates.

%\paragraph{CAM-UV}
%Causal additive model with unobserved variables (CAM-UV) \citep{Maeda2021} is an extension of CAM \citep{Buhlmann2014} to include unobserved variables.
%CAM-UV assumes the causal structures to be DAGs, and the causal effects between variables are generalized additive models.
%The estimation framework consists of two-steps; the first step extracts the candidates of the parents of each observed variables, 
%and the the second step determines the parents. If two variables have unobserved causal path or backdoor path, the causal directions between them are left undetermined.

\paragraph{CCD}
Cyclic causal discovery algorithm (CCD) \citep{Lacerda2008} assumes that the data are causally sufficient (no latent variables), while possibly includes directed cycles.
CCD extracts cyclic models using conditional independence tests, as with PC \citep{Spirtes1991}.
The output of CCD algorithm is a cyclic partial ancestral graph (PAG), which is a graphical object that represents a set of causal Bayesian networks that cannot be distinguished by the algorithm.

\begin{figure}[t]
\begin{algorithm}[H]
    \renewcommand{\algorithmicrequire}{\textbf{Input:}}
    \renewcommand{\algorithmicensure}{\textbf{Output:}}
    \caption{Pseudo-code of Grouped Causal Representation Learning (G-CaRL) based on stochastic gradient descent (SGD) optimization}
    \label{alg:gcarl}
    \begin{algorithmic}[1]
    \REQUIRE A set of observational data $\{\x^{(n)}\}_n$, observational grouping indices $\{\Vxm\}_m$, and hyper-parameters for SGD optimization.
    \STATE Initialization: Initialize the parameters of the regression function (Eq.~\ref{eq:crl_r}) with random values.
    \REPEAT
%    \STATE Randomly select some node-pairs $(a, b) \in \E$. Note that those pairs do not necessarily need to have causal relations behind, as far as the total graphs satisfy Assumption~\ref{CAL}.\label{alg:pick_edge}
    \STATE Randomly pick some samples $n$ (mini-batch) from the observations $\{\x^{(n)}\}$ (label $1$) and $\{\x^{(n_\ast)}\}$ (label $0$) (Eq.~\ref{eq:xtilde}), where $\x^{(n_\ast)}$ are generated artificially from the original $\{\x^{(n)}\}_n$ by shuffling the index $n$ over all samples separately for each group $m$, indicated by $\Vxm$.
%    \x^{(n_\ast)} = \begin{bmatrix} \x^{1 (n_{\ast}^1)}, & \ldots, & \x^{M (n_{\ast}^M)} \end{bmatrix}   \label{eq:xtilde}
%\end{align}
%where $n$ indicates the sample index, 
%while $n_{\ast}^m$ is a shuffled index, generated in practice by randomly selecting a sample index separately for each group $m$ (note that different groups have different sample indices in $\x^{(n_\ast)}$).
%We then learn a nonlinear logistic regression (LR) system which discriminates the two classes,
%using a cross-entropy loss with a specific form of the regression function with
%     $(\x_a^{(n)}, \x_b^{(n)}) \in \R^{d \times 2}$, which are the $a$-th and $b$-th rows of $\X^{(n)}$, for each selected node-pair.
    \STATE Update the parameters of the regression function (Eq.~\ref{eq:crl_r}) so as to minimize the objective function (cross-entropy) of the logistic regression, discriminating the labels $1$ and $0$.
%    This makes the model better discriminate the true observational data $\x^{(n)}$ from the shuffled ones $\x^{(n_\ast)}$, for each sample.
%     solve the two-class classification problem by a logistic regression (cross-entropy) with Eq.~\ref{eq:crl_r} based on back-propagation (SGD). This equivalents to make the model better discreminate the true observational data from the shuffled ones, for each input.
    \UNTIL the objective function converges.
    \STATE \textbf{return} the trained (group-wise) nonlinear feature extractors $\{\h^m(\cdot)\}$ representing the latent causal variables (Theorem~\ref{thm:crl}), and the weight parameters $\{w_{ab}^{mm'}\}$ representing the causal structures (Theorem~\ref{thm:cd3}).
    \end{algorithmic}
\end{algorithm}
\end{figure}

\begin{figure*}[t]
 \centering
 \includegraphics[width=\linewidth]{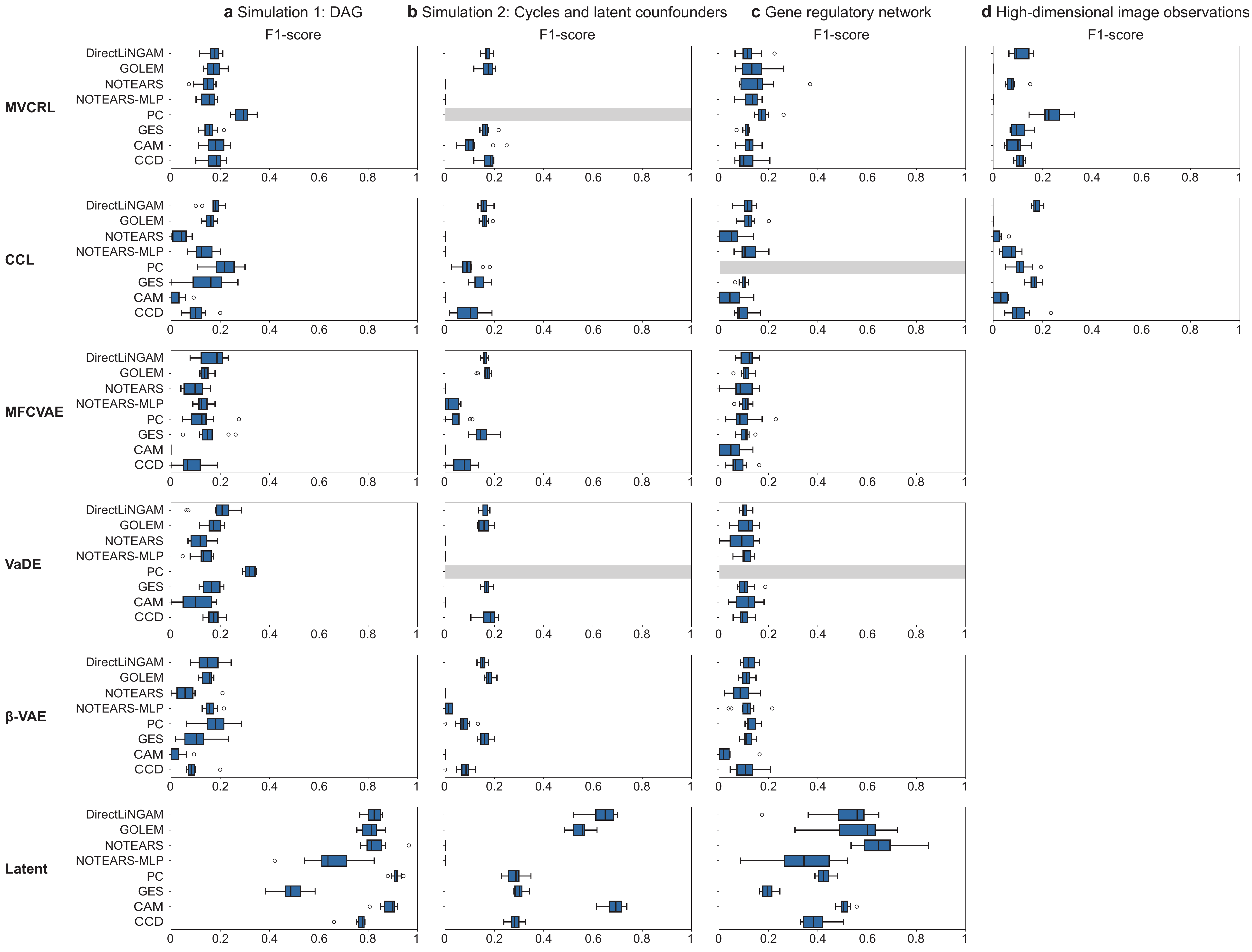}
 \caption{Comparison of a set of causal discovery frameworks (rows in each panel; measured by F1-score) applied to the baseline representation learning frameworks (rows), or directly to the latent variables (the last row: {\it Latent}; omitted in \textsf{\textbf{d}} since it is the same as \textsf{\textbf{b}}). We discarded some causal discovery frameworks (shaded by grey) on some panels since they did not converge within practical calculation time.
% The performances are measured by F1-score.
% Evaluation of the causal discovery baselines applied directly on latent variables. 
 (\textsf{\textbf{a}}) Simulation~1, (\textsf{\textbf{b}}) Simulation~2, (\textsf{\textbf{c}}) gene regulatory network recovery, and (\textsf{\textbf{d}}) high-dimensional image observations.
  Only the best performance for each panel was reported in Fig.~\ref{fig:sim}.
  }
 \label{fig:h_cd}
\end{figure*}

\begin{figure*}[t]
 \centering
 \includegraphics[width=\columnwidth]{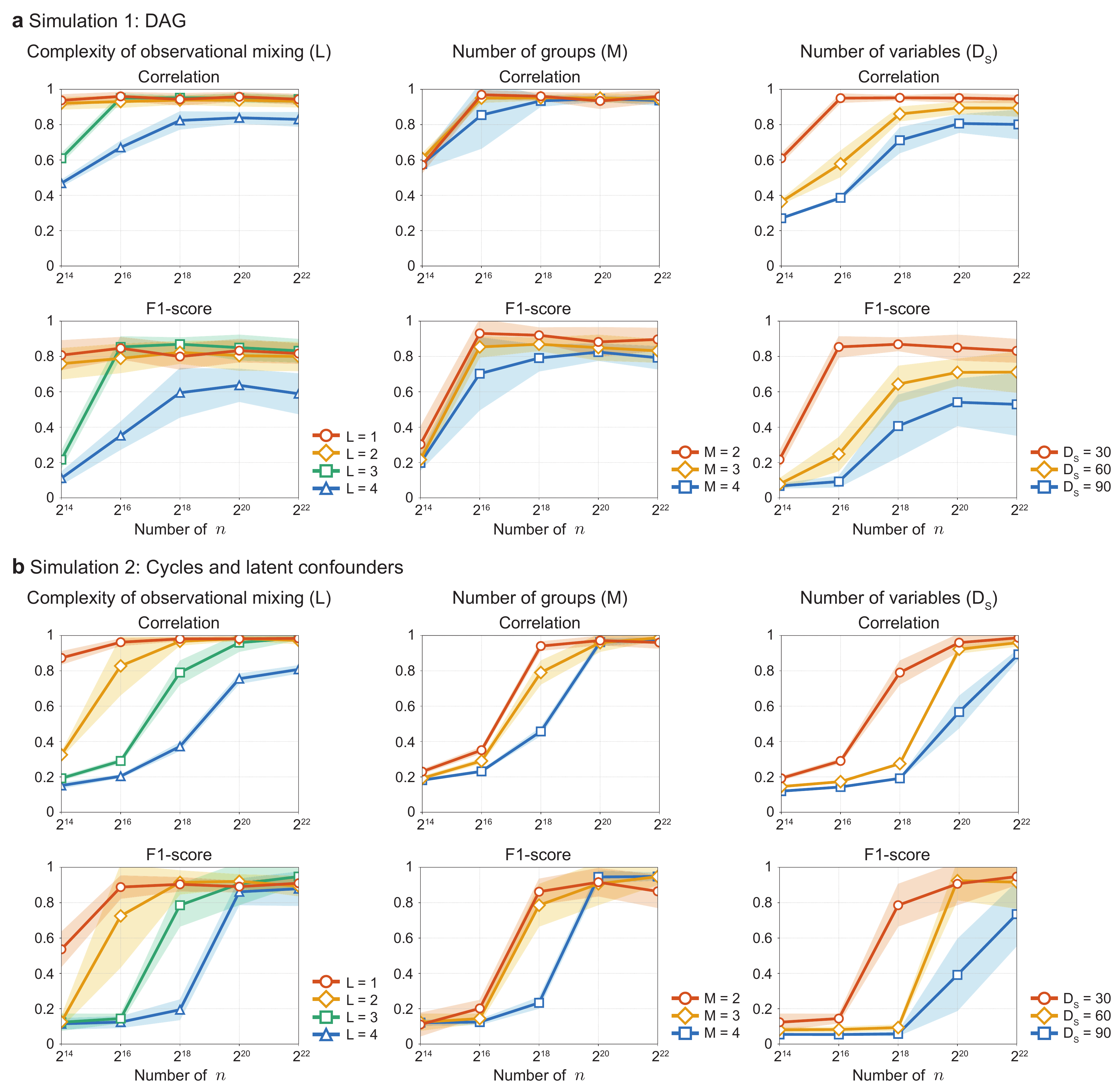}
 \caption{Estimation performances of the latent variables (Pearson correlation) 
 and the causal structures (F1-score) by the proposed framework G-CaRL, but
 different settings of (Left) the complexity of the observation models (the number of MLP-layers $L$ of the observation function $\ve f$), (Middle) the number of groups ($M$), and (Right) the number of variables ($D_\mathcal{S}$), with changing the number of samples $n$. 
  Simulation~1 (basic DAG). (\textsf{\textbf{b}}) Simulation~2 (cycles and latent confounders).
 The values are the averages of 10 runs for each setting, and the shaded regions show the standard deviations. \
 Fig.~\ref{fig:sim}a corresponds to the case $L=3$, $M=3$, $D_\mathcal{S}=30$, and $n=2^{16}$ in \textsf{\textbf{a}},
 and Fig.~\ref{fig:sim}b corresponds to the case $L=3$, $M=3$, $D_\mathcal{S}=30$, and $n=2^{20}$ in \textsf{\textbf{b}}.
 }
 \label{fig:sim1_LMD}
\end{figure*}

%\begin{figure*}[t]
% \centering
% \includegraphics[width=0.6\columnwidth]{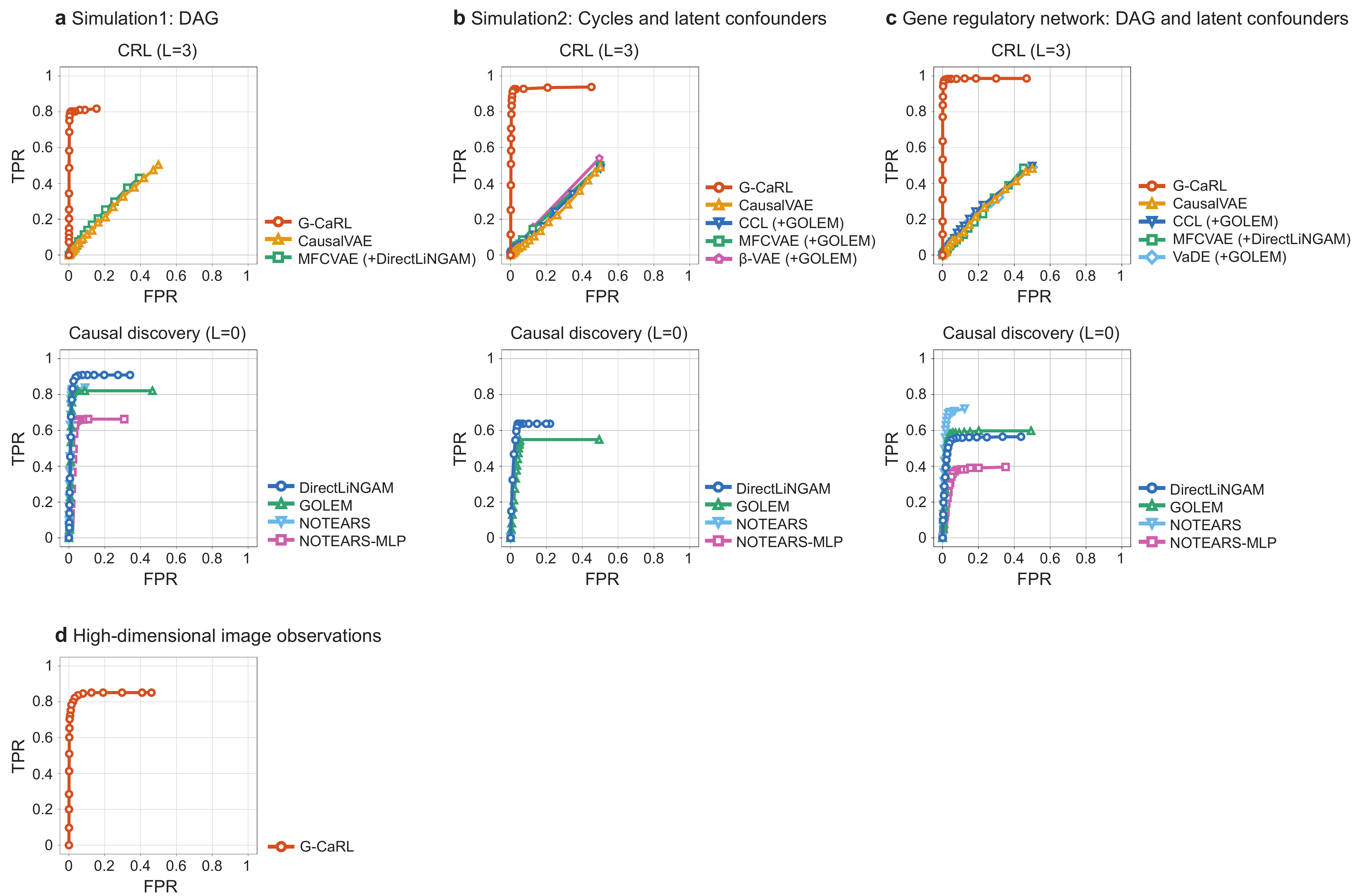}
% \caption{(Simulation~2) Estimation performances of the latent variables (Pearson correlation) 
% and the causal structures (F1-score and normalized SHD) by the proposed framework G-CaRL, but
% different settings of (\textsf{\textbf{a}}) the complexity of the observation models (the number of MLP-layers $L$ of the observation function $\ve f$), (\textsf{\textbf{b}}) the number of groups ($M$), and the (\textsf{\textbf{c}}) number of variables ($D_\mathcal{S}$), with changing the number of samples $n$. The values are the averages of 10 runs for each setting, and the shaded regions show the standard deviations. Fig.~\ref{fig:sim}b corresponds to the case $L=3$, $M=3$, $D_\mathcal{S}=30$, and $n=2^{20}$.
% }
% \label{fig:sim2_LMD}
%\end{figure*}

\begin{figure*}[t]
 \centering
 \includegraphics[width=\columnwidth]{}
 \caption{Illustration of the effect of the threshold for each method on (\textsf{\textbf{a}}) Simulation~1, (\textsf{\textbf{b}}) Simulation~2, (\textsf{\textbf{c}}) the gene regulatory network recovery task, and (\textsf{\textbf{d}}) the high-dimensional image observations.
  The upper panels show the results with unknown observational mixings (CRL), and lower panels show the results when we applied the causal discovery frameworks directly to the latent variables (causal discovery; omitted in \textsf{\textbf{d}} since it is the same as \textsf{\textbf{b}}).
 For each panel, ROC curve shows false positive rate (FPR) and true positive rate (TPR) with varying level of threshold, from 0\% to 100\% with interval of 5\%, for each method. The values are the averages of 10 runs for each threshold. In upper panels, we only showed the curves for the (combinations of) frameworks in Fig.~\ref{fig:sim} which give {\it wighted} adjacency matrices and thus require thresholding.
This result shows that G-CaRL was not so sensitive to the selection of the threshold values.}
 \label{fig:roc}
\end{figure*}

\begin{figure*}[t]
 \centering
 \includegraphics[width=\columnwidth]{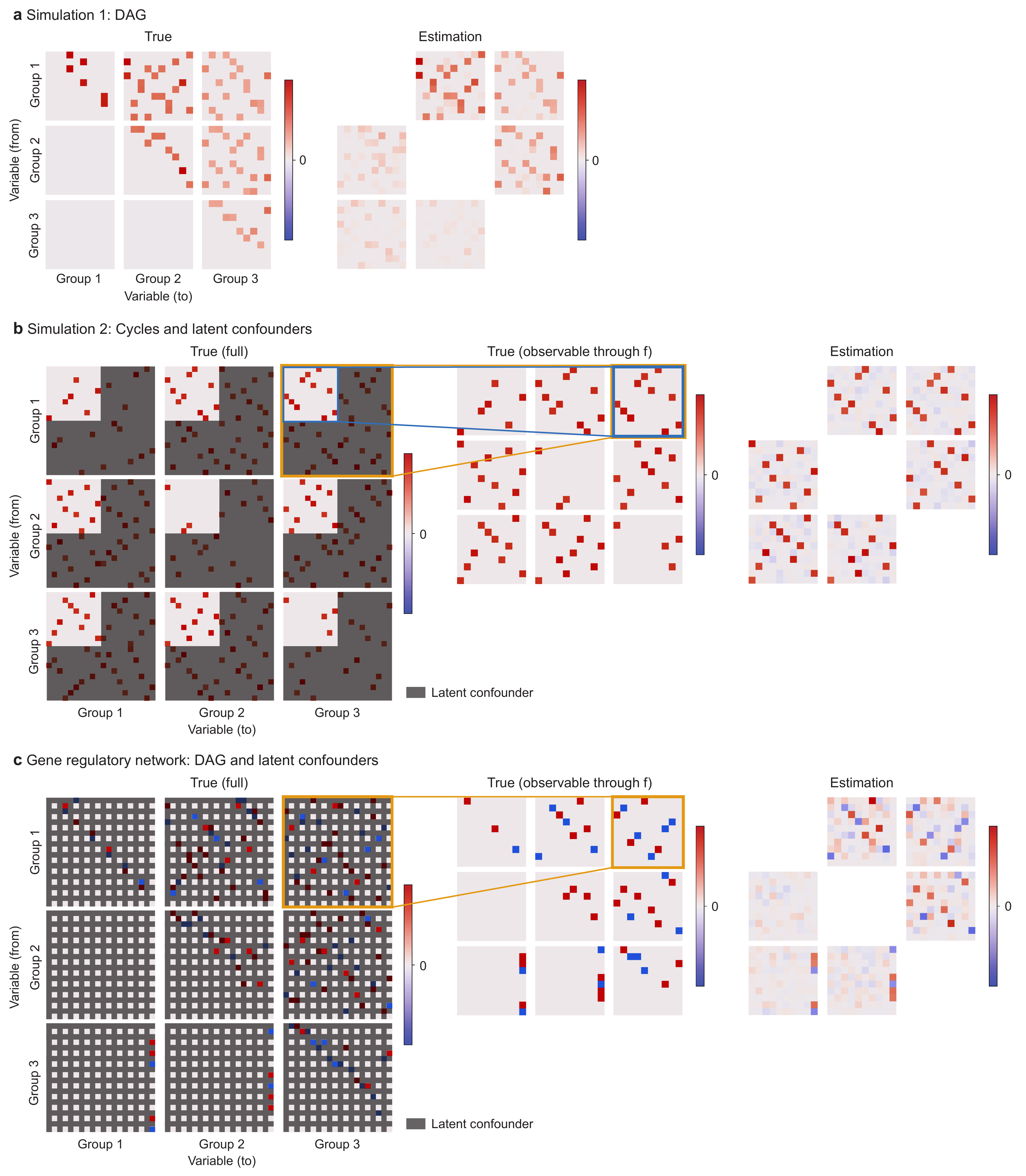}
 \caption{Example of the true causal structures (weighted adjacency matrices) and the estimations by G-CaRL (before causal direction determination and thresholding) in Simulation~1 (\textsf{\textbf{a}}), Simulation~2 (\textsf{\textbf{b}}), and the gene regulatory network recovery task (\textsf{\textbf{c}}).
 G-CaRL only identifies the inter-group-parts of the adjacency matrix, and thus the block-diagonal-parts are left unknown.}
 \label{fig:w_sample}
\end{figure*}

\begin{figure*}[t]
 \centering
 \includegraphics[width=0.9\columnwidth]{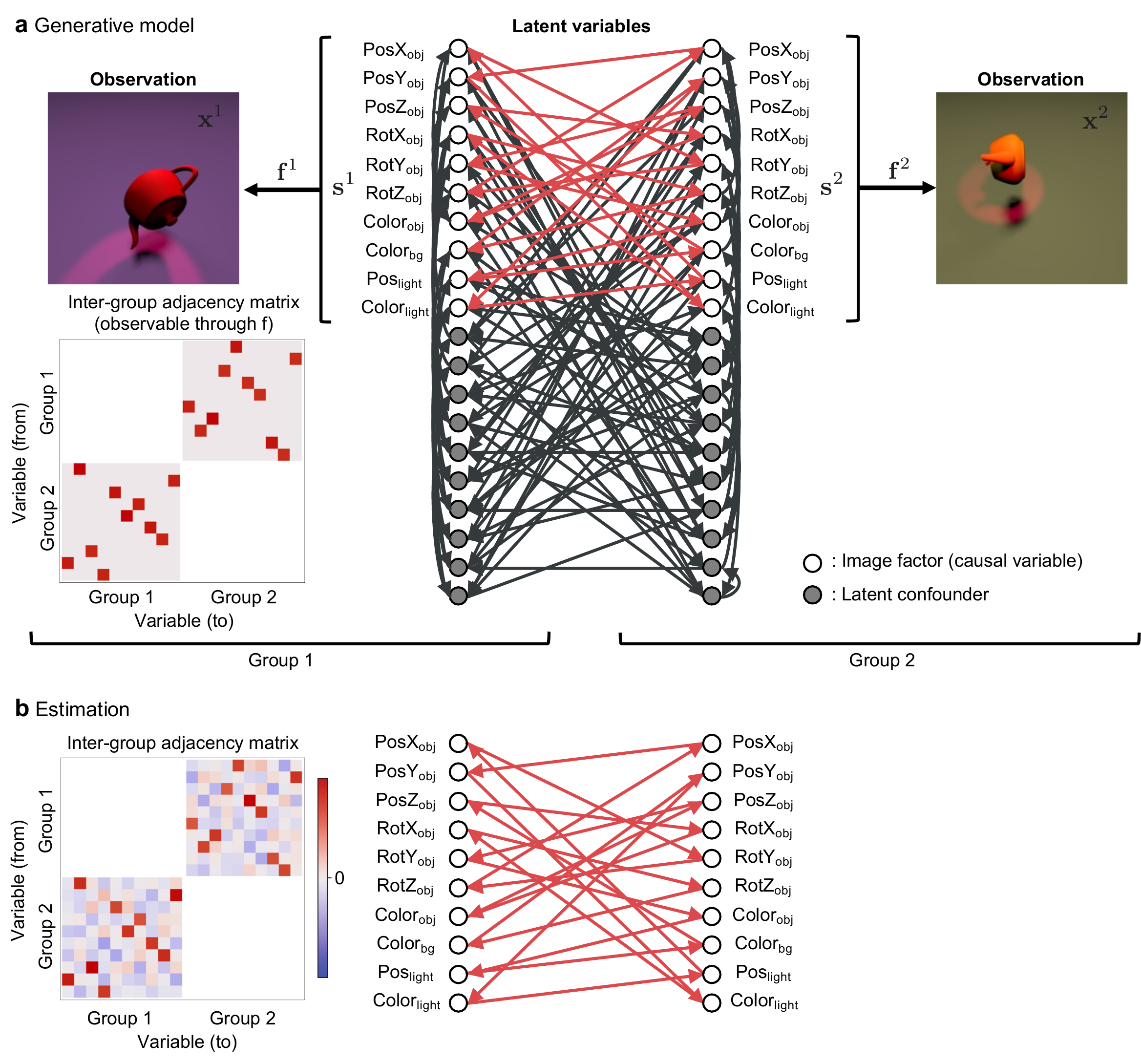}
 \caption{Evaluation of G-CaRL on high-dimensional image observations. We fixed the number of groups to $M=2$ in this figure for brevity. (\textsf{\textbf{a}}) Example of the true causal structures (weighted adjacency matrices) with high-dimensional image observations, and (\textsf{\textbf{b}}) the estimation by G-CaRL (the directed graph visualization is after applying thresholding). 
Each group has ten image-factors (latent causal variables; white circles) conditioning the observation images, and additional ten latent confounders (gray circles) affecting the other variables. 
% For each group, the image of an object (tea pot) is generated from ten latent factors; 
 The ten image-factors are composed of XYZ positions of the object ($\text{PosX}_\text{obj}$, $\text{PosY}_\text{obj}$, and $\text{PosZ}_\text{obj}$), three dimensions describe the rotation of the object in Euler angles ($\text{RotX}_\text{obj}$, $\text{RotY}_\text{obj}$, and $\text{RotZ}_\text{obj}$), the color of the object and the ground of the scene ($\text{Color}_\text{obj}$ and $\text{Color}_\text{bg}$), and the position and color of the spotlight ($\text{Pos}_\text{light}$ and $\text{Color}_\text{light}$).
 The latent confounders do not have such physical interpretation, but still affect the observation images indirectly.
 What we aim to estimate are the causal edges colored by red in \textsf{\textbf{a}}, connecting the image-factors between groups, which are observable as high-dimensional images.
% We especially colored the causal edges only between image-factors, between group-1 and group-2, which are what we want to estimate.
% by red for group-1 $\rightarrow$ group-2, and blue for group-1 $\leftarrow$ group-2.
 G-CaRL only identifies the inter-group-parts of the adjacency matrix, and thus the intra-group graphs are left unknown. The causal graphs related to the latent confounders are also left unknown since the latent confounders are not observable. }
 \label{fig:sim2_image}
\end{figure*}

%%%%%%%%%%%%%%%%%%%%%%%%%%%%%%%%%%%%%%%%%%%%%%%%%%%%%%%%%%%%%%%%%%%%%%%%%%%%%%%
%%%%%%%%%%%%%%%%%%%%%%%%%%%%%%%%%%%%%%%%%%%%%%%%%%%%%%%%%%%%%%%%%%%%%%%%%%%%%%%

\end{document}